\newif\ifdraft
\newtheorem{theorem}{Theorem}
\newtheorem{definition}[theorem]{Definition}
\newtheorem{corollary}[theorem]{Corollary}
\newtheorem{lemma}[theorem]{Lemma}
\newtheorem{prop}{Proposition}
\newtheorem{assume}{Assumption}
\title{Collaborative Multi-Agent Heterogeneous Multi-Armed Bandits}
\author[1]{Ronshee Chawla\thanks{Corresponding author, email: ronsheechawla@utexas.edu\vspace{-2em}}}
\author[1,2]{Daniel Vial}
\author[1]{Sanjay Shakkottai}
\author[2]{R. Srikant}
\affil[1]{University of Texas at Austin}
\affil[2]{University of Illinois at Urbana-Champaign\vspace{-3em}}
\date{}
\begin{document}
\maketitle
\begin{abstract}
The study of collaborative multi-agent bandits has attracted significant attention recently.
In light of this, we initiate the study of a new collaborative setting, consisting of $N$ agents such that each agent is learning one of $M$ stochastic multi-armed bandits to minimize their group cumulative regret.
We develop decentralized algorithms which facilitate collaboration between the agents under two scenarios. 
We characterize the performance of these algorithms by deriving the per agent cumulative regret and group regret upper bounds.
We also prove lower bounds for the group regret in this setting, which demonstrates the near-optimal behavior of the proposed algorithms.
\end{abstract}

\section{Introduction}
The multi-armed bandit (MAB) problem is a paradigm for sequential decision-making under uncertainty, which involves allocating a resource to an action, in order to obtain a reward.
MABs address the tradeoff between exploration and exploitation while making sequential decisions.
Owing to their utility in large-scale distributed systems (such as information retrieval \cite{yue2009interactively}, advertising \cite{chakrabarti2008mortal}, etc.), an extensive study has been conducted on multi-agent versions of the classical MAB in the last few years.
In multi-agent MABs, there are multiple agents learning a bandit and communicating over a network.
The goal is to design communication strategies which allow efficient exploration of arms across agents, so that they can perform better than single agent MAB algorithms.\\

There exist many versions of multi-agent MABs in the literature (see Section \ref{subsec:relatedwork} for an overview). 
We propose a new collaborative setting where each of the $N$ agents is learning one of $M$ stochastic MABs (where each of the bandits have $K$ arms and $M < N$) to minimize the group cumulative regret, i.e., the sum of individual cumulative regrets of all the agents.
We assume that every bandit $m \in [M]$ is learnt by $N_{m} = \Theta(\frac{N}{M})$ agents. %agents, such that $\sum_{m \in [M]} N_{m} = N$ and $c_1 \frac{N}{M} \leq N_{m} \leq c_2 \frac{N}{M}$ for all $m \in [M]$, where $\frac{M}{N} \leq c_1 \leq c_2 \leq \frac{M}{2}$ are known absolute constants.
%We assume that every bandit $m \in [M]$ is learnt by an equal number of agents, i.e., there are $\frac{N}{M}$ agents learning the $m^{\mathrm{th}}$ bandit.\footnote{weaken this for lower bound?}
At time $t \in \mathbb{N}$, for all $m \in [M]$, any agent learning the $m^{\mathrm{th}}$ bandit plays one of the $K$ arms and receives a stochastic reward, independent of everything else (including the other agents learning the $m^{\mathrm{th}}$ bandit pulling the same arm).
The network among the agents is denoted by a $N \times N$ gossip matrix $G$ (fixed and unknown to the agents), where every $n^{\mathrm{th}}$ row of $G$ ($n \in [N]$) is a probability mass function over the set $[N]$.
We investigate our setting under two scenarios: (a) \emph{context unaware}, in which no agent is aware of the other agents learning the same bandit, and (b) \emph{partially context aware}, in which every agent is aware of $r-1$ other agents learning the same bandit.
In the \emph{context unaware} scenario, in addition to the arm pulls conducted at each time, each agent can choose to pull information from another agent who is chosen at random based on the gossip matrix $G$.
In the \emph{partially context aware} scenario, every agent can also choose to exchange messages with the $r-1$ agents that they know are learning the same bandit, in addition to the arm pulls and the information pulls.
Agents behave in a decentralized manner, i.e., the arm pulls, information pulls and messages exchanged are only dependent on their past actions, obtained rewards and the messages received from other agents.\\

%we are particularly interested in the line of work \cite{sankararaman2019social, aistats2020gossip, newton2022asymptotic, vial2021robust, vial2022robust}, which involves multiple agents learning the same $K$-armed MAB and communicating sub-linear number of times (during the entire time horizon) through bit-limited pairwise gossip style communications to minimize individual cumulative regret.

{\bf Problem Motivation:} Our setting finds its utility in applications involving multiple agents, with possibly differing contexts.
% where each agent has an associated context from one of the many contexts. 
While agents with the same context have the same objective, they may not be aware of who else shares the same context. 
Analyzing the proposed setting is an attempt in figuring out the best way agents should utilize recommendations from others.
The quality of the recommendations received from other agents depends on whether those agents share the same context (and hence have the same objective).

For example, consider a social recommendation engine like Yelp, where the $N$ agents correspond to the users, each choosing one from among $K$ pizzerias (arms) in a particular location; this can be modeled as a $K$-armed MAB.
%% Assume that in a particular location, there is a group of $N$ users and $K$ pizzerias, where 
Suppose that each pizzeria serves multiple types of pizza such as Neapolitan, Chicago deep dish, New York style, Detroit style, etc., where each pizza style corresponds to a bandit. 
Each user has a preference for one of the pizza types and is looking for the corresponding best pizzeria. For instance, pizzeria 1 (arm 1) might excel in Chicago deep dish, while pizzeria 2 might be best for Detroit style. If a user (agent) does not know who else has similar tastes (i.e., has the same context), the user would not be able to determine whether a review recommendation is ``useful'', especially if the review only states that a pizzeria is excellent, but does not describe the type of pizza that the reviewer consumed. What we have here is a setting where an arm (pizzeria) is being recommended by some user, but for a specific user who is perusing the reviews, it is not clear if that recommendation is actually useful (e.g., the recommendation is actually for a Detroit style pizza, but the user is interested in Chicago deep dish).
%
% If a user does not know any other user who shares the same pizza type preference, they will go through the reviews of all the pizzerias written by other users (who may or may not share the same preference) in order to find the best pizzeria serving their preference of pizza type.
This example corresponds to the \emph{context unaware} scenario in our model.
Moreover, if a user knows another user or a group of users who share the same pizza type preference, they can exchange pizzeria recommendations while still scrolling through the reviews of all the pizzerias by themselves, which corresponds to the \emph{partially context aware} scenario in our model.

%% {\bf Problem Motivation:} Our setting finds its utility in applications involving multiple agents, where each agent has an associated context from one of the many contexts, such that agents with the same context have the same objective, but may or may not be aware of each other. For example, consider a social recommendation engine like Yelp, where the $N$ agents correspond to the users choosing one of $K$ restaurants in a particular location, which can be modeled as a $K$-armed MAB. Assume that in a particular location, there is a group of $N$ users and $K$ pizzerias, where each pizzeria serves multiple types (varieties) of pizza such as Neapolitan, Chicago deep dish, New York style, Detroit style, etc. Each user has a preference for one of the pizza types and looking for the corresponding best pizzeria. If a user doesn't know any other user who shares the same pizza type preference, they will go through the reviews of all the pizzerias written by other users (who may or may not share the same preference) in order to find the best pizzeria serving their preference of pizza type. This corresponds to the \emph{context unaware} scenario in our model. Moreover, if a user knows another user or a group of users who share the same pizza type preference, they can exchange pizzeria recommendations while still scrolling through the reviews of all the pizzerias by themselves, which corresponds to the \emph{partially context aware} scenario in our model. 

\subsection{Key Contributions}
{\bf Algorithms:} For the \emph{context unaware} scenario, we modify and subsequently analyze the GosInE algorithm (Algorithm \ref{alg:gosine}), which \cite{aistats2020gossip} proposed for the case of a single bandit ($M=1$).  For the \emph{partially context aware} scenario, we utilize the insights obtained from the analysis of Algorithm \ref{alg:gosine} and propose a new algorithm (Algorithm \ref{alg:gosinesideinfo}).
Algorithm \ref{alg:gosine} proceeds in phases, such that agents play from a subset of the $K$ arms within a phase, and recommend the arm ID of the best arm during information pulls at the end of a phase.
The received arm recommendations are used to update the active sets before the next phase begins.
In the \emph{partially context aware} scenario, the agents additionally (a) share the arm recommendation with the $r-1$ agents known to be learning the same bandit, (b) determine the $M$ most recent unique arm recommendations among their set of $r$ agents, and (c) distribute these recent recommendations among themselves to update their respective active sets.\\

{\bf Gossip despite multiple bandits:} Our main analytical contribution is to show that under a mild assumption (Assumption \ref{assume:stickyset}), agents running Algorithm \ref{alg:gosine} for the \emph{context unaware} scenario and running Algorithm \ref{alg:gosinesideinfo} for the \emph{partially context aware} scenario are able to identify the best arm of the bandit they are learning, and are able to spread it quickly to the other agents learning the same bandit.
Even though the outcome is the same as the setting in which agents are collaboratively learning a single bandit \cite{aistats2020gossip, newton2022asymptotic}, the spreading of the best arms for each of the $M$ bandits is extremely complicated because the $M$ spreading processes are intertwined and evolving simultaneously (since every agent interacts with the agents learning other bandits). 
Consequently, agents cannot trust the arm recommendations received from the agents learning other bandits.
Thus, unlike \cite{aistats2020gossip, newton2022asymptotic}, the spreading process of each of the $M$ arms isn't bounded by a standard rumor spreading process, hence requiring an involved analysis (detailed in Section \ref{pfsk:gosine}).\\

{\bf Upper bounds:} We show that the expected cumulative regret of an individual agent running Algorithms \ref{alg:gosine} and \ref{alg:gosinesideinfo} scales as $O\left(\frac{\frac{MK}{N}+M}{\Delta_m}\log T\right)$ (Theorem \ref{thm:gosinereg} and Corollary \ref{cor:gosinereg}) and $O\left(\frac{\frac{MK}{N}+\frac{M}{r}}{\Delta_m}\log T\right)$ (Theorem \ref{thm:gosinesideinforeg} and Corollary \ref{cor:gosinesideinforeg}) respectively, for large $T$, where $\Delta_{m}$ is the minimum arm gap of the $m^{\mathrm{th}}$ bandit.
It is evident that when $M << \min \{K, N\}$, agents learning the $m^{\mathrm{th}}$ bandit for all $m \in [M]$ experience regret reduction compared to the case of no collaborations, because of the distributed exploration of the $K$ arms among $N_m = \Theta(\frac{N}{M})$ agents.
Furthermore, the expected group regret incurred by agents running Algorithms \ref{alg:gosine} and \ref{alg:gosinesideinfo} scales as $O\left(\sum_{m \in [M]}\frac{K+N}{\Delta_m}\log T\right)$ (Corollary \ref{cor:gosinegpreg}) and $O\left(\sum_{m \in [M]}\frac{K+\frac{N}{r}}{\Delta_m}\log T\right)$ (Corollary \ref{cor:gosinesideinfogpreg}) respectively.\\

{\bf Lower bounds:} %Under the assumption that agents are aware of all the other agents learning the same bandit, 
We show in Theorem \ref{thm:lowerbound} that the expected group regret of our model scales as $\Omega\left(\sum_{m \in [M]}\sum_{k \neq k^{*}(m)}\frac{1}{\Delta_{m, k}}\log T\right)$ for large $T$, where $k^{*}(m)$ is the best arm and $\Delta_{m, k}$ is the arm gap of the $k^{\mathrm{th}}$ arm of the $m^{\mathrm{th}}$ bandit. This demonstrates that the first terms in our group regret upper bounds are near-optimal; we also show the second terms (which scale as $N \log T$) are unavoidable in general. See Section \ref{sec:lowerbound} for details.
%The expected group regret scaling of the Algorithms \ref{alg:gosine} and \ref{alg:gosinesideinfo}, and the lower bound on the expected group regret imply that they are near-optimal, despite agents being unaware of the other agents learning the same bandit in the \emph{context unaware} scenario and agents being aware of only $r-1$ other agents learning the same bandit in the \emph{partially context aware} scenario.

\subsection{Related Work}
\label{subsec:relatedwork}
%In the recent past, there has been an increased interest in the study of collaborative multi-agent MABs.
Our work falls broadly under the category of cooperative MABs.
To the best of our knowledge, the setting considered in this work hasn't been studied previously.
Our work is closest to the line of work in \cite{sankararaman2019social, aistats2020gossip, newton2022asymptotic, vial2021robust, vial2022robust}, which involves multiple agents learning the same $K$-armed MAB and communicating sub-linear number of times (during the entire time horizon) through bit-limited pairwise gossip style communications to minimize individual cumulative regret.
\cite{vial2021robust, vial2022robust} considers a multi-agent system with honest and malicious agents, where honest agents are learning the same $K$-armed MAB.
Their algorithms can be used in our setting, where an agent learning some bandit treats agents learning other bandits as malicious.
In our setting, an agent running those algorithms incur regret scaling as $O\left(\frac{1}{\Delta_m}\left(\frac{MK}{N} + N\left(1-\frac{1}{M}\right)\right) \log T\right)$ after $T$ time steps.
This regret scaling is problematic when $\frac{K}{N} = \Theta(1)$, as there is no benefit of collaboration: the regret scales as $O\left(\frac{K}{\Delta_m} \log T\right)$, which is akin to learning a $K$-armed MAB without communications.
In our work, we show that an agent using the GosInE Algorithm in \cite{aistats2020gossip} with a slight modification results in lesser regret in the \emph{context unaware} scenario and is further reduced in the \emph{partially context aware} scenario. % by distributing the exploration of the best arms of the $M$ bandits with $r-1$ other agents learning the same bandit.
Due to space constraints, we refer the reader to Appendix \ref{appdx:relatedwork} for other related work.

\begin{comment}
Apart from the works mentioned in the previous paragraph, there exist several other works in the space of collaborative multi-agent MABs with different models of communication among agents, some of which are shared here.
Agents exchange information in \cite{buccapatnam2015information, chakraborty2017coordinated} via broadcasting instead of pairwise gossip style communications. 
There is more frequent communication between the agents in \cite{kolla2018collaborative, lalitha2021bayesian, rubio}. 
In \cite{madhushani2021call} the number of communications is inversely proportional to the minimum arm gap so could be large. 
Arm mean estimates are exchanged instead of arm indices in \cite{landgren2016distributed}.

Another long line of work in multi-agent MABs comprises of collision-based models \cite{anandkumar2011distributed, avner2014concurrent, bistriz, boursier2019sic, dakdouk2021collaborative, kalathil2014decentralized, liu2010distributed, liu2020competing, mansour2017competing, rosenski2016multi}, where if multiple agents play the same arm, they receive a small or no reward.
However, our work assumes that if multiple agents learning the same bandit instance play the same arm, they receive independent rewards and thus, is different than the collision-based models.\todo{swap paragraph}
\end{comment}

\section{Problem Setup}
\label{sec:setup}
We consider a multi-agent multi-armed bandit (MAB) setting consisting of $N$ agents, where each agent attempts to learn one of $M$ stochastic MABs (where $M < N$), each containing $K$ arms.
Formally, for each $m \in [M]$, let $\mathcal{I}_{m} \subset [N]$ denote the set of agents learning the $m^{\mathrm{th}}$ bandit.
We assume that every bandit $m \in [M]$ is learnt by $N_{m}$ agents, such that $\sum_{m \in [M]} N_{m} = N$ and $c_1 \frac{N}{M} \leq N_{m} \leq c_2 \frac{N}{M}$ for all $m \in [M]$, where $\frac{M}{N} < c_1 \leq c_2 \leq \frac{M}{2}$ are known absolute constants.
%We further assume that $N$ is an integral multiple of $M$ for ease of exposition. %\todo{weaken this assumption and update correspondingly throughout}
For every bandit $m \in [M]$, the $K$ arms have unknown mean rewards, denoted by $\{\mu_{m, k}\}_{k \in [K]}$, where $\mu_{m, k} \in [0, 1]$ for all $k \in [K]$.
Let $k^{*}(m)$ denote the best arm for the $m^{\mathrm{th}}$ bandit, i.e., $k^{*}(m) = \arg \max_{k \in [K]} \mu_{m, k}$, and assume that $\mu_{m, k^{*}(m)} > \mu_{m, k}$ for all $k \neq k^{*}(m)$.
We define $\mathcal{B}$ to be the set of $M$ best arms, i.e., $\mathcal{B} = \{k^{*}(m)\}_{m \in [M]}$ and $\mathcal{B}^{(-m)} = \mathcal{B} \backslash \{k^{*}(m)\}$. 
Let $\Delta_{m, k} := \mu_{m, k^{*}(m)} - \mu_{m, k}$ denote the arm gaps for all $k \neq k^{*}(m)$ and $m \in [M]$.
The assumption on the arm means implies that $\Delta_{m, k} \in [0, 1]$ for all $k \neq k^{*}(m)$.
Let $\Delta_{m}$ denote the minimum arm gap of the $m^{\mathrm{th}}$ bandit, i.e., $\Delta_{m} = \min_{k \in [K] \backslash k^{*}(m)} \Delta_{m, k}$. \\

For all $m \in [M]$, each agent $i \in \mathcal{I}_{m}$ at time $t \in [T]$ pulls an arm $I_t^{(i)} \in [K]$ and receives a reward $X_t^{(i)}(I_t^{(i)})$, where $X_t^{(i)}(k) = \mu_{m,k} + \delta_{t}^{(i)}$ for each $k \in [K]$ and $\delta_{t}^{(i)}$ is $1$-subgaussian noise (independent of everything else). %\todo{what's the distribution of this reward? still might be unclear due to confusion above}
The network between the agents is represented by a $N \times N$ gossip matrix $G$ (fixed and unknown to the agents), where each row of the matrix $G(n, .)$ denotes a probability distribution for all $n \in [N]$. In this work, we consider that the agents are connected by a complete graph, i.e., for all $n \in [N]$, $G(n, i) = (N-1)^{-1}$ for all $i \neq n$. \\

The problem setup is investigated under two scenarios: 

(i) \emph {Context Unaware} - No agent $i \in [N]$ knows which other agents are learning the same bandit.

(ii) \emph{Partially Context Aware} - For all bandits $m \in [M]$, each agent $i$ learning the $m^{\mathrm{th}}$ bandit knows $r-1$ other agents (where $1 < r \leq \min_{m' \in [M]} N_{m'}$) who are also learning the $m^{\mathrm{th}}$ bandit, such that $N_m$ is an integral multiple of $r$ for all $m \in [M]$\footnote{since $N = \sum_{m \in [M]} N_m$, $N$ is an integral multiple of $r$\vspace{-2em}}.\\
%such that $N$ is an integral multiple of $r$.

In the \emph{context unaware} scenario, after pulling an arm, agents can choose to receive a message from another agent through an information pull.
In particular, if an agent $n \in [N]$ decides to pull information, it does so by contacting another agent $i \in [N]$ according to the probability distribution $G(n, .)$, independently of everything else. The agent $i$ who is contacted is then allowed to communicate $\lceil \log_{2}K \rceil$ number of bits during this information pull.
In the \emph{partially context aware} scenario, in addition to the information pulls allowed in the \emph{context unaware} scenario, each agent learning the $m^{\mathrm{th}}$ bandit can also exchange messages with the $r-1$ other agents who they know are also learning the $m^{\mathrm{th}}$ bandit.
As a result, agents in this scenario are allowed to communicate $r\lceil \log_{2}K \rceil$ number of bits during information pulls.\\
%Furthermore, we assume that $N$ is an integral multiple of $r$ in this scenario.
%\todo{previous two paragraphs might confuse readers ... it's not explicitly stated that each agent is assigned to an unknown cluster associated with a particular bandit instance. also the terminology switches between cluster and bandit and it's not clear if those are the same thing or not.}

Agents operate in a decentralized fashion, i.e., all the decisions that an agent makes can solely depend on their past actions, rewards and the messages received from other agents during information pulls. 
Moreover, decisions made by agents during the information pulling slots (i.e., what to communicate if asked for information) are allowed to be dependent on the arm pulls in those slots. \\
%We would like to leverage collaboration between the agents (given that there could be multiple agents learning the same bandit environment) \todo{what's an ``environment'' and how does it related to an ``instance'' and a ``cluster''? again, this is all obvious to us, but it might confuse reviewers}, in order to minimize the group cumulative regret, defined as the sum of cumulative regret across agents. \todo{should we define this mathematically since it's the main performance metric?}
%such that each agent minimizes their expected cumulative regret.

Under both the scenarios, we would like to leverage collaboration between the agents to minimize the expected group cumulative regret, i.e., the sum of the individual cumulative regrets for all the agents.
Mathematically, let $I_{t}^{(i)}$ denote the arm pulled by agent $i \in [N]$ at time $t \in [T]$ and $c(i)$ denote the index of the (unknown) bandit that agent $i$ is trying to learn, i.e., if $i \in \mathcal{I}_m$, $c(i)=m$.
Then, the cumulative regret of an agent $i \in [N]$ after playing for $T$ time steps is denoted by $R_{T}^{(i)} := \sum_{t=1}^{T} (\mu_{c(i),k^{*}(c(i))} - \mu_{c(i), I_{t}^{(i)}})$ and the expected group cumulative regret is given by $\mathbb{E}[\mathrm{Reg}(T)]$\footnote{the expectation is with respect to the rewards, communications and the algorithm\vspace{-2em}}, where $\mathrm{Reg}(T)= \sum_{i=1}^{N} R_{T}^{(i)}$.

\section{\emph{Context Unaware} Algorithm}
\label{sec:gosine}
%\todo{somewhere in the algorithm description, you could point to section 3.1 of my pomacs paper ... it contains diagrams/examples that appeased reviewers who were initially critical of the algorithm description}

For the \emph{context unaware} scenario, we consider the GosInE Algorithm in \cite{aistats2020gossip} with a slight modification (Algorithm \ref{alg:gosine}).
%We describe the key algorithmic principles behind the GosInE Algorithm, followed by its detailed description (Algorithm \ref{alg:gosine}) with a slight modification. 
%Furthermore, for the \emph{partially context aware} scenario, we provide a variant of the GosInE Algorithm in Algorithm \ref{alg:gosinesideinfo}, which builds upon Algorithm \ref{alg:gosine} and utilizes the knowledge that every agent is aware of $r-1$ other agents learning the same (unknown) bandit instance. 
Subsequently, we demonstrate that under a mild assumption (Assumption \ref{assume:stickyset}), agents running Algorithm \ref{alg:gosine} incur less regret compared to the case when they are learning their bandit without collaboration, despite being unaware of the other agents learning the same bandit.
%\todo{more precisely, they have less regret despite not knowing WHICH others are in their cluster, not WHETHER OR NOT others are in their cluster (if nobody is in their cluster, they won't have a regret reduction)}
Furthermore, we will show that this regret is near-optimal by stating a lower bound in Section \ref{sec:lowerbound}. 
%We also show in Section \ref{sec:gosinesideinfo} that Algorithm \ref{alg:gosinesideinfo} utilizes the extra knowledge of every agent being aware of some other agents learning the same bandit instance to improve upon the regret incurred by Algorithm \ref{alg:gosine}.

\subsection{Key Algorithmic Principles}
The GosInE Algorithm in \cite{aistats2020gossip} has the following key components:\\

{\bf{Phases -}} The algorithm proceeds in phases $j \in \mathbb{N}$, such that during a phase, agents play from a set $S_{j}^{(i)} \subset [K]$, also known as active set. At the end of a phase, agents exchange arm recommendations through pairwise gossip communication and update their active sets.\\

{\bf{Active Sets -}} The active set for any agent $i \in [N]$ is a combination of two sets: (i) the time-invariant sticky set, denoted by $\widehat{\mathcal{S}}^{(i)} \subset [K]$, (ii) the non-sticky set. As the name suggests, the sticky set $\widehat{\mathcal{S}}^{(i)} \subset S_{j}^{(i)}$ for all $j \in \mathbb{N}$, i.e., it is always present in the active set. %\todo{this sentence reads awkwardly: {\color{blue} fixed}} %The $K$ arms are distributed among agents through their sticky sets so that every arm is played by some agent, i.e., $\cup_{i \in [N]} \widehat{\mathcal{S}}^{(i)} = [K]$.\todo{this sentence might be too vague for anyone unfamiliar with this line of work} 
The non-sticky set contains two arms at all times, which are updated across the phases through arm recommendations received from other agents.\\

{\bf{Arm Recommendations -}} At the end of phase $j$, every agent contacts another agent at random based on the network's gossip matrix (which in this work is a complete graph), and the contacted agent sends the ``best'' arm in their active set. After receiving the recommendation, agents update their active set by adding the recommended arm to their sticky set and discarding the ``worst'' performing arm out of the two non-sticky arms. We provide an efficient modification to this update rule in Algorithm \ref{alg:gosine}, and provide the details about what it means to be the ``best'' and the ``worst'' performing arm in the next sub-section.

%The remainder of this section proceeds as follows: we first provide a detailed description of the GosInE Algorithm from \cite{aistats2020gossip} with a modified update rule (Algorithm \ref{alg:gosine}), which does not involve agents having the knowledge of other agents learning the same bandit instance, followed by Algorithm \ref{alg:gosinesideinfo} which utilizes the extra knowledge of every agent being aware of $R-1$ other agents learning the same bandit instance, where $R \in \left\{1, \cdots, \frac{N}{M}\right\}$.

\subsection{Algorithm Description}
We provide a detailed description of the GosInE Algorithm with an efficient modification to the active set updates at the end of a phase, with the pseudo-code in Algorithm \ref{alg:gosine}.\\

{\bf{Initialization:}} The algorithm is initialized with the following inputs - (i) the standard exploration parameter of the UCB Algorithm, denoted by $\alpha > 0$,  (ii) the parameter $\beta > 1$ which controls the length of the phases, where a phase $j$ runs from the (discrete) time instants $1+A_{j-1}, \cdots, A_{j}$ with $A_{j} := \lceil j^{\beta} \rceil$, and (iii) a sticky set $\widehat{\mathcal{S}}^{(i)}$ of cardinality $S$ for each agent $i$. Note that the phases grow longer as the algorithm progresses (since $A_j - A_{j-1} = \Theta ( j^{\beta-1} )$ and $\beta > 1$). Details regarding the size of the sticky set are provided in the remarks at the end of this sub-section. For the first phase ($j=1$), we initialize the active set of each agent to be their sticky set, i.e., $S_{1}^{(i)} = \widehat{\mathcal{S}}^{(i)}$.\\

{\bf{UCB within a phase:}} We denote $T_{k}^{(i)}(t)$ to be the number of times agent $i$ has played an arm $k$ up to and including time $t$, and $\widehat{\mu}_{k}^{(i)}(t)$ to be the empirical mean reward among those plays, 
%arm mean, which is the average of the rewards collected by agent $i$ by playing arm $k$ for $T_{k}^{(i)}(t)$ times, 
i.e., $\widehat{\mu}_{k}^{(i)}(t) = \frac{1}{T_{k}^{(i)}(t)} \sum_{s \leq t: I_{s}^{(i)} = k} X_{s}^{(i)}(I_s^{(i)})$ \footnote{$\mu_{k}^{(i)}(t) = 0$ if $T_{k}^{(i)}(t) = 0$\vspace{-2em}}. 
In phase $j$, every agent $i$ plays UCB Algorithm on their active set $S_{j}^{(i)}$, i.e., for $t \in \{1 + A_{j-1}, \cdots, A_{j}\}$, the chosen arm $I_{t}^{(i)} = \arg \max_{k \in S_{j}^{(i)}} \widehat{\mu}_{k}^{(i)}(t-1) + \sqrt{\frac{\alpha \log T}{T_{k}^{(i)}(t-1)}}$.\\

{\bf{Arm recommendation at the end of a phase:}} The arm recommendation received by agent $i$ when $t=A_{j}$ is denoted by $\mathcal{O}_{j}^{(i)} \in [K]$. During the information pull request at time $t=A_j$, every agent shares the ID of the most played arm in their active set during phase $j$, which is what we refer to as the ``best'' performing arm. Similarly, the ``worst'' performing arm in the active set during a phase refers to the non-sticky arm that was played the least number of times in that phase. The intuition behind sharing the most played arm during a phase is that for large time horizons, UCB chooses to play the best arm more than any other arm \cite{bubeck2011pure}. Therefore, if the true best arm is present in the active set, it will be recommended at the end of a phase as the algorithm progresses and the phases grow longer. \\

{\bf{Active set update for the next phase:}} We update the active set in a more efficient manner \cite{newton2022asymptotic} compared to the GosInE Algorithm in \cite{aistats2020gossip}. Specifically, GosInE uses the update $S_{j+1}^{(i)} = \widehat{\mathcal{S}}^{(i)} \cup \{U_{j}^{(i)}\} \cup \{\mathcal{O}_{j}^{(i)}\}$, where $U_{j}^{(i)}$ is the most played non-sticky arm in phase $j$, i.e., $U_{j}^{(i)} = \arg \max_{k \in S_{j}^{(i)} \backslash \widehat{\mathcal{S}}^{(i)}} T_{k}^{(i)}(A_j) - T_{k}^{(i)}(A_{j-1})$. In contrast, we use the update $S_{j+1}^{(i)} = \widehat{\mathcal{S}}^{(i)} \cup \{\widehat{\mathcal{O}}_{j}^{(i)}\} \cup \{\mathcal{O}_{j}^{(i)}\}$, where $\widehat{\mathcal{O}}_{j}^{(i)} = \arg \max_{k \in S_{j}^{(i)}} T_{k}^{(i)}(A_j) - T_{k}^{(i)}(A_{j-1})$ is the most played among \textit{all} arms (not just the non-sticky arms). As observed in \cite{newton2022asymptotic}, the latter update ensures that once the best arm spreads, the active set becomes $\widehat{\mathcal{S}}^{(i)} \cup \{k^{*}\}$ thereafter, where $k^*$ is the true best arm. This reduces the cardinality of the active set by up to two arms if ${k^{*}} \in \widehat{\mathcal{S}}^{(i)}$, subsequently reducing the regret in the single bandit case. As our analysis will show, a similar regret reduction is possible for our setting.

\begin{algorithm}[t]
   \caption{(at agent $i$)}
   \label{alg:gosine}
\begin{algorithmic}
   \STATE {\bfseries Input:} UCB Parameter $\alpha > 0$, phase parameter $\beta > 1$, sticky set $\widehat{S}^{(i)}$ with $|\widehat{S}^{(i)}| = S \leq K-2$
   \STATE Initialize $A_{j} = \lceil j^{\beta} \rceil$, $j \leftarrow 1$, $S_{1}^{(i)} \leftarrow \widehat{S}^{(i)}$
   \FOR{$t \in \mathbb{N}$} 
   \STATE Play $I_{t}^{(i)} = \arg \max_{k \in S_{j}^{(i)}} \widehat{\mu}_{k}^{(i)}(t-1) + \sqrt{\frac{\alpha \log T}{T_{k}^{(i)}(t-1)}}$ %(UCB over active set)
   \IF{$t==A_{j}$}  
   \STATE $\mathcal{O}_{j}^{(i)} \leftarrow$ GetRec$(i,j)$ \hfill(Algorithm \ref{alg:armrec})
   %\IF{$\mathcal{O}_{j}^{(i)} \notin S_{j}^{(i)}$}
   \STATE $\widehat{\mathcal{O}}_{j}^{(i)} \leftarrow \arg \max_{k \in S_{j}^{(i)}} T_{k}^{(i)}(A_j) - T_{k}^{(i)}(A_{j-1})$ %(most played arm)
   \STATE $S_{j+1}^{(i)} \leftarrow \widehat{\mathcal{S}}^{(i)} \cup \{\widehat{\mathcal{O}}_{j}^{(i)}\} \cup \{\mathcal{O}_{j}^{(i)}\}$ %(active set update)
   %\ELSE 
   %\STATE $S_{j+1}^{(i)} \leftarrow S_{j}^{(i)}$
   %\ENDIF
   \STATE $j \leftarrow j+1$
   \ENDIF
   \ENDFOR
\end{algorithmic}
\end{algorithm}

\begin{algorithm}[h]
   \caption{Arm Recommendation}
   \label{alg:armrec}
\begin{algorithmic}
   \STATE {\bfseries Input:} Agent $i \in [N]$, phase $j \in \mathbb{N}$
   \FUNCTION {GetRec$(i,j)$} 
   \STATE $n \sim G(i,.)$ (sample a neighbor)
   \STATE {\bf{return}} $\widehat{\mathcal{O}}_{j}^{(n)}$ (most played arm by agent $n$ in phase $j$)
   \ENDFUNCTION
\end{algorithmic}
\end{algorithm}

\subsection{Assumption on the Sticky Set}
\label{subsec:stickysetsize}
It is possible that during the initialization of Algorithm \ref{alg:gosine}, none of the agents learning a particular bandit have the best arm $k^{*}(m)$ in their sticky sets, i.e., $k^{*}(m) \notin \widehat{S}^{(i)}$ for all $i \in \mathcal{I}_m$ for some $m \in [M]$.
This will result in all agents learning that bandit to incur linear regret.
In order to avoid this situation, we will follow \cite{aistats2020gossip, vial2021robust,vial2022robust, newton2022asymptotic, sankararaman2019social} and make a mild assumption:
\begin{assume}
\label{assume:stickyset}
    For all $m \in [M]$, there exists an agent $i_{m}^{*} \in \mathcal{I}_{m}$ such that $k^{*}(m) \in \widehat{S}^{(i_{m}^{*})}$.
\end{assume}

{\bf{Remarks:}}

1. In fact, if $S=\left\lceil \frac{MK}{c_1 N} \log \frac{M}{\gamma} \right\rceil$ for some $\gamma \in (0, 1)$ and we construct $\widehat{S}^{(i)}$ for each $i$ by sampling $S$ arms independently and uniformly at random from $K$ arms, then Assumption \ref{assume:stickyset} holds with probability at least $1-\gamma$. We prove this claim as Proposition \ref{prop:bestarmstickyset} in Appendix \ref{appdx:stickysetsize}.
This choice of $S$, scaling as $\frac{MK}{N} = \frac{K}{\left(\frac{N}{M}\right)}$, implies that for every bandit $m \in [M]$, the $K$ arms are equally distributed across the set of $N_m = \Theta(\frac{N}{M})$ agents $\mathcal{I}_{m}$ with high probability, ensuring that every arm remains active for some agent learning that bandit.\\

%\todo{this assumes all clusters are of size $N/M$, which wasn't stated previously: {\color{blue} will state in the problem setup. the version of the problem setup you saw wasn't complete, will work on it tonight}}
%\todo{see previous comment about $N/M$-sized clusters. also, although it's obvious to us, the reviewer may not see that $MK/N = K/(N/M)$ implies $K$ arms are divided evenly among the $N/M$ agents. finally, why does $K/N = O(1)$ dictate a small sticky set?}
%2. We define a sticky set to be ``small'' if $S=\widetilde{O}\left(\frac{MK}{N}\right)$ and $\frac{K}{N}=O(1)$.\todo{see previous comment about $N/M$-sized clusters. also, although it's obvious to us, the reviewer may not see that $MK/N = K/(N/M)$ implies $K$ arms are divided evenly among the $N/M$ agents. finally, why does $K/N = O(1)$ dictate a small sticky set?}

2. We can alternatively define the set of arms to be those present among their sticky sets of agents to avoid Assumption \ref{assume:stickyset} altogether.
In such a scenario, agents learning a particular bandit will learn and spread the best arm among the arms in their sticky sets. %\todo{remark 3 in my pomacs paper expands on this, maybe point the reader there for more details?}
%However, our model allows for the situation where for some or all bandits, the true best arm might not be there with any agent learning those bandits.\todo{unclear what point is being made here}
%The use of Assumption \ref{assume:stickyset} allows us to generalize the setting in \cite{aistats2020gossip, newton2022asymptotic, sankararaman2019social} and demonstrate that collaboration between the agents who may or may not be learning the same bandit can still help them incur lesser regret, compared to them learning their bandit themselves.

\subsection{Regret Guarantee}
Theorem \ref{thm:gosinereg} characterizes the performance of Algorithm \ref{alg:gosine}.

\begin{theorem}
\label{thm:gosinereg}
Consider a system of $N \geq 2$ agents connected by a complete graph (for each $i \in [N]$, $G(i,n) = (N-1)^{-1} \forall n \neq i$) and learning one of the $M \geq 2$ bandits with $K \geq 2$ arms, satisfying Assumption \ref{assume:stickyset}. Let the UCB parameter $\alpha > 10$ and the phase parameter $\beta > 2$. Then, the regret incurred by an agent $i \in \mathcal{I}_{m}$ running Algorithm \ref{alg:gosine} for each $m \in [M]$ after $T$ time steps is bounded by:
\begin{equation*}
    \mathbb{E}[R_{T}^{(i)}] \leq  \lceil (\tau^{*})^{\beta} \rceil + (K + g)\frac{\pi^2}{3} + g_{\mathrm{spr}} + \sum_{k \in \{\widehat{\mathcal{S}}^{(i)} \cup \mathcal{B}^{(-m)}\} \backslash \{k^{*}(m)\}}  \frac{4\alpha}{\Delta_{m, k}} \log T,  
\end{equation*}
where $\tau^{*} = 2\max\{2, \max_{m \in [M]}\tau_{m}^{*}\}$, $\tau_{m}^{*} = \inf \left\{j \in \mathbb{N}: \frac{A_{j} - A_{j-1}}{S+2} \geq 1 + \frac{4\alpha}{\Delta_{m}^2}\log A_{j}\right\}$, 

$g = \frac{N (2^{\beta}+1) 2^{\beta(\frac{\alpha}{2}-3)} (S+1)}{\frac{\alpha}{2}-3} {K \choose 2}$, $g_{\mathrm{spr}}$ scales as $O\left(M^{\beta+1} \left(\left(\log \frac{N}{M}\right)^{2} \log \left(\log \frac{N}{M}\right)\right)^{\beta}\right)$ and $O(.)$ only hides the absolute constants.
\end{theorem}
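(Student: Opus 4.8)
The plan is to decompose the per-agent regret $\mathbb{E}[R_T^{(i)}]$ of an agent $i \in \mathcal{I}_m$ into three temporally ordered contributions. (i) A burn-in contribution from the early phases $j < \tau^*$, during which the phases are too short for UCB to concentrate; here I would simply bound each step's regret by $1$, giving at most $A_{\tau^*} = \lceil (\tau^*)^\beta \rceil$. (ii) A spreading contribution from the phases after $\tau^*$ but before $k^*(m)$ has reached and stabilized in agent $i$'s active set; since $k^*(m) \notin S_j^{(i)}$ means agent $i$ cannot play it, I would again bound this by $1$ per step, so it equals the elapsed time until infection and will produce $g_{\mathrm{spr}}$. (iii) A steady-state contribution from the phases after $k^*(m)$ is permanently in $S_j^{(i)}$, producing the $\log T$ terms. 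Throughout, I would use the standard clean-event decomposition of UCB --- for each phase and each active suboptimal arm, separating the event that enough pulls have occurred and concentration holds from the complementary bad event --- and collect the summable bad-event probabilities into the $\frac{\pi^2}{3}$-weighted terms.

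For the steady-state contribution I would first show that infection is permanent: once $k^*(m) \in S_j^{(i)}$ and the phase is long enough (which holds for $j \ge \tau_m^*$ by the definition of $\tau_m^*$), UCB plays $k^*(m)$ more than any other active arm, so $k^*(m) = \widehat{\mathcal{O}}_j^{(i)}$ and the update $S_{j+1}^{(i)} = \widehat{\mathcal{S}}^{(i)} \cup \{\widehat{\mathcal{O}}_j^{(i)}\} \cup \{\mathcal{O}_j^{(i)}\}$ retains $k^*(m)$ regardless of the received recommendation $\mathcal{O}_j^{(i)}$. Consequently the only suboptimal arms agent $i$ can ever play in steady state are those in $\widehat{\mathcal{S}}^{(i)}$ together with whatever single stray recommendation $\mathcal{O}_j^{(i)}$ it receives; once all $M$ processes have stabilized, a recommendation from an agent learning some $m' \ne m$ is the best arm $k^*(m') \in \mathcal{B}^{(-m)}$, so the admissible suboptimal arms lie in $\{\widehat{\mathcal{S}}^{(i)} \cup \mathcal{B}^{(-m)}\} \setminus \{k^*(m)\}$. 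A standard UCB gap analysis over this set, using the warm-started counts $T_k^{(i)}$ and the confidence width $\sqrt{\alpha \log T / T_k^{(i)}}$, then bounds the plays of each such arm $k$ by $O(\alpha \Delta_{m,k}^{-2}\log T)$ and yields the $\sum_k \frac{4\alpha}{\Delta_{m,k}}\log T$ term, with residual failures absorbed into $K \cdot \frac{\pi^2}{3}$.

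The crux --- and the main obstacle --- is the spreading analysis governing contribution (ii) and producing $g_{\mathrm{spr}}$. I would track $\mathcal{A}_j^{(m)} \subseteq \mathcal{I}_m$, the set of agents learning bandit $m$ that have $k^*(m)$ stably in their active set by the end of phase $j$; Assumption \ref{assume:stickyset} guarantees it is nonempty initially, and the permanence argument above guarantees it is nondecreasing. The difficulty, absent in the single-bandit analyses of \cite{aistats2020gossip, newton2022asymptotic}, is that the $M$ infection processes are coupled: when an uninfected agent $i \in \mathcal{I}_m$ performs its end-of-phase information pull it contacts a uniformly random agent, which lands in $\mathcal{I}_m$ only with probability $\approx N_m / N = \Theta(1/M)$, and transmits $k^*(m)$ only if that agent is already in $\mathcal{A}_j^{(m)}$ and is currently recommending it; every other contact delivers an untrustworthy arm of $\mathcal{B}^{(-m)}$. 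The plan is to lower-bound, conditionally on $|\mathcal{A}_j^{(m)}|$ and on the clean events holding for the infected agents, the per-phase infection probability of a given uninfected agent by roughly $|\mathcal{A}_j^{(m)}|/N$, and then to couple the growth of $|\mathcal{A}_j^{(m)}|$ to a rumor-spreading process whose effective contact rate is deflated by the factor $\Theta(1/M)$. Such a process fills $\mathcal{I}_m$ within $O(M (\log \tfrac{N}{M})^2 \log\log \tfrac{N}{M})$ phases with high probability; converting this phase count to elapsed time through $A_j = \lceil j^\beta \rceil$ produces the stated $g_{\mathrm{spr}}$, while summing the clean-event failure probabilities across all phases, all $N$ contacted agents, all $\binom{K}{2}$ arm pairs, and the $S+1$ active arms against the geometrically decaying per-phase bounds (convergent since $\alpha > 10$) contributes the $g \cdot \frac{\pi^2}{3}$ summand. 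The two delicate points are (a) stickiness --- ensuring an infected agent never drops $k^*(m)$ upon receiving a misleading cross-bandit recommendation, which reduces to the most-played-arm argument above --- and (b) validating the coupling to a genuine rumor process despite the dependent, adversarial-looking recommendations flowing between bandits, which is exactly where this argument must depart from the standard rumor-spreading bounds.
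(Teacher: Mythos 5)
Your proposal follows essentially the same route as the paper's proof: the paper likewise decomposes regret at a random freeze time $\tau = \tau_{\mathrm{stab}} + \tau_{\mathrm{spr}}$ (your burn-in plus spreading phases, bounded by $1$ per step, giving $\lceil(\tau^*)^\beta\rceil + g\frac{\pi^2}{3} + g_{\mathrm{spr}}$), proves stability via the most-played-arm argument with summable error probabilities (its Lemma on $\chi_j^{(i)}$, matching your clean-event bookkeeping), runs the post-freeze UCB analysis over $\widehat{\mathcal{S}}^{(i)} \cup \mathcal{B}^{(-m)}$, and handles the intertwined spreading exactly as you sketch, by coupling each bandit's infection process to a fictitious rumor process on $\mathcal{I}_m$ whose contact rate is deflated by $\eta = \frac{N_m-1}{N-1} = \Theta(1/M)$ (then further coupled to a noiseless process to invoke known rumor-spreading bounds). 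Your two flagged delicate points, stickiness under cross-bandit recommendations and validity of the coupling, are precisely the steps the paper resolves via its stabilization time $\tau_{\mathrm{stab}}$ and its stochastic-domination construction, so the proposal is correct and aligned with the paper.
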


%\todo{the assumed lower bounds on $\alpha$ and $\beta$ seem more restrictive than prior papers (i think any $\beta >1$ was fine, and $\alpha$ could as low as $4$ or so), any reason for this?: {\color{blue} lower bound on $\alpha$ is higher because of 1-subGaussian assumption on rewards, previous works have Bernoulli (1/2-subGaussian) rewards, so it makes sense that $\alpha$ has doubled}}

%{\bf Remark} (Scaling of $\tau^*$): Proposition \ref{prop:taubound} in Appendix \ref{pf:gosinereg} implies that $\tau^* = O\left(\frac{S}{\Delta^{2}}\right)^{\frac{1}{\beta-2}}$, where $\Delta = \min_{m \in [M]} \Delta_{m}$. This is expected, because it will take longer for the best arm to be identified and spread for the bandit instance with the smallest arm gap.
{\bf Remarks:} 

{\bf 1. Scaling of $\tau^*$} - Proposition \ref{prop:taubound} in Appendix \ref{pf:gosinereg} implies that $\tau^* = O\left(\frac{S}{\Delta^{2}}\right)^{\frac{1}{\beta-2}}$, where $\Delta = \min_{m \in [M]} \Delta_{m}$. This is expected, because it takes longer for the best arm to be identified and spread for bandits with the smaller gaps.\\

{\bf 2. Regret scaling} - Essentially, Theorem \ref{thm:gosinereg} says that the regret of any agent $i \in \mathcal{I}_{m}$ for all $m \in [M]$ scales as $O\left(\frac{S+M}{\Delta_m}\log T\right)$ for $T$ large.\\

{\bf 3. Regret guarantee for arbitrary values of arm means} - The result in Theorem \ref{thm:gosinereg} can be easily extended for arm means not restricted to $[0,1]$, as assumed above.
%Assuming the arm means in $(0,1)$ makes it easier to state the results.\\
The only modification that occurs is the following: the terms $K+g$, $g_{\mathrm{spr}}$ and $\lceil (\tau^{*})^{\beta} \rceil$ will be multiplied by $\widetilde{\Delta}_{m} := \max_{k \in [K] \backslash {k^{*}(m)}} \Delta_{m, k}$. 
It is noteworthy that the modification only affects the second-order term in regret.\\
%doesn't change the insights we obtain from our results. 
%\todo{which insights? are you basically saying that generalizing from $(0,1)$ to arbitrary means only affects the second-order term?: {\color{blue} yes}}

{\bf 4. Benefit of collaboration} - We have the following corollary when the $K$ arms are equally distributed across all the agents learning the same bandit, i.e., when $S = \Theta\left(\frac{MK}{N}\right)$.

\begin{corollary}
\label{cor:gosinereg}
    When $S = \Theta\left(\frac{MK}{N}\right)$, the regret incurred by an agent $i \in \mathcal{I}_{m}$ for each $m \in [M]$ after $T$ time steps scales as $O\left(\frac{1}{\Delta_m}\left(\frac{MK}{N}+M\right) \log T\right)$.
\end{corollary}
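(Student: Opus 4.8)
The plan is to derive the corollary directly from the bound in Theorem \ref{thm:gosinereg} by isolating the dominant term as $T$ grows. Observe that among the four summands in the upper bound, only the final sum $\sum_{k} \frac{4\alpha}{\Delta_{m,k}} \log T$ grows with $T$; the terms $\lceil (\tau^*)^\beta \rceil$, $(K+g)\frac{\pi^2}{3}$, and $g_{\mathrm{spr}}$ are all independent of $T$ (they depend only on $N, M, K, S, \alpha, \beta$ and the gaps, which are fixed once the instance is fixed). Hence for $T$ large enough these constants are dominated by the logarithmic term, and it suffices to bound the sum. This matches Remark 2 following the theorem, which already records that the regret scales as $O\left(\frac{S+M}{\Delta_m}\log T\right)$.

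To bound the sum I would count the summands and control each one individually. The index set $\{\widehat{\mathcal{S}}^{(i)} \cup \mathcal{B}^{(-m)}\} \backslash \{k^*(m)\}$ has cardinality at most $|\widehat{\mathcal{S}}^{(i)}| + |\mathcal{B}^{(-m)}| = S + (M-1)$, since $\mathcal{B} = \{k^*(m')\}_{m' \in [M]}$ contains $M$ elements and $\mathcal{B}^{(-m)}$ deletes $k^*(m)$. For each remaining arm $k \neq k^*(m)$, the gap obeys $\Delta_{m,k} \geq \Delta_m$ by definition of $\Delta_m$ as the minimum gap of bandit $m$, so $\frac{4\alpha}{\Delta_{m,k}} \leq \frac{4\alpha}{\Delta_m}$. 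Therefore the sum is at most $(S + M - 1)\frac{4\alpha}{\Delta_m}\log T = O\left(\frac{S+M}{\Delta_m}\log T\right)$, with $\alpha$ treated as an absolute constant.

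Finally, I substitute the hypothesis $S = \Theta(MK/N)$, which gives $S + M = \Theta\!\left(\frac{MK}{N} + M\right)$ and hence the claimed scaling $O\left(\frac{1}{\Delta_m}\left(\frac{MK}{N} + M\right)\log T\right)$. There is no genuine obstacle here, as the corollary is a direct consequence of the theorem; the only point requiring a little care is confirming that the $T$-independent terms, especially $g$ and $g_{\mathrm{spr}}$ (which carry factors such as $N$ and $\binom{K}{2}$), truly do not interact with $T$. Since they are fixed once the problem instance is specified, they are absorbed into lower-order additive terms in the large-$T$ regime, leaving the logarithmic term to govern the asymptotic scaling.
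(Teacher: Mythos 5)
Your proposal is correct and follows exactly the route the paper intends: the paper offers no separate proof of the corollary, since it is immediate from Theorem \ref{thm:gosinereg} together with Remark 2 (regret scaling as $O\bigl(\frac{S+M}{\Delta_m}\log T\bigr)$), which is precisely what you establish by bounding the index set's cardinality by $S+M-1$, replacing each gap $\Delta_{m,k}$ by the minimum gap $\Delta_m$, noting the remaining terms are $T$-independent, and substituting $S = \Theta(MK/N)$.
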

Corollary \ref{cor:gosinereg} implies that when $S = \Theta\left(\frac{MK}{N}\right)$ and $M << \min \{K, N\}$, agents experience regret reduction compared to the case of no collaborations.\\
%Corollary \ref{cor:gosinereg} implies that when $S = \Theta\left(\frac{MK}{N}\right)$ and $M << \min \{K, N\}$, agents learning the $m^{\mathrm{th}}$ bandit for all $m \in [M]$ experience regret reduction compared to the case of no collaborations, because of distributed exploration of the $K$ arms among $N_m = \Theta(\frac{N}{M})$ agents.

{\bf 5. Group Regret} - Corollary \ref{cor:gosinegpreg} quantifies the performance of Algorithm \ref{alg:gosine} in terms of the group regret.

\begin{corollary}
\label{cor:gosinegpreg}
    For all bandits $m \in [M]$, when $\{\widehat{S}^{(i)}\}_{i \in \mathcal{I}_{m}}$ is a partition of the set of $K$ arms, i.e., $\widehat{S}^{(i_1)} \cap \widehat{S}^{(i_2)} = \phi$ for $i_1 \neq i_2 \in \mathcal{I}_{m}$ and $\cup_{i \in \mathcal{I}_{m}} \widehat{S}^{(i)} = [K]$, the group regret $\mathrm{Reg}(T)$ of the system playing Algorithm \ref{alg:gosine} satisfies
    \begin{multline*}
     \mathbb{E}[\mathrm{Reg}(T)] \leq \sum_{m \in [M]} \sum_{k \in [K] \backslash \{k^{*}(m)\}} \frac{4\alpha}{\Delta_{m, k}} \log T + c_2 \frac{N}{M} \sum_{m \in [M]} \sum_{k \in \mathcal{B}^{(-m)}} \frac{4\alpha}{\Delta_{m, k}} \log T \\
     + N\lceil (\tau^{*})^{\beta} \rceil + N(K + g)\frac{\pi^2}{3} + Ng_{\mathrm{spr}}. 
    \end{multline*}
\end{corollary}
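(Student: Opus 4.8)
The plan is to obtain the group regret bound directly from the per-agent bound in Theorem \ref{thm:gosinereg} by linearity of expectation, since $\mathbb{E}[\mathrm{Reg}(T)] = \sum_{i=1}^{N} \mathbb{E}[R_T^{(i)}]$. First I would note that the three additive terms $\lceil (\tau^*)^\beta \rceil$, $(K+g)\frac{\pi^2}{3}$, and $g_{\mathrm{spr}}$ in the per-agent bound depend only on global quantities and not on the identity $i$ of the agent or its bandit, so summing over the $N$ agents produces exactly the last three terms $N\lceil (\tau^*)^\beta \rceil + N(K+g)\frac{\pi^2}{3} + Ng_{\mathrm{spr}}$ appearing in the corollary.

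The remaining work is to control the logarithmic summation term. Grouping agents by the bandit they learn (using $c(i)=m$ for $i \in \mathcal{I}_m$), I would write it as $\sum_{m \in [M]} \sum_{i \in \mathcal{I}_m} \sum_{k \in \{\widehat{S}^{(i)} \cup \mathcal{B}^{(-m)}\} \backslash \{k^*(m)\}} \frac{4\alpha}{\Delta_{m,k}} \log T$. Since $\mathcal{B}^{(-m)}$ already excludes $k^*(m)$ by definition, I would upper bound the sum over the union by the sum over $\widehat{S}^{(i)} \backslash \{k^*(m)\}$ plus the sum over $\mathcal{B}^{(-m)}$; this only double-counts arms in the overlap $\widehat{S}^{(i)} \cap \mathcal{B}^{(-m)}$, which weakens the inequality in the correct (upper-bound) direction. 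This separates the logarithmic term into a sticky-set part and a $\mathcal{B}^{(-m)}$ part.

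For the sticky-set part, the key step is to invoke the partition hypothesis: because $\{\widehat{S}^{(i)}\}_{i \in \mathcal{I}_m}$ is a disjoint partition of $[K]$, every arm $k \in [K]$ lies in the sticky set of exactly one agent $i \in \mathcal{I}_m$, so $\sum_{i \in \mathcal{I}_m} \sum_{k \in \widehat{S}^{(i)} \backslash \{k^*(m)\}} \frac{4\alpha}{\Delta_{m,k}} \log T = \sum_{k \in [K] \backslash \{k^*(m)\}} \frac{4\alpha}{\Delta_{m,k}} \log T$; summing over $m \in [M]$ recovers the first term of the corollary exactly. For the $\mathcal{B}^{(-m)}$ part, the inner summand does not depend on $i$, so summing over $i \in \mathcal{I}_m$ simply multiplies by $N_m$, which I bound using $N_m \le c_2 \frac{N}{M}$ from the problem setup, yielding the second term. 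Combining the three pieces gives the stated bound.

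There is no genuine analytical obstacle here — the argument is essentially bookkeeping, and all the hard work is already done in Theorem \ref{thm:gosinereg}. The only point requiring care is the set-union split followed by the use of the partition property: this is precisely where the structure imposed on the sticky sets converts the per-agent sticky-set regret (which would otherwise be hard to aggregate) into a clean sum over all $K-1$ suboptimal arms of each bandit, while the $\mathcal{B}^{(-m)}$ contribution accumulates the unavoidable $\Theta(N/M)$ factor that reflects every agent being occasionally misled by recommendations of best arms from other bandits.
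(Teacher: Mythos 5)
Your proposal is correct and follows exactly the route the paper intends for this corollary: sum the per-agent bound of Theorem \ref{thm:gosinereg} over all $N$ agents, split the union $\{\widehat{S}^{(i)} \cup \mathcal{B}^{(-m)}\} \backslash \{k^{*}(m)\}$ into its two parts, collapse the sticky-set sums via the partition hypothesis, and bound $N_m \leq c_2 \frac{N}{M}$ for the $\mathcal{B}^{(-m)}$ contribution. As a minor bonus, you could note that the partition hypothesis automatically implies Assumption \ref{assume:stickyset} (since $\cup_{i \in \mathcal{I}_m} \widehat{S}^{(i)} = [K] \ni k^{*}(m)$), so Theorem \ref{thm:gosinereg} indeed applies.
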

%Essentially, Corollary \ref{cor:gosinegpreg} implies that agents running Algorithm \ref{alg:gosine} incur expected group regret scaling as $O\left(\sum_{m \in [M]}\frac{K+N}{\Delta_m}\log T\right)$ for large $T$.
Essentially, Corollary \ref{cor:gosinegpreg} implies expected group regret scales as $O\left(\sum_{m \in [M]}\frac{K+N}{\Delta_m}\log T\right)$ for large $T$.

\subsection{Proof Sketch (Theorem \ref{thm:gosinereg})}
\label{pfsk:gosine}
The proof of Theorem \ref{thm:gosinereg} is detailed in Appendix \ref{pf:gosinereg} and we highlight the key ideas here.
Akin to the scenario of learning the single bandit in \cite{aistats2020gossip}, we first show the existence of a random phase $\tau$, after which all the agents starting from phase $\tau$ contain the best arm of the bandit they are learning in their respective active sets and recommend it during information pulls (Claim \ref{prop:freeze}).
This allows us to decompose the expected cumulative regret incurred by an agent into two parts: the regret up to phase $\tau$ and the regret after phase $\tau$. %\todo{hard to absorb this sentence}
The regret after phase $\tau$ is the regret incurred by playing the UCB algorithm on the sticky set and the $M$ best arms, because agents recommend their respective best arms during information pulls post phase $\tau$.\\

%\todo{it seems like most of the novelty is after $\tau$, so maybe we should reorder and discuss that first, with a comment explaining why we discuss them out of (temporal) order}

The technical challenge lies in bounding the expected cumulative regret up to phase $\tau$.
In particular, we prove a generalization of the result in \cite{aistats2020gossip} that irrespective of the bandit an agent is learning, the probability of an agent not recommending their best arm and thus dropping it from their active set at the end of a phase is small and decreases as the phases progress, such that it doesn't happen infinitely often (Lemma \ref{lem:errorprob}).
This is a consequence of agents playing UCB Algorithm on their active sets during a phase and the fact that UCB chooses to play the best arm more often than any other arm for large time horizons \cite{bubeck2011pure}.
This implies the existence of a random phase (denoted by $\tau_{\mathrm{stab}}$), after which agents aware of their best arm (i.e., agents with the relevant best arm in their active set) will always recommend it moving forward.\\

Our analysis differs significantly from the analysis in \cite{aistats2020gossip} post phase $\tau_{\mathrm{stab}}$, where we characterize the time $\tau_{\mathrm{spr}, m}$ required for the best arm of the $m^{\mathrm{th}}$ bandit to spread via recommendations to the agents $\mathcal{I}_m$ learning that bandit. By definition of $\tau_{\mathrm{stab}}$, we know that if an agent $i_1 \in \mathcal{I}_{m}$ contacts another agent $i_2 \in \mathcal{I}_{m}$ who is aware of the true best arm $k^{*}(m)$ after phase $\tau_{\mathrm{stab}}$, then $i_1$ will become informed of the true best arm as well. This arm spreading process therefore resembles a rumor process, where one agent ($i_m^*$ in Assumption \ref{assume:stickyset}) initially knows the rumor (i.e., the best arm), and any agent who contacts someone aware of the rumor becomes informed itself.\\

%by the best arms of each of the $M$ bandits to spread via recommendations across their respective agents (denoted by $\tau_{\mathrm{spr}, m}$). 
%By Assumption \ref{assume:stickyset}, we know that there exists at least one agent per bandit which has its true best arm in their active set.
%The arm spreading process corresponds to a rumor spreading process in which for every bandit $m \in [M]$, if an agent $i_1 \in \mathcal{I}_{m}$ contacts another agent $i_2 \in \mathcal{I}_{m}$ who is aware of the true best arm, then $i_1$ will be informed of the true best arm as well.%\todo{awkwardly worded, change ``and suppose that $i_2$'' to ``who''?}
However, our rumor process is extremely complicated compared to the process for the single bandit case in \cite{aistats2020gossip}.
This is because we have $M$ intertwined rumor spreading processes evolving simultaneously: every agent can interact with agents learning a different bandit, and post phase $\tau_{\mathrm{stab}}$, agents recommend whichever of the $M$ rumors (i.e., best arms) is relevant to them.
Hence, none of these $M$ rumor spreading processes are standard rumor spreading processes (unlike \cite{aistats2020gossip}), so analyzing them directly is infeasible.\\

To tackle this issue, we disentangle the $M$ intertwined processes via a coupling argument. In particular, we define $M$ independent \textit{noisy} rumor processes and show via coupling that the spreading times of these processes upper bound the time of the actual arm spreading processes. The $m^{\mathrm{th}}$ of the noisy rumor processes, denoted by $\{\bar{\mathcal{R}}_{m, j}\}_{j=0}^{\infty}$, unfolds on the subgraph of agents $\mathcal{I}_m$ learning the $m^{\mathrm{th}}$ bandit and tracks the agents $\bar{\mathcal{R}}_{m, j}$ aware of the $m^{\mathrm{th}}$ rumor at phase $j$.
Initially, only $i_{m}^{*}$ is aware, i.e., $\bar{\mathcal{R}}_{m, 0} = \{i_{m}^{*}\}$ by Assumption \ref{assume:stickyset}.
In each phase $j \in \mathbb{N}$, each agent $i \in \mathcal{I}_{m}$ contacts another agent $ag \in \mathcal{I}_{m}$ uniformly at random.
If $ag \in \bar{\mathcal{R}}_{m, j-1}$ ($ag$ is aware of the rumor), then $i \in \bar{\mathcal{R}}_{m, j}$ ($i$ becomes aware as well) subject to Bernoulli$(\eta)$ noise, where $\eta = \frac{N_m-1}{N-1}$. %\todo{needs to be more clear that this is a fictitious process used only for analysis; otherwise, `` $i \in \mathcal{I}_{m}$ contacts another agent $ag \in \mathcal{I}_{m}$ uniformly at random'' sounds like $i$ knows the cluster $\mathcal{I}_m$}
Therefore, $i$ becomes aware with probability $|\bar{\mathcal{R}}_{m, j-1} \cap \mathcal{I}_{m}|\eta/N_m \leq |\bar{\mathcal{R}}_{m, j-1} \cap \mathcal{I}_{m}|/(N-1)$.
Observe that the right hand side of the inequality is equal the probability with which agent $i$ becomes aware of the best arm in the real process (since in the real process, $i$ contacts $ag \in [N] \setminus \{i\}$ uniformly at random), which allows us to upper bound the spreading time via coupling as discussed above. Thereafter, we further couple the noisy processes to a noiseless one as in \cite{vial2022robust}, then use an existing bound for the noiseless setting \cite{lattanzi}.

\section{\emph{Partially Context Aware} Algorithm}
\label{sec:gosinesideinfo}
From Corollary \ref{cor:gosinereg}, it can be noticed that Algorithm \ref{alg:gosine} incurs additional regret scaling of $O\left(\frac{M}{\Delta_m} \log T\right)$ after $T$ time steps.
This is because agents playing Algorithm \ref{alg:gosine} have the best arm in their active sets and recommend it during information pulls after a random phase $\tau$.
Hence, in the absence of knowledge about which other agents are learning the same bandit, any agent playing Algorithm \ref{alg:gosine} is unable to determine this information with certainty, due to the random nature of $\tau$.
One can think of ways in which agents can stop communicating with the agents not learning the same bandit, for example, the blocking approaches considered in \cite{vial2021robust, vial2022robust}.
These works considered agents learning a single bandit collaboratively in the presence of adversarial agents.
However, such blocking strategies incur worse regret: under the assumptions of Corollary \ref{cor:gosinereg}, both \cite{vial2021robust, vial2022robust} incur regret of $O\left(\frac{1}{\Delta_m}\left(\frac{MK}{N} + N\left(1-\frac{1}{M}\right)\right) \log T\right)$ for large $T$.
This regret scaling is problematic when $\frac{K}{N} = \Theta(1)$, as there is no benefit of collaboration: the regret scales as $O\left(\frac{K}{\Delta_m} \log T\right)$, which is akin to learning a single agent regret. \\

%\todo{since we're tight for space we should make this paragraph much smaller, it's basically just saying the $M-1$ superfluous best arms are divided among the $r-1$ known agents, correct?: {\color{blue} shortened it}}
Given that agents are collaborative in our setting in the sense that they divide the exploration of sub-optimal arms in addition to identifying their best arm, Algorithm \ref{alg:gosine} has a structure to it: post phase $\tau$, there are only $M$ rumors (corresponding to the $M$ best arms) flowing through the network.
If each agent is aware of $r-1$ other agents learning the same bandit, they can distribute the exploration of the received arm recommendations as follows: after receiving the arm recommendations, each group of $r$ agents learning the same bandit can select the $M$ most recent unique arm recommendations from all the recommendations they have received so far and divide them (almost) equally among themselves. %\todo{specify that almost accounts for the case $M/r \neq \lceil M / r \rceil$}, 
%if the $M$ most recent unique arm recommendations in this phase are different from the ones in the previous phase. 
%Note that dividing the $M$ most recent unique arm recommendations almost equally among $r$ agents accounts for the case $\frac{M}{r} \neq \lceil \frac{M}{r} \rceil$.
The intuition is that post phase $\tau$, given that there are only $M$ rumors flowing through the network, we know from the coupon collector problem that after a (finite) random number of phases post phase $\tau$, the $M$ most recent unique arm recommendations will be the $M$ best arms, and it will stay that way from then on.
This reduces the additional regret due to arm recommendations by a factor of $r$.\\

We use the intuition in the previous paragraph and propose Algorithm \ref{alg:gosinesideinfo}, which builds upon Algorithm \ref{alg:gosine} and uses the following extra input: for each $m \in [M]$ and $i \in \mathcal{I}_{m}$, let $f(i)$ satisfy: (i) $f(i) \subset \mathcal{I}_{m}$, (ii) $|f(i)| = r-1$, and (iii) if $n \in f(i)$, then $i \in f(n)$.
Thus, $f(i)$ consists of $r-1$ other agents learning the same bandit who are known to agent $i$.\\

Algorithm \ref{alg:gosinesideinfo} divides the $M$ most recent unique arm recommendations among $r$ agents in $\{i\} \cup f(i)$ using the subroutine DivideRec, described in Algorithm \ref{alg:uniquerecdist}.
DivideRec can be best understood for the case when $\frac{M}{r}$ is a (positive) integer.
For example, if $M=6$ and $r=3$, each agent in the set $\{i\} \cup f(i)$ will get $2$ arm IDs from $\mathrm{sortrec}(i,f(i),.)$.
Suppose that $i$ is the second smallest element in the sorted version of $\{i\} \cup f(i)$ ($pos(i)=2$), it will get the third and the fourth entries in $\mathrm{sortrec}(i,f(i),.)$.\\

It is important to note that the array $\mathrm{sortrec}(i,f(i),.)$ is identical for all $ag \in \{i\} \cup f(i)$.
This observation is crucial in dividing the $M$ most recent unique recommendations among the $r$ agents in $\{i\} \cup f(i)$, without violating the constraint on the number of communication bits per agent.\\
\\
\\

%\todo{the new steps in the formal algorithm definition above are quite vague -- especially the line ``$\widetilde{S}_{j}^{(ag)}$ are almost from the set of $M$ most unique the same size ($= \lceil\frac{M}{r}\rceil$)''}

\begin{algorithm}[t]
   \caption{(at agent $i$)}
   \label{alg:gosinesideinfo}
\begin{algorithmic}
   \STATE {\bfseries Input:} UCB Parameter $\alpha > 0$, phase parameter $\beta > 1$, sticky set $\widehat{S}^{(i)}$ with $|\widehat{S}^{(i)}| = S \leq K-2-\lceil\frac{M}{r}\rceil$, the set $f(i)$ of agents known to be learning the same bandit
   \STATE Initialize $A_{j} = \lceil j^{\beta} \rceil$, $j \leftarrow 1$, $S_{1}^{(i)} \leftarrow \widehat{S}^{(i)}$, $\mathrm{rec}(ag) = \{\}$ for all $ag \in i \cup f(i)$.
   \FOR{$t \in \mathbb{N}$} 
   \STATE Play $I_{t}^{(i)} = \arg \max_{k \in S_{j}^{(i)}} \widehat{\mu}_{k}^{(i)}(t-1) + \sqrt{\frac{\alpha \log T}{T_{k}^{(i)}(t-1)}}$ %\hfill(UCB over active set)
   \IF{$t==A_{j}$}  
   \STATE $\mathcal{O}_{j}^{(i)} \leftarrow$ GetRec$(i,j)$ \hfill(Algorithm \ref{alg:armrec})
    \STATE $\widehat{\mathcal{O}}_{j}^{(i)} \leftarrow \arg \max_{k \in S_{j}^{(i)}} T_{k}^{(i)}(A_j) - T_{k}^{(i)}(A_{j-1})$%\hfill(most played arm)
   \STATE $\mathrm{rec}(i) \leftarrow \mathrm{rec}(i) \cup \{(j, \mathcal{O}_{j}^{(i)})\}$ %\hfill(arm recommendations indexed by phase)
   %\STATE Share $\mathcal{O}_{j}^{(i)}$ with all $ag \in f(i)$ 
   \STATE Obtain $\mathcal{O}_{j}^{(ag)}$ from all $ag \in f(i)$ to maintain the set of arm recommendations $\mathrm{rec}(ag) \leftarrow \mathrm{rec}(ag) \cup \{(j, \mathcal{O}_{j}^{(ag)})\}$ for all $ag \in f(i)$
   \STATE $\mathrm{uniquerec}(i,f(i),j) \leftarrow$ $M$ most recent unique arm recommendations in $\bigcup_{ag \in \{\{i\} \cup f(i)\}} \mathrm{rec}(ag)$
   \IF{$\mathrm{uniquerec}(i,f(i),j) \neq \mathrm{uniquerec}(i,f(i),j-1)$}
   %\STATE $sortrec \leftarrow$ elements of $uniquerec(i,f(i),j))$ in the ascending order
   \STATE $\mathrm{sortrec}(i,f(i),j) \leftarrow$ elements of $\mathrm{uniquerec}(i,f(i),j)$ in the ascending order
   \STATE $\widetilde{S}_{j}^{(i)} \leftarrow$ DivideRec$(i,j,f(i),\mathrm{sortrec}(i,f(i),j)))$\hfill (Algorithm \ref{alg:uniquerecdist})
   %\STATE Create a partition $\{\widetilde{S}_{j}^{(ag)}\}_{ag \in \{\{i\} \cup f(i)\}}$ of $uniquerec(i,f(i),j)$, such that $\widetilde{S}_{j}^{(ag)}$ are almost from the set of $M$ most unique the same size ($= \lceil\frac{M}{r}\rceil$) and for all $ag \in \{\{i\} \cup f(i)\}$
  \STATE $S_{j+1}^{(i)} \leftarrow \widehat{\mathcal{S}}^{(i)} \cup \{\widehat{\mathcal{O}}_{j}^{(i)}\} \cup \{\mathcal{O}_{j}^{(i)}\} \cup \{\widetilde{S}_{j}^{(i)}\}$%\hfill(active set update)
  \ELSE
  \STATE $S_{j+1}^{(i)} \leftarrow S_{j}^{(i)}$
  \ENDIF
   \STATE $j \leftarrow j+1$
   \ENDIF
   \ENDFOR
\end{algorithmic}
\end{algorithm}

\begin{algorithm}[h]
   \caption{Dividing $M$ most recent unique arm recommendations}
   \label{alg:uniquerecdist}
\begin{algorithmic}
   \STATE {\bfseries Input:} Agent $i \in [N]$, phase $j \in \mathbb{N}$, sets $f(i)$ and $\mathrm{sortrec}(i,f(i),j))$
   \FUNCTION {DivideRec$(i,j,f(i),\mathrm{sortrec}(i,f(i),j)))$} 
   \STATE $ags \leftarrow$ elements of $\{i\} \cup f(i)$ in the ascending order
   \STATE $pos(i) \leftarrow$ position of $i$ in the array $ags$
   \STATE $\widetilde{S}_{j}^{(i)} \leftarrow$ elements of $\mathrm{sortrec}(i,f(i),j))$ in the positions $\Big{\{}\left((pos(i)-1)\left\lceil\frac{M}{r}\right\rceil\; \mathrm{mod}\; M\right) + 1, \cdots, $ 
   \STATE $\left(pos(i)\left\lceil\frac{M}{r}\right\rceil - 1 \;\mathrm{mod}\; M\right) + 1 \Big{\}}$
   \STATE {\bf{return}} $\widetilde{S}_{j}^{(i)}$
   \ENDFUNCTION
\end{algorithmic}
\end{algorithm}

%Some remarks about Algorithm \ref{alg:gosinesideinfo} are in order.

{\bf Remarks:}

{\bf 1. Constructing $\widetilde{S}_{j}^{(.)}$ when $|\mathrm{uniquerec}(.,f(.),j)|<M$ -} when enough phases haven't elapsed such that there are less than $M$ most recent unique arm recommendations, we can construct $\widetilde{S}_{j}^{(.)}$ with  $\lceil\frac{|\mathrm{uniquerec}(.,f(.),j)|}{r}\rceil$ elements.\\
%still go ahead with creating $\widetilde{S}_{j}^{(.)}$ and each $\widetilde{S}_{j}^{(.)}$ will have  $\lceil\frac{|\mathrm{uniquerec}(.,f(.),j)|}{r}\rceil$ elements.

{\bf 2. Satisfying the communication bit constraint of $r\lceil\log_{2}K\rceil$ bits -} Each agent $i \in \mathcal{I}_{m}$ uses $\lceil\log_{2}K\rceil$ bits to receive an arm recommendation via an information pull, and uses $\lceil\log_{2}K\rceil$ bits per agent to obtain the arm recommendations received by $r-1$ agents in $f(i)$.

\begin{comment}
{\bf Remarks:} \todo{these remarks are also a bit vague/confusing, maybe a diagram would help?}

{\bf 1. Partition sharing - } For each $i \in \mathcal{I}_{m}$, there is only one unique partition of $uniquerec(i,f(i),j)$ created and shared among agent $i$ and agents in $f(i)$, such that every agent gets a unique set from the partition.
This eliminates redundancy from creating multiple partitions and ensures that each of the $M$ most recent unique arms are explored by one of the $r$ agents.

{\bf 2. Size of the active set - } Instead of active set having at most $S+2$ arms in Algorithm \ref{alg:gosine}, the active sets have at most $S + 2 + \lceil \frac{M}{r}\rceil$ arms in Algorithm \ref{alg:gosinesideinfo}.

{\bf 3. Constructing the partition when $|uniquerec(.,f(.),j)|<M$ - } When enough phases haven't elapsed such that there are less than $M$ most unique arm recommendations, we can still go ahead with creating the partition and each $\widetilde{S}_{j}^{(.)}$ will have roughly $\frac{|uniquerec(.,f(.),j)|}{r}$ elements.
\end{comment}

\subsection{Regret Guarantee}
Theorem \ref{thm:gosinesideinforeg} characterizes the performance of Algorithm \ref{alg:gosinesideinfo}. Here, for any bandit $m \in [M]$, $k_{m,1}, k_{m,2}, \cdots, k_{m,K}$ denotes the order statistics of the arm means, i.e., $k_{m,1} = k^{*}(m)$ and $\mu_{m,k_{m,1}} > \mu_{m,k_{m,2}} \geq \cdots \geq \mu_{m, k_{m,K}}$.
\begin{theorem}
\label{thm:gosinesideinforeg}
%Consider a system of $N \geq 2$ agents connected by a complete graph (for each $i \in [N]$, $G(i,n) = (N-1)^{-1} \forall n \neq i$) and learning one of the $M \geq 2$ bandits with $K \geq 2$ arms, satisfying Assumption \ref{assume:stickyset}. Let the UCB parameter $\alpha > 10$ and the phase parameter $\beta > 2$. Then, the regret incurred by an agent $i \in \mathcal{I}_{m}$ running Algorithm \ref{alg:gosinesideinfo} for each $m \in [M]$ after $T$ time steps is bounded by:
Under the assumptions of Theorem \ref{thm:gosinereg}, the regret incurred by an agent $i \in \mathcal{I}_{m}$ running Algorithm \ref{alg:gosinesideinfo} for each $m \in [M]$ after $T$ time steps is bounded by:
\begin{multline*}
    \mathbb{E}[R_{T}^{(i)}] \leq  \lceil (j^{*})^{\beta} \rceil + (K + \widehat{g})\frac{\pi^2}{3} + \widehat{g}_{\mathrm{spr}} + \widehat{g}_{\mathrm{rec}} + \sum_{k \in \widehat{S}^{(i)} \backslash \{k^{*}(m)\}} \frac{4\alpha}{\Delta_{m, k}} \log T + \sum_{k \in \{k_{m,l}\}_{l=2}^{\lceil\frac{M}{r}\rceil+2}} \frac{4\alpha}{\Delta_{m, k}} \log T  
\end{multline*}
where  $j^{*} = 3\max\{2, \max_{m \in [M]}j_{m}^{*}\}$, 
$j_{m}^{*} = \inf \left\{j \in \mathbb{N}: \frac{A_{j} - A_{j-1}}{S+2+\lceil \frac{M}{r} \rceil} \geq 1 + \frac{4\alpha}{\Delta_{m}^2}\log A_{j}\right\}$, 

$\widehat{g}= \frac{N (3^{\beta}+1) 3^{\beta(\frac{\alpha}{2}-3)} (S+1+\lceil \frac{M}{r} \rceil)}{\frac{\alpha}{2}-3} {K \choose 2+\lceil \frac{M}{r} \rceil}$,  
$\widehat{g}_{\mathrm{rec}} = \frac{N}{r}\left(\lceil (3M)^{\beta} \rceil + 2 \left(\frac{3}{\left(\frac{c_1}{M}-\frac{1}{N}\right)r}\right)^{\beta} \frac{M}{\left(1-\frac{c_1}{M}\right)^{r}} \Gamma(\beta+1)\right)$, $\Gamma(z) = \int_{t=0}^{\infty} t^{z-1} e^{-t}\; \mathrm{d}t$ for $z>0$, %for any $z > 0$ denotes the Gamma function, 
$\widehat{g}_{\mathrm{spr}}$ scales as $O\left(M^{\beta+1} \left(\left(\log \frac{N}{M}\right)^{2} \log \left(\log \frac{N}{M}\right)\right)^{\beta}\right)$ and $O(.)$ only hides the absolute constants.
\end{theorem}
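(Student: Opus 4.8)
The plan is to follow the three-regime decomposition used for Theorem \ref{thm:gosinereg} and augment it with a coupon-collector argument that captures the effect of the \texttt{DivideRec} subroutine. As in that proof, I would first establish the analogue of Claim \ref{prop:freeze}: there is a random phase $\tau$ after which every agent $i$ holds the best arm $k^{*}(c(i))$ of its own bandit in its active set and recommends it during every subsequent information pull. Porting Lemma \ref{lem:errorprob} and the stabilization/spreading arguments requires only one structural change, namely that each active set now has cardinality at most $S+2+\lceil M/r\rceil$ rather than $S+2$. This inflates the per-phase UCB sampling requirement, which is exactly why $j_m^{*}$ carries the denominator $S+2+\lceil M/r\rceil$, why the dropped-best-arm constant becomes $\widehat g$ (with $\binom{K}{2+\lceil M/r\rceil}$ counting the possible active-set configurations), and why the spreading contribution $\widehat g_{\mathrm{spr}}$ has the same scaling as $g_{\mathrm{spr}}$, since the underlying noisy rumor process on $\mathcal{I}_m$ is unchanged. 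The factor $3$ in $j^{*}=3\max\{2,\max_m j_m^{*}\}$ (versus $2$ in $\tau^{*}$) absorbs the extra randomness introduced by the collector below.

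The genuinely new ingredient is the coupon-collector analysis yielding $\widehat g_{\mathrm{rec}}$. After phase $\tau$ only the $M$ best arms $\mathcal{B}=\{k^{*}(m')\}_{m'\in[M]}$ are ever recommended, so for a group $\{i\}\cup f(i)$ the set $\mathrm{uniquerec}(i,f(i),\cdot)$ can equal $\mathcal{B}$ only once every best arm has been recommended to the group at least once past $\tau$; and once it does, it is locked, since no arm outside $\mathcal{B}$ can re-enter the recent-unique window and the update then freezes via $S_{j+1}^{(i)}=S_{j}^{(i)}$. I would therefore define $\tau_{\mathrm{rec}}$ as the first phase past $\tau$ at which every group has collected all of $\mathcal{B}$, and bound it as a coupon-collector problem: in each phase the $r$ agents of a group draw $r$ independent recommendations, and a fixed best arm $k^{*}(m')$ is received by any given agent with probability at least of order $c_1/M$ (a lower bound on the fraction of the network recommending $m'$, after the self-contact correction $-1/N$). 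Hence the per-phase probability that a group misses a fixed arm decays like $(1-c_1/M)^{r}$, while the completion rate scales with $(c_1/M-1/N)\,r$; a union bound over the $M$ arms and the $N/r$ groups controls $\Pr(\tau_{\mathrm{rec}}>j)$. Converting this phase tail to regret through the lengths $A_j-A_{j-1}=\Theta(j^{\beta-1})$ and evaluating $\sum_j j^{\beta-1}(1-c_1/M)^{rj}\asymp \Gamma(\beta+1)/((c_1/M-1/N)r)^{\beta}$ produces the closed form $\widehat g_{\mathrm{rec}}$, with $\lceil(3M)^{\beta}\rceil$ arising as the deterministic baseline from the collector's expected completion phase.

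With $\tau$ and $\tau_{\mathrm{rec}}$ in hand I would split the horizon at $\max\{\tau,\tau_{\mathrm{rec}}\}$. The regret before this phase is bounded by $\lceil(j^{*})^{\beta}\rceil+(K+\widehat g)\tfrac{\pi^2}{3}+\widehat g_{\mathrm{spr}}+\widehat g_{\mathrm{rec}}$, the four terms absorbing, respectively, the time to the UCB-convergence phase $j^{*}$, dropped-best-arm events, rumor spreading, and coupon collection. After this phase every agent $i\in\mathcal{I}_m$ runs UCB on the fixed active set $\widehat S^{(i)}\cup\{k^{*}(m)\}$ together with the $\lceil M/r\rceil$ best arms assigned to it by \texttt{DivideRec} (the transient $\mathcal{O}_j^{(i)}$ being itself one of the $M$ best arms). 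Standard single-agent UCB gap bounds then give $\sum_{k\in\widehat S^{(i)}\setminus\{k^{*}(m)\}}\tfrac{4\alpha}{\Delta_{m,k}}\log T$ for the suboptimal sticky arms, while the at most $\lceil M/r\rceil+1$ assigned foreign best arms contribute at most the worst-case sum over the smallest-gap arms, namely $\sum_{k\in\{k_{m,l}\}_{l=2}^{\lceil M/r\rceil+2}}\tfrac{4\alpha}{\Delta_{m,k}}\log T$, since the order statistics lower-bound the gaps of any $\lceil M/r\rceil+1$ assigned arms. Summing the two regimes yields the stated bound.

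The step I expect to be the main obstacle is decoupling the collector from the spreading process: both are driven by the same gossip draws, so $\tau_{\mathrm{rec}}$ is a priori correlated with $\tau$, and the per-phase collection lower bound of order $c_1/M$ is valid only once all agents recommend their true best arms, i.e.\ past $\tau$. The clean resolution is to observe that $\tau$ is a stopping time with respect to the gossip filtration and that the contacts drawn in phases past $\tau$ are independent of $\mathcal{F}_\tau$ and i.i.d.\ uniform; conditioned on the value of $\tau$, the collector therefore restarts with fresh uniform draws, so the probabilities of order $c_1/M$ hold uniformly over every realization of $\tau$ and every group. Establishing this decoupling rigorously — rather than naively multiplying tail bounds for two dependent stopping times — is the crux; the remaining pieces are adaptations of the Theorem \ref{thm:gosinereg} analysis with bookkeeping for the enlarged active set.
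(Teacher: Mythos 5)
Your proposal reconstructs the paper's proof essentially step for step: the same decomposition $\tau = \tau_{\mathrm{stab}} + \tau_{\mathrm{spr}} + \tau_{\mathrm{rec}}$ (whence the factor $3$ in $j^{*}$), the same enlarged-active-set modification of Lemma \ref{lem:errorprob} (giving $\widehat{g}$ and the denominator $S+2+\lceil M/r\rceil$ in $j_m^{*}$), the unchanged rumor-spreading bound for $\widehat{g}_{\mathrm{spr}}$, the same coupon-collector computation for $\widehat{g}_{\mathrm{rec}}$ (per-phase miss probabilities $(1-\tfrac{c_1}{M})^{r}$ and $(1-\tfrac{c_1}{M}+\tfrac1N)^{r}$, Gamma-function evaluation, union over the $M$ arms, sum over the $N/r$ groups), and the same worst-case order-statistics bound for the post-freeze regret; these correspond to Propositions \ref{prop:regdecompsideinfo}, \ref{prop:regsideinfofreeze}, \ref{prop:taurecsideinfobound} and Lemma \ref{lem:errorprobsideinfo} in the paper.

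The one place you go beyond the paper is your proposed resolution of the dependence between the collector and the earlier random phases, and that resolution is wrong as stated: $\tau$ is \emph{not} a stopping time with respect to the gossip filtration. By definition, $\tau_{\mathrm{stab}}^{(i)} = \inf\{ j \geq j^{*}: \chi_{l}^{(i)} = 0 \ \forall\, l \geq j\}$, so the event $\{\tau_{\mathrm{stab}} \leq j\}$ depends on the error indicators of \emph{all} future phases, which in turn depend on future rewards and future gossip contacts (they determine future active sets); hence $\{\tau \leq j\}$ is not measurable with respect to any finite-horizon filtration, and no strong-Markov ``fresh draws after $\mathcal{F}_\tau$'' argument is available. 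For what it is worth, the paper does not supply such an argument either: Proposition \ref{prop:taurecsideinfobound} simply asserts that the $r\tau_{\mathrm{rec},z}$ recommendations received after phase $\tau_{\mathrm{stab}}+\tau_{\mathrm{spr}}$ are i.i.d.\ uniform and applies the union bound, implicitly treating the gossip draws as exogenous. If you want this step airtight, the clean fix is not a stopping-time argument but a union bound over the unknown starting phase: writing $C_z(s,x)$ for the event that group $z$ contacts every cluster during phases $s+1,\ldots,s+x$, one has $\mathbb{P}(\tau_{\mathrm{rec},z}>x) \leq \mathbb{P}(\tau_{\mathrm{stab}}+\tau_{\mathrm{spr}}>y) + \sum_{s \leq y} \mathbb{P}(\neg C_z(s,x))$, where each summand in the second term enjoys the geometric bound $M(1-\tfrac{c_1}{M}+\tfrac1N)^{rx}$ because, for \emph{fixed} $s$, the contacts in that window are i.i.d.\ and independent of everything else; choosing $y$ to grow polynomially with $x$ costs only a polynomial factor against a geometric tail, so the stated form and scaling of $\widehat{g}_{\mathrm{rec}}$ survive. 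Apart from this flawed justification of a step the paper itself leaves implicit, your argument is correct and matches the paper's.
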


{\bf Remarks:} 

{\bf 1. Scaling of $j^*$} - $j^{*}$ scales just like $\tau^{*}$ in Theorem \ref{thm:gosinereg}, except with $S$ replaced by $S+\frac{M}{r}$. Proposition \ref{prop:tausideinfobound} in Appendix \ref{pf:gosinesideinforeg} formalizes this scaling.\\

{\bf 2. Regret Scaling} - Essentially, Theorem \ref{thm:gosinesideinforeg} says that the regret of any agent $i \in \mathcal{I}_{m}$ for all $m \in [M]$ scales as $O\left(\frac{S+\frac{M}{r}}{\Delta_m}\log T\right)$ for large $T$.\\

{\bf 3. Benefit of collaboration} - We have the following corollary when the $K$ arms are equally distributed across all the agents learning the same bandit, i.e., when $S = \Theta\left(\frac{MK}{N}\right)$.
\begin{corollary}
\label{cor:gosinesideinforeg}
    When $S = \Theta\left(\frac{MK}{N}\right)$, the regret incurred by an agent $i \in \mathcal{I}_{m}$ for each $m \in [M]$ after $T$ time steps scales as $O\left(\frac{1}{\Delta_m}\left(\frac{MK}{N}+\frac{M}{r}\right) \log T\right)$.
\end{corollary}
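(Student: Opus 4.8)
The plan is to read the scaling off Theorem~\ref{thm:gosinesideinforeg} directly, by separating the six summands of the bound into those that grow with $T$ and those that are constant in $T$, and then simplifying under the hypothesis $S = \Theta(MK/N)$. The two summands carrying a $\log T$ factor are $\sum_{k \in \widehat{S}^{(i)} \setminus \{k^*(m)\}} \frac{4\alpha}{\Delta_{m,k}}\log T$ and $\sum_{k \in \{k_{m,l}\}_{l=2}^{\lceil M/r\rceil + 2}} \frac{4\alpha}{\Delta_{m,k}}\log T$; the remaining four, namely $\lceil (j^*)^\beta\rceil$, $(K + \widehat{g})\frac{\pi^2}{3}$, $\widehat{g}_{\mathrm{spr}}$ and $\widehat{g}_{\mathrm{rec}}$, do not involve $T$. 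Since the corollary asks only for the scaling as $T$ grows, the first step is to argue that these four terms are $O(1)$ in $T$ and hence lower-order.

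For the $\log T$ terms I would use $\Delta_{m,k} \geq \Delta_m$ for every $k \neq k^*(m)$, which bounds each coefficient $\frac{4\alpha}{\Delta_{m,k}}$ by $\frac{4\alpha}{\Delta_m}$, and then simply count the summands. The first sum ranges over $\widehat{S}^{(i)} \setminus \{k^*(m)\}$, which has at most $S$ elements, so it is at most $\frac{4\alpha S}{\Delta_m}\log T = O\left(\frac{1}{\Delta_m}\frac{MK}{N}\log T\right)$ after substituting $S = \Theta(MK/N)$. The second sum ranges over the $\lceil M/r\rceil + 1$ arms with indices $l = 2, \ldots, \lceil M/r\rceil + 2$ in the order statistics, so it is at most $\frac{4\alpha(\lceil M/r\rceil + 1)}{\Delta_m}\log T = O\left(\frac{1}{\Delta_m}\frac{M}{r}\log T\right)$; here the additive constant in the count is harmless because $S = \Theta(MK/N)$ forces $\frac{MK}{N} = \Omega(1)$, so an extra $O\left(\frac{1}{\Delta_m}\log T\right)$ is absorbed into the first bound. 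Adding the two contributions yields the claimed $O\left(\frac{1}{\Delta_m}\left(\frac{MK}{N} + \frac{M}{r}\right)\log T\right)$.

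The only point requiring genuine care is the first step: verifying that the four non-logarithmic terms really are $T$-independent. Here $\lceil (j^*)^\beta\rceil$ depends on $j^*$, which as noted in the remarks following Theorem~\ref{thm:gosinesideinforeg} (and formalized in Proposition~\ref{prop:tausideinfobound}) is governed by the phase lengths $A_j = \lceil j^\beta\rceil$ and the gaps $\Delta_m$ rather than by $T$; likewise $\widehat{g}$, $\widehat{g}_{\mathrm{spr}}$ and $\widehat{g}_{\mathrm{rec}}$ are explicit functions of $N, M, K, r, \alpha, \beta$ and $c_1$ alone, with no $T$ dependence (in particular $\widehat{g}_{\mathrm{rec}}$, despite its bulky form, contains no $T$). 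I therefore expect no real obstacle: each of the four is a fixed constant for fixed problem parameters, so dividing the full bound of Theorem~\ref{thm:gosinesideinforeg} by $\log T$ and letting $T \to \infty$ leaves only the two sums analyzed above, giving the stated scaling.
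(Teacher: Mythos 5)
Your proposal is correct and follows essentially the same route as the paper, which states Corollary~\ref{cor:gosinesideinforeg} as a direct consequence of Theorem~\ref{thm:gosinesideinforeg}: the four non-logarithmic terms ($\lceil (j^{*})^{\beta} \rceil$, $(K+\widehat{g})\frac{\pi^2}{3}$, $\widehat{g}_{\mathrm{spr}}$, $\widehat{g}_{\mathrm{rec}}$) are $T$-independent constants, and the two $\log T$ sums are bounded via $\Delta_{m,k}\geq\Delta_m$ and a count of their summands ($\leq S$ and $\lceil M/r\rceil+1$ respectively), after which $S=\Theta(MK/N)$ gives the claim. Your explicit observation that $S=\Theta(MK/N)$ forces $\frac{MK}{N}=\Omega(1)$, so the additive $O\left(\frac{1}{\Delta_m}\log T\right)$ from the ceiling and the $+1$ in the second count is absorbed into the first term, is a detail the paper glosses over but is exactly the right justification.
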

It is clear from Theorem \ref{thm:gosinesideinforeg} and Corollary \ref{cor:gosinesideinforeg} that agents having knowledge of $r-1$ other agents learning the same bandits results in lesser regret, compared to the \emph{context unaware} scenario where agents aren't aware of the other agents are learning the same bandit.\\
%In fact, when $r=\frac{N}{M}$, i.e., when agents know all the other agents learning the same bandit instance, Algorithm \ref{alg:gosinesideinfo} achieves optimal regret scaling in the regime $M = O(\sqrt{N})$.

{\bf 4. Group Regret} - Corollary \ref{cor:gosinesideinfogpreg} quantifies the performance of Algorithm \ref{alg:gosine} in terms of the group regret.

\begin{corollary}
\label{cor:gosinesideinfogpreg}
    For all bandits $m \in [M]$, when $\{\widehat{S}^{(i)}\}_{i \in \mathcal{I}_{m}}$ is a partition of the set of $K$ arms, i.e., $\widehat{S}^{(i_1)} \cap \widehat{S}^{(i_2)} = \phi$ for $i_1 \neq i_2 \in \mathcal{I}_{m}$ and $\cup_{i \in \mathcal{I}_{m}} \widehat{S}^{(i)} = [K]$, the group regret $\mathrm{Reg}(T)$ of the system playing Algorithm \ref{alg:gosinesideinfo} satisfies
    \begin{multline*}
     \mathbb{E}[\mathrm{Reg}(T)] \leq \sum_{m \in [M]} \sum_{k \in [K] \backslash \{k^{*}(m)\}} \frac{4\alpha}{\Delta_{m, k}} \log T + c_2 \frac{N}{M} \sum_{m \in [M]} \sum_{k \in \{k_{m,l}\}_{l=2}^{\lceil\frac{M}{r}\rceil+2}} \frac{4\alpha}{\Delta_{m, k}} \log T \\
     + N\lceil (j^{*})^{\beta} \rceil + N(K + \widehat{g})\frac{\pi^2}{3} + N\widehat{g}_{\mathrm{spr}} + N\widehat{g}_{\mathrm{rec}}. 
    \end{multline*}
\end{corollary}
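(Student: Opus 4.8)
The plan is to obtain the group bound directly from the per-agent bound of Theorem~\ref{thm:gosinesideinforeg} by summing over all $N$ agents. Since $\mathrm{Reg}(T)=\sum_{i=1}^{N}R_T^{(i)}$, linearity of expectation gives $\mathbb{E}[\mathrm{Reg}(T)]=\sum_{i\in[N]}\mathbb{E}[R_T^{(i)}]=\sum_{m\in[M]}\sum_{i\in\mathcal{I}_m}\mathbb{E}[R_T^{(i)}]$, where I reindex the outer sum by grouping the agents according to the bandit each is learning. Into each inner summand I substitute the six-term bound from Theorem~\ref{thm:gosinesideinforeg}, and then I treat the three types of terms separately.

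First I handle the four quantities $\lceil(j^{*})^{\beta}\rceil$, $(K+\widehat{g})\tfrac{\pi^2}{3}$, $\widehat{g}_{\mathrm{spr}}$ and $\widehat{g}_{\mathrm{rec}}$, none of which depend on the identity of the agent $i$: they are functions only of the global parameters $N,M,K,r,\alpha,\beta$ (note in particular that $j^{*}$ is a maximum taken over all $m\in[M]$). Hence summing each of them over the $N$ agents simply multiplies it by $N$, producing the terms $N\lceil(j^{*})^{\beta}\rceil+N(K+\widehat{g})\tfrac{\pi^2}{3}+N\widehat{g}_{\mathrm{spr}}+N\widehat{g}_{\mathrm{rec}}$ on the last line of the claimed bound.

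Next I address the two data-dependent sums. For the sticky-set term $\sum_{k\in\widehat{S}^{(i)}\setminus\{k^{*}(m)\}}\tfrac{4\alpha}{\Delta_{m,k}}\log T$, I fix a bandit $m$ and sum over $i\in\mathcal{I}_m$, invoking the hypothesis that $\{\widehat{S}^{(i)}\}_{i\in\mathcal{I}_m}$ partitions $[K]$: the sets $\widehat{S}^{(i)}\setminus\{k^{*}(m)\}$ are pairwise disjoint with union $[K]\setminus\{k^{*}(m)\}$, so $\sum_{i\in\mathcal{I}_m}\sum_{k\in\widehat{S}^{(i)}\setminus\{k^{*}(m)\}}\tfrac{4\alpha}{\Delta_{m,k}}\log T=\sum_{k\in[K]\setminus\{k^{*}(m)\}}\tfrac{4\alpha}{\Delta_{m,k}}\log T$, and summing over $m$ gives the first term of the bound. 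For the recommendation term $\sum_{k\in\{k_{m,l}\}_{l=2}^{\lceil M/r\rceil+2}}\tfrac{4\alpha}{\Delta_{m,k}}\log T$, the summand depends on $i$ only through $m=c(i)$, so summing over $i\in\mathcal{I}_m$ multiplies it by $N_m$; bounding $N_m\le c_2\tfrac{N}{M}$ and then summing over $m$ yields the second term.

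The argument is essentially bookkeeping, and I expect the only point requiring care to be the sticky-set term: one must verify that excluding the best arm $k^{*}(m)$ from each $\widehat{S}^{(i)}$ before taking the disjoint union recovers exactly $[K]\setminus\{k^{*}(m)\}$, which holds because $k^{*}(m)$ lies in precisely one block of the partition. Everything else follows directly from linearity of expectation, the $i$-independence of the residual constants, and the coarse cluster-size bound $N_m\le c_2\tfrac{N}{M}$.
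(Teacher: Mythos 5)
Your proof is correct and is exactly the intended argument: the paper states Corollary \ref{cor:gosinesideinfogpreg} without a separate proof because it follows from Theorem \ref{thm:gosinesideinforeg} by precisely the summation you carry out, namely linearity of expectation, the agent-independence of the constants $\lceil(j^{*})^{\beta}\rceil$, $(K+\widehat{g})\tfrac{\pi^2}{3}$, $\widehat{g}_{\mathrm{spr}}$, $\widehat{g}_{\mathrm{rec}}$, the partition hypothesis collapsing the sticky-set sums to $[K]\setminus\{k^{*}(m)\}$, and the bound $N_m \leq c_2\tfrac{N}{M}$ on the recommendation term. Your care in noting that $k^{*}(m)$ lies in exactly one block of the partition is the right (and only) delicate point.
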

Essentially, Corollary \ref{cor:gosinesideinfogpreg} implies that agents running Algorithm \ref{alg:gosinesideinfo} incur expected group regret scaling as $O\left(\sum_{m \in [M]}\frac{K+\frac{N}{r}}{\Delta_m}\log T\right)$ for large $T$.

\subsection{Proof Sketch (Theorem \ref{thm:gosinesideinforeg})}
Similar to the regret analysis of Algorithm \ref{alg:gosine}, we first show the existence of (finite) random phases: (i) $\tau_{\mathrm{stab}}$, such that if agents have the best arm in their active sets, that will be their most played arm and recommended henceforth during information pulls, and (ii) additional $\tau_{\mathrm{spr}}$ phases post the phase $\tau_{\mathrm{stab}}$, after which all the agents have their best arms in their active sets.
The proof for showing that the random phases $\tau_{\mathrm{stab}}$ and $\tau_{\mathrm{spr}}$ are finite proceeds identically to the proof for Theorem \ref{thm:gosinereg}. \\

Moreover, for Algorithm \ref{alg:gosinesideinfo}, we also show the existence of additional (finite) random phases post the phase $\tau_{\mathrm{stab}} + \tau_{\mathrm{spr}}$, denoted by $\tau_{\mathrm{rec}}$, after which the $M$ most recent unique arm recommendations is equal to the set of $M$ best arms from then onwards. 
As a consequence of the active set updates in Algorithm \ref{alg:gosinesideinfo}, the active sets of agents remain unchanged in all the subsequent phases and freeze like the GosInE Algorithm in \cite{aistats2020gossip}, which helps improve the regret by distributing the exploration of $M$ best arms across $r-1$ other agents learning the same bandit.\\

{\bf Remark:} This freezing does not happen in Algorithm \ref{alg:gosine}, where the active sets are still time-varying post phase $\tau$ and the randomness of $\tau$ prevents agents from gathering information about others learning the same bandit with certainty.

%In Algorithm \ref{alg:gosine}, agents continue to update their active sets post phase $\tau$ using the recommendations received from agents %learning other bandits and are unable to distribute the exploration of the best arms of other bandits among the other agents learning the same bandit, because the randomness of $\tau$ prevents the agents to gather information about the other agents learning the same bandit.% with certainty.

\section{Lower Bounds}
\label{sec:lowerbound}
We state lower bounds for Gaussian noise with unit variance. As is standard, we restrict to \emph{uniformly efficient} policies, i.e., those that ensure small regret on any problem instance. In our case, instances are defined by the arm means $\mu = \{ \mu_{m,k} \}_{m \in [M] , k \in [K] }$ of the $M$ bandits and the partition $\mathcal{I} = \{ \mathcal{I}_m \}_{m \in [M]}$ of the $N$ agents to the $M$ bandits. Policies are called uniformly efficient if $\mathbb{E}[\mathrm{Reg}_{\mu,\mathcal{I}}(T)] = o ( T^\gamma )$ for any $\gamma \in (0,1)$ and any instance $(\mu,\mathcal{I})$.

\begin{theorem}
 \label{thm:lowerbound}
 For any uniformly efficient policy,
 \begin{align*}
     \liminf_{T \rightarrow \infty} \frac{\mathbb{E}[\mathrm{Reg}_{\mu,\mathcal{I}}(T)]}{ \log T } \geq \sum_{m \in [M]} \sum_{k \in [K] \backslash \{k^{*}(m)\}} \frac{2}{\Delta_{m,k}}.
 \end{align*}
\end{theorem}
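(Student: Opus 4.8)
The plan is to reduce the group regret bound to a family of per-(bandit, arm) lower bounds on the \emph{total} number of suboptimal pulls, and then establish each of these by a change-of-measure argument. The key structural fact is that rewards are independent across agents (even those in $\mathcal{I}_m$ pulling the same arm), so the entire cluster $\mathcal{I}_m$ behaves like a single entity collecting i.i.d.\ samples from each arm of bandit $m$. Writing $N_{m,k}(T) = \sum_{i \in \mathcal{I}_m} T_k^{(i)}(T)$ for the number of times arm $k$ of bandit $m$ is pulled by \emph{any} agent learning that bandit, the identity $R_T^{(i)} = \sum_{k \neq k^*(c(i))} \Delta_{c(i),k}\, T_k^{(i)}(T)$ gives
\[
  \mathbb{E}[\mathrm{Reg}_{\mu,\mathcal{I}}(T)] = \sum_{m \in [M]} \sum_{k \neq k^*(m)} \Delta_{m,k}\, \mathbb{E}_{\mu,\mathcal{I}}[N_{m,k}(T)].
\]
Hence it suffices to show that for every $m \in [M]$ and every $k \neq k^*(m)$ we have $\liminf_{T \to \infty} \mathbb{E}_{\mu,\mathcal{I}}[N_{m,k}(T)]/\log T \geq 2/\Delta_{m,k}^2$; multiplying by $\Delta_{m,k}$ and summing then yields the theorem.

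To prove the per-arm bound, fix $m$ and $k \neq k^*(m)$ and define an alternative instance $\mu'$ identical to $\mu$ except that $\mu'_{m,k} = \mu_{m,k^*(m)} + \epsilon$ for a small $\epsilon > 0$, keeping the partition $\mathcal{I}$ fixed. Under $\mu'$ arm $k$ is the unique best arm of bandit $m$, and every other arm $k'$ of that bandit has a strictly positive gap $\Delta_{m,k'} + \epsilon$ (with $\Delta_{m,k^*(m)} := 0$), so $\mu'$ is a valid instance on which uniform efficiency applies. The crux is the divergence decomposition
\[
  \mathrm{KL}\!\left(\mathbb{P}_{\mu,\mathcal{I}},\, \mathbb{P}_{\mu',\mathcal{I}}\right) = \mathbb{E}_{\mu,\mathcal{I}}[N_{m,k}(T)] \cdot \frac{(\mu'_{m,k}-\mu_{m,k})^2}{2},
\]
where $\mathbb{P}_{\mu,\mathcal{I}}$ is the law of the \emph{entire} interaction history (rewards, gossip contacts sampled from $G$, internal randomization, and all exchanged messages) up to time $T$, and the per-sample factor is the unit-variance Gaussian $\mathrm{KL}$. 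I would prove this via the chain rule for $\mathrm{KL}$: the two instances differ only in the reward law of arm $k$ of bandit $m$, which is resampled i.i.d.\ each time some agent in $\mathcal{I}_m$ plays it, while every other factor in the trajectory density is common to both measures.

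With the decomposition in hand, set $A = \{N_{m,k}(T) \geq N_m T / 2\}$ and note $\sum_{k'} N_{m,k'}(T) = N_m T$. By the Bretagnolle--Huber inequality, $\mathbb{P}_{\mu}(A) + \mathbb{P}_{\mu'}(A^c) \geq \frac{1}{2}\exp(-\mathrm{KL}(\mathbb{P}_{\mu,\mathcal{I}}, \mathbb{P}_{\mu',\mathcal{I}}))$. Uniform efficiency controls both terms: under $\mu$ arm $k$ is suboptimal, so $\mathbb{E}_\mu[N_{m,k}(T)] = o(T^\gamma)$ and Markov gives $\mathbb{P}_\mu(A) = o(T^{\gamma-1})$; under $\mu'$ all arms $k' \neq k$ of bandit $m$ have positive gap (at least $\epsilon$), so $\mathbb{E}_{\mu'}[\sum_{k' \neq k} N_{m,k'}(T)] = o(T^\gamma)$ and Markov gives $\mathbb{P}_{\mu'}(A^c) = o(T^{\gamma-1})$. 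Taking logarithms yields $\mathrm{KL}(\mathbb{P}_{\mu,\mathcal{I}}, \mathbb{P}_{\mu',\mathcal{I}}) \geq (1-o(1))\log T$; combined with the decomposition this gives $\liminf_T \mathbb{E}_\mu[N_{m,k}(T)]/\log T \geq 2/(\Delta_{m,k}+\epsilon)^2$, and letting $\epsilon \downarrow 0$ completes the reduction.

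The only genuinely nontrivial step is the divergence decomposition, and that is where the inter-agent communication creates a subtlety: the gossip messages propagate information about arm $k$ across agents, so one must verify that they contribute \emph{no} divergence beyond the reward samples themselves. The clean way is to fix the probability space of the full trajectory and check, factor by factor in the chain rule, that every transmitted message is a measurable function of quantities whose laws are identical under $\mu$ and $\mu'$ (namely the non-distinguishing rewards, the neighbor draws from $G$, and the instance-independent decision and encoding rules), so that no divergence is double-counted. Everything else is the standard asymptotic Lai--Robbins lower bound adapted to the aggregated count $N_{m,k}(T)$.
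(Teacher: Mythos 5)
Your proof is correct in substance, but it takes a genuinely different route from the paper. The paper proves Theorem \ref{thm:lowerbound} by \emph{reduction}: it maps our model onto the mixed-reward multi-agent setting of \cite{reda2022near} (choosing weights $w_{i,n} = \mathbf{1}(i \in c^{-1}(c(n)))/|c^{-1}(c(n))|$ so that mixed and observed rewards coincide), invokes Theorem 3 of that paper --- an optimization-based lower bound whose constraint set involves harmonic means of the allocation variables --- and then evaluates the optimization explicitly via the AM--HM inequality to extract the constant $\sum_m \sum_{k \neq k^*(m)} 2/\Delta_{m,k}$. You instead give a self-contained Lai--Robbins-style argument: aggregate the cluster counts $N_{m,k}(T)$, perturb a single entry of $\mu$ while keeping the partition fixed, establish the divergence decomposition over the full communication-inclusive trajectory, and finish with Bretagnolle--Huber plus Markov and uniform efficiency. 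Interestingly, your machinery is essentially what the paper itself deploys for Theorem \ref{thm:lowerbound2} (explicit trajectory density, divergence decomposition, Theorem 14.2 of \cite{lattimore2020bandit}), just applied to a reward perturbation rather than a partition perturbation. What the paper's route buys is brevity given the external result, at the cost of having to verify the reduction (in particular, that known weights and unrestricted communication in \cite{reda2022near} only make agents more powerful, so the lower bound transfers). What your route buys is independence from that citation, an explicit treatment of why gossip contributes no divergence, and, as a by-product, per-arm aggregate sample-complexity bounds $\liminf_T \mathbb{E}[N_{m,k}(T)]/\log T \geq 2/\Delta_{m,k}^2$, which are slightly more informative than the regret statement alone.

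Two caveats you should tighten. First, your justification of the divergence decomposition is phrased loosely: messages are \emph{not} measurable functions of quantities whose laws are identical under $\mu$ and $\mu'$ --- an agent in $\mathcal{I}_m$ may recommend arm $k$ precisely because of the distinguishing rewards it observed. The correct statement is that, conditional on the history, the action/contact/message kernels are instance-independent (they are determined by the fixed policy), so these factors cancel in the likelihood ratio, and the only non-cancelling factors are the reward densities at pulls of arm $k$ of bandit $m$ by agents in $\mathcal{I}_m$; your closing remark about ``no double counting'' suggests this is what you intend, but it should be said that way. Second, your alternative instance sets $\mu'_{m,k} = \mu_{m,k^*(m)} + \epsilon$, which requires the perturbed instance to lie in the class over which uniform efficiency is assumed; if that class is taken to be $[0,1]^{M \times K}$ (as in Section \ref{sec:setup}) and $\mu_{m,k^*(m)} = 1$, there is no room for $\epsilon$. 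This is a standard technicality --- it disappears if, as the paper's own reliance on \cite{reda2022near} implicitly requires, uniform efficiency is understood over all real-valued Gaussian mean instances --- but it is worth a sentence.
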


Theorem \ref{thm:lowerbound} is proved in Appendix \ref{pf:lowerbound}, by reducing our model to the setting of \cite{reda2022near}. The result shows that the first terms in Corollaries \ref{cor:gosinegpreg} and \ref{cor:gosinesideinfogpreg} are optimal up to constant factors. Hence, for large $T$, the suboptimality of our upper bounds is due to the second terms, which grow linearly in $N$ and logarithmically in $T$. Under a further assumption on $\mu$, we can show these dependencies are unavoidable. Again, see Appendix \ref{pf:lowerbound} for a proof.

\begin{theorem} \label{thm:lowerbound2}
Let $(\mu,\mathcal{I})$ be any instance satisfying $\mu_{m,k^*(m)} = \mu_{m+1,k^*(m)}$ and $k^*(m) \neq k^*(m+1)$ for each $m \in [M-1]$.\footnote{Such instances $\mu \in [0,1]^{M \times K}$ exist; for example, if $\mu_{m,k} = ( (m-1) + \mathbf{1}(m=k) ) / M$, where $\mathbf{1}$ is the indicator function.\vspace{-2em}} Then for any uniformly efficient policy in the context unaware scenario,
\begin{equation*}
\liminf_{T \rightarrow \infty} \frac{\mathbb{E}[\mathrm{Reg}_{\mu,\mathcal{I}}(T)]}{ \log T } \geq 2 \sum_{m = 1}^{M-1} | \mathcal{I}_m | \Delta_m \geq N \Delta .
\end{equation*}
\end{theorem}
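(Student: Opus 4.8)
The plan is to establish the per-agent bound $\liminf_{T\to\infty}\mathbb{E}_{\mu,\mathcal{I}}[R_T^{(i)}]/\log T\geq 2\Delta_m$ for every $m\in[M-1]$ and every $i\in\mathcal{I}_m$, and then sum over agents. The mechanism is a change-of-measure argument that is only available in the \emph{context unaware} scenario: since no agent knows the partition $\mathcal{I}$, the decentralized policy is oblivious to it, so we may compare the true instance $(\mu,\mathcal{I})$ against the alternative $(\mu,\mathcal{I}')$ obtained by \emph{reassigning the single agent $i$ from cluster $m$ to cluster $m+1$} (leaving $\mu$ and every other assignment unchanged, and invoking uniform efficiency on this reassigned instance). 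Under $(\mu,\mathcal{I}')$ agent $i$ learns bandit $m+1$, for which $k^*(m+1)$ is optimal; under $(\mu,\mathcal{I})$ the same arm is suboptimal. The heart of the argument is that agent $i$ cannot tell these two worlds apart cheaply, because the assumption makes its optimal-arm observations statistically identical in the two worlds.

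Making this precise is the first and main step. Writing $\mathbb{P}_{\mu,\mathcal{I}}$ for the law of the entire system trajectory (all arm pulls, rewards, and exchanged arm-IDs) and $d(a,b)=(a-b)^2/2$ for the KL divergence between $\mathcal{N}(a,1)$ and $\mathcal{N}(b,1)$, I claim the divergence decomposition
\begin{equation*}
\mathrm{KL}\bigl(\mathbb{P}_{\mu,\mathcal{I}}\,\|\,\mathbb{P}_{\mu,\mathcal{I}'}\bigr)=\sum_{k\neq k^*(m)}\mathbb{E}_{\mu,\mathcal{I}}\bigl[T_k^{(i)}(T)\bigr]\,d\bigl(\mu_{m,k},\mu_{m+1,k}\bigr).
\end{equation*}
The justification, which I expect to be the delicate part, is a chain-rule argument over the trajectory: the two instances differ only in the reward law of agent $i$, since every other agent's reward law is instance-independent and all contact choices (governed by the fixed $G$) and all communicated arm-IDs are deterministic functions of observed histories; hence, even though other agents may alter their behavior after pulling information from $i$, their conditional laws given the past coincide under the two instances and cancel in the likelihood ratio. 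Crucially, the assumption $\mu_{m,k^*(m)}=\mu_{m+1,k^*(m)}$ forces the $k=k^*(m)$ term to vanish, which is exactly why the heavily played optimal arm (played $\approx T$ times) contributes nothing, leaving only suboptimal-arm terms.

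Next I would apply the Bretagnolle--Huber inequality with the event $A=\{T_{k^*(m+1)}^{(i)}(T)\geq T/2\}$. Under $(\mu,\mathcal{I})$ the arm $k^*(m+1)$ is suboptimal, so uniform efficiency plus Markov's inequality gives $\mathbb{P}_{\mu,\mathcal{I}}(A)=o(T^{\gamma-1})$; under $(\mu,\mathcal{I}')$ it is optimal, so $\mathbb{P}_{\mu,\mathcal{I}'}(A^c)=o(T^{\gamma-1})$. Bretagnolle--Huber then yields $\mathrm{KL}(\mathbb{P}_{\mu,\mathcal{I}}\|\mathbb{P}_{\mu,\mathcal{I}'})\geq(1-\gamma)\log T-O(1)$, and letting $\gamma\downarrow 0$ gives $\liminf_{T\to\infty}\mathrm{KL}/\log T\geq 1$.

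Finally I would convert the KL bound into regret. Because all means lie in $[0,1]$ we have $|\mu_{m,k}-\mu_{m+1,k}|\leq 1$, and because $\Delta_{m,k}\geq\Delta_m$ for every $k\neq k^*(m)$, it follows that $d(\mu_{m,k},\mu_{m+1,k})\leq\tfrac12\leq\Delta_{m,k}/(2\Delta_m)$; plugging this into the decomposition gives $\mathrm{KL}(\mathbb{P}_{\mu,\mathcal{I}}\|\mathbb{P}_{\mu,\mathcal{I}'})\leq\frac{1}{2\Delta_m}\sum_{k\neq k^*(m)}\mathbb{E}_{\mu,\mathcal{I}}[T_k^{(i)}(T)]\Delta_{m,k}=\frac{1}{2\Delta_m}\mathbb{E}_{\mu,\mathcal{I}}[R_T^{(i)}]$, whence $\liminf_T\mathbb{E}_{\mu,\mathcal{I}}[R_T^{(i)}]/\log T\geq 2\Delta_m$. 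Summing over $i\in\mathcal{I}_m$ and $m\in[M-1]$ (the remaining agents only add nonnegative regret) proves the first inequality. For the second, $\Delta_m\geq\Delta$ together with $|\mathcal{I}_M|=N_M\leq c_2 N/M\leq N/2$ (using $c_2\leq M/2$) gives $2\sum_{m=1}^{M-1}|\mathcal{I}_m|\Delta_m\geq 2\Delta(N-N_M)\geq N\Delta$.
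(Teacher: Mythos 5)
Your proposal is correct and follows essentially the same route as the paper's proof: the same reassignment instance $(\mu,\mathcal{I}')$ that moves the single agent $i$ from bandit $m$ to bandit $m+1$, the same divergence decomposition in which only agent $i$'s reward factors survive and the $k^*(m)$ term vanishes by the assumption $\mu_{m,k^*(m)}=\mu_{m+1,k^*(m)}$, the same Bretagnolle--Huber argument with the event $\{T^{(i)}_{k^*(m+1)}(T)\geq T/2\}$, the same bound $d(\mu_{m,k},\mu_{m+1,k})\leq\tfrac12\leq\Delta_{m,k}/(2\Delta_m)$, and the same final summation using $|\mathcal{I}_M|\leq c_2 N/M\leq N/2$. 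The only cosmetic difference is in how the cancellation of the other agents' likelihood factors is justified: the paper first passes to a fictitious system in which agent $i$ observes all other agents' reward sequences at time zero and then invokes an explicit density construction, whereas you cancel those factors directly in the chain rule over the full system trajectory; both are valid formalizations of the same idea.
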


%Specifically, we prove a lower bound on the group regret for the system in which every agent is aware of all the other agents learning the same bandit instance, without any constraints on the length of the messages exchanged during information pulls, which in turn gives a lower bound on the group regret in our model.

%{\bf Remark:} It is evident from Corollaries \ref{cor:gosinegpreg} and \ref{cor:gosinesideinfogpreg} that Algorithm \ref{alg:gosine} in the \emph{context unaware} scenario and Algorithm \ref{alg:gosinesideinfo} in the \emph{partially context aware} scenario incur near-optimal regret, when the sticky sets $\{\widehat{S}^{(i)}\}_{i \in \mathcal{I}_{m}}$ is the partition of the set of all arms $[K]$ for all bandits $m \in [M]$.
%Furthermore, under the same conditions, in the \emph{partially context aware} scenario, when $r=\frac{N}{M}$, i.e., agents know all the other agents learning the same bandit instance, Algorithm \ref{alg:gosinesideinfo} achieves optimal regret scaling in the regime $M = O(\sqrt{N})$.

\section{Numerical Results}
\label{sec:simulations}
\begin{figure}[tb]
\centering
\includegraphics[width=\columnwidth]{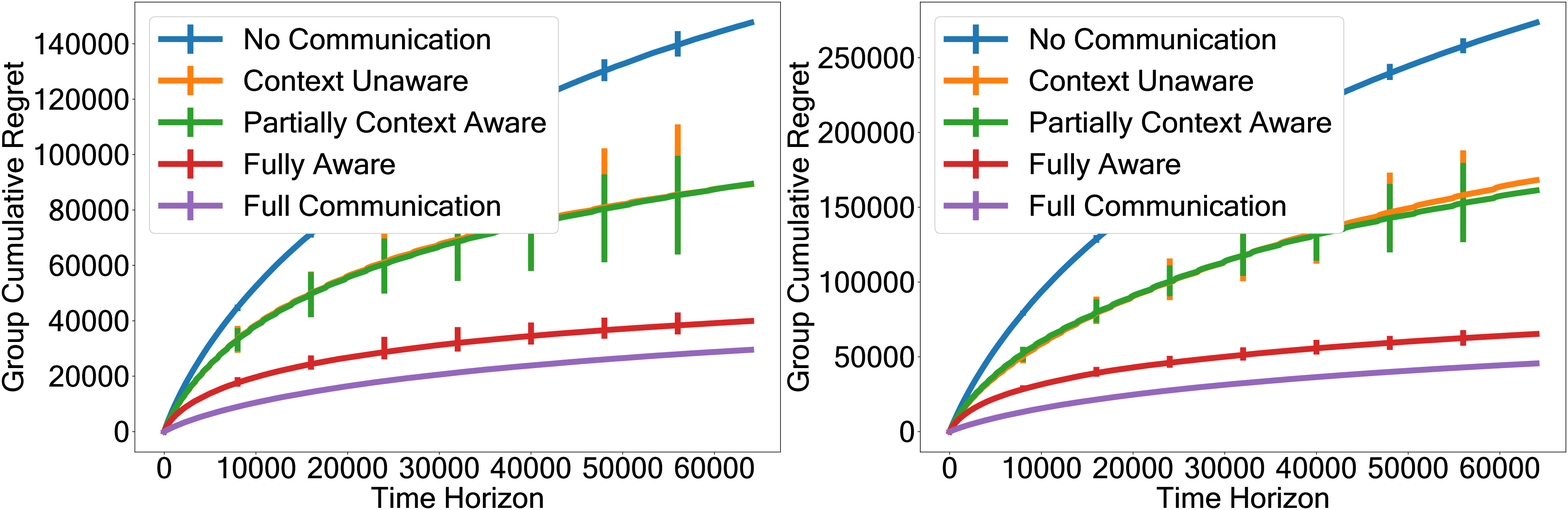}
\caption{$(K, M, N, r)$ are $(20, 5, 25, 5)$ and $(30, 6, 36, 6)$ respectively. Arm means are in $[0, 1)$ and the UCB parameter $\alpha=15$.}
\label{fig1}
\end{figure}

We evaluate Algorithm \ref{alg:gosine} in the \emph{context unaware} setting and Algorithm \ref{alg:gosinesideinfo} in the \emph{partially context aware} setting, and verify their insights through synthetic simulations.
The algorithms are compared with respect to the following benchmarks: (a) no communication, corresponding to all the agents running UCB-$\alpha$ algorithm on $K$-armed MAB from \cite{ACBF02} without any communications, (b) fully aware, corresponding to agents playing GosInE algorithm from \cite{aistats2020gossip}, but agents only communicate with all the other agents learning the same bandit, and (c) full communication, where for each bandit, all agents play the UCB-$\alpha$ algorithm on $K$-armed MAB from \cite{ACBF02}, but with the entire history of all arms pulled and the corresponding rewards obtained by all the agents.

We show the group cumulative regret of Algorithms \ref{alg:gosine} and \ref{alg:gosinesideinfo} over 30 random runs with $95\%$ confidence intervals.
The $K$ arm means for each of the $M$ bandits are generated uniformly at random from $[0, 1)$ in Figure \ref{fig1} and $[2, 4)$ in Figure \ref{fig2}.
We assume an equal number ($\frac{N}{M}$) of agents learning each bandit, set $\beta=3$ and the size of the sticky set $S = \frac{MK}{N}$ in these simulations.
The UCB parameter $\alpha$ is set to $15$ in Figure \ref{fig1} and $30$ in Figure \ref{fig2}. 

From our simulations, it is evident that Algorithms \ref{alg:gosine} and \ref{alg:gosinesideinfo} incur lower regret than the case when agents don't communicate, despite limited communication among the agents and agents interacting with agents learning other bandits.
Furthermore, our simulations also demonstrate that for each bandit, when an agent knows other agents learning the same bandit in the \emph{partially context aware} scenario, it incurs lesser regret compared to the \emph{context unaware} scenario. 

\begin{figure}[tb]
\centering
\includegraphics[width=\columnwidth]{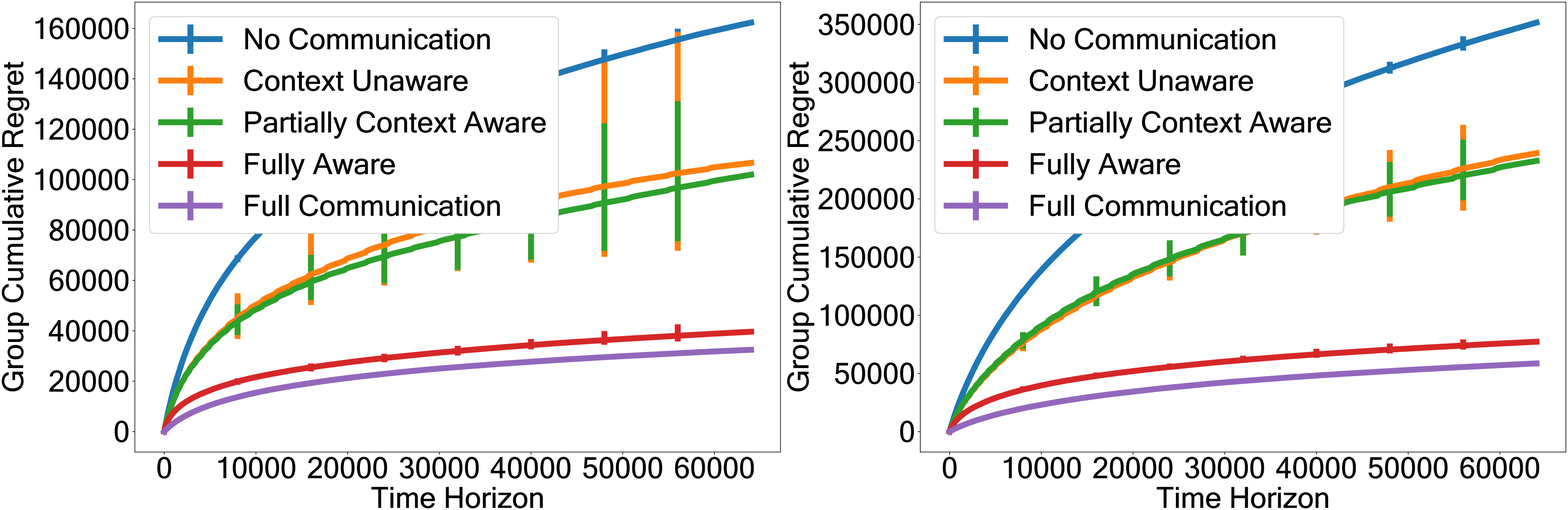}
\caption{$(K, M, N, r)$ are $(20, 5, 25, 5)$ and $(30, 6, 36, 6)$ respectively. Arm means are in $[2, 4)$ and the UCB parameter $\alpha=30$.}
\label{fig2}
\end{figure}

\section*{Acknowledgements}
This work was supported by NSF Grants 2207547, 1934986, 2106801, 2019844, 2112471, and 2107037; ONR Grant N00014-19-1-2566; the Machine Learning Lab (MLL) at UT Austin; and the Wireless Networking and Communications Group (WNCG) Industrial Affiliates Program.

%\bibliography{references}
%\bibliographystyle{icml2023}

%%%%%%%%%%%%%%%%%%%%%%%%%%%%%%%%%%%%%%%%%%%%%%%%%%%%%%%%%%%%%%%%%%%%%%%%%%%%%%%
%%%%%%%%%%%%%%%%%%%%%%%%%%%%%%%%%%%%%%%%%%%%%%%%%%%%%%%%%%%%%%%%%%%%%%%%%%%%%%%
% APPENDIX
%%%%%%%%%%%%%%%%%%%%%%%%%%%%%%%%%%%%%%%%%%%%%%%%%%%%%%%%%%%%%%%%%%%%%%%%%%%%%%%
%%%%%%%%%%%%%%%%%%%%%%%%%%%%%%%%%%%%%%%%%%%%%%%%%%%%%%%%%%%%%%%%%%%%%%%%%%%%%%%
%\newpage
%\appendix
%\onecolumn
%\input{supplementary-appendix}

\bibliographystyle{plain}
\bibliography{references}

\clearpage
\begin{appendices}
%\todo{update appendix with ``instance'' language}
\section{Related Work Revisited}
\label{appdx:relatedwork}
Apart from the works mentioned in Section \ref{subsec:relatedwork}, there exist several other works in the space of collaborative multi-agent MABs with different models of communication among agents, some of which are shared here.
Agents exchange information in \cite{buccapatnam2015information, chakraborty2017coordinated} via broadcasting instead of pairwise gossip style communications. 
There is more frequent communication between the agents in \cite{kolla2018collaborative, lalitha2021bayesian, rubio}. 
In \cite{madhushani2021call} the number of communications is inversely proportional to the minimum arm gap so could be large. 
Arm mean estimates are exchanged instead of arm indices in \cite{landgren2016distributed}. 
\cite{wang2020optimal} employs a leader-follower mechanism, wherein the leader explores the arms and estimates their mean rewards, while the followers play the arm with the highest estimated arm mean based on the samples collected by the leader.

Collaborative multi-agent bandits have also been studied in the contextual setting \cite{dubey2020kernel,tekin2015distributed} and the linear reward model setting \cite{dubey2020, chawla2022multi,korda2016distributed}, which are significantly different from the setting considered in our work.
Some of the works in this space also focus on minimizing the simple regret \cite{hillel2013distributed,szorenyi2013gossip}, instead of minimizing the cumulative regret.
There also exist works such as \cite{bistriz, shahrampour2017multi, zhuzh, reda2022near} which assume different bandits across agents.
While this list of works is by no means exhaustive, it represents a broad class of settings studied in the collaborative multi-agent bandits thus far.

Another long line of work in multi-agent MABs comprises of collision-based models \cite{anandkumar2011distributed, avner2014concurrent, bistriz, boursier2019sic, dakdouk2021collaborative, kalathil2014decentralized, liu2010distributed, liu2020competing, mansour2017competing, rosenski2016multi}, where if multiple agents play the same arm, they receive a small or no reward.
However, our work assumes that if multiple agents learning the same bandit play the same arm, they receive independent rewards and thus, is different than the collision-based models.

\section{Notations Revisited}
Recall that for every bandit $m \in [M]$ and arm $k \in [K]$, the (unknown) mean rewards are denoted by $\mu_{m, k}$, $k^{*}(m)$ denotes the best arm, $\Delta_{m, k} := \mu_{m, k^{*}(m)} - \mu_{m, k}$ denotes the arm gap, $\Delta_{m} := \min_{k \neq k^{*}(m)} \Delta_{m, k}$ is the minimum arm gap, $\mathcal{I}_{m} \subset [N]$ denotes the set of agents learning the $m^{\mathrm{th}}$ bandit and $c(.):[N] \rightarrow [M]$ is a function mapping the set of agents to the set of bandits, i.e., if $i \in \mathcal{I}_m$, $c(i) = m$. Let $\mathcal{B}$ denote the set of all the $M$ best arms $\{k^{*}(m)\}_{m \in [M]}$ and $\mathcal{B}^{(-m)} = \mathcal{B} \backslash \{k^{*}(m)\}$ denote the set of best arms excluding the best arm of the $m^{\mathrm{th}}$ bandit. For every agent $i \in [N]$ and phase $j \in \mathbb{N}$, $\widehat{\mathcal{O}}_{j}^{(i)} \in S_{j}^{(i)}$ denotes the most played arm by agent $i$ in phase $j$ (where $S_{j}^{(i)}$ is the set of active arms for agent $i$ in phase $j$), and subsequently recommended at the end of the phase if requested by another agent through information pull. Each phase $j$ lasts from $t \in \{1+A_{j-1}, \cdots, A_{j}\}$ and
\begin{equation}
    \label{eq:commepochinv}
    A^{-1}(t) = \sup\{j \in \mathbb{N}: A_{j} \leq t\}.
\end{equation}
For $A_j = \lceil j^\beta \rceil$, $A^{-1}(t) = \lfloor t^{\frac{1}{\beta}} \rfloor$.

\section{Proof of Theorem \ref{thm:gosinereg}}
\label{pf:gosinereg}
We extend the proof ideas from \cite{aistats2020gossip} and \cite{vial2022robust} to prove our results. 
Fix an agent $i \in [N]$ and phase $j \in \mathbb{N}$.
Let $\chi_{j}^{(i)}$ be a boolean random variable associated with the event $\left\{k^{*}(c(i)) \in S_{j}^{(i)}, \widehat{\mathcal{O}}_{j}^{(i)} \neq k^{*}(c(i))\right\}$, i.e.,
\begin{equation}
\label{eq:errorevent}
    \chi_{j}^{(i)} = \mathbf{1}\left(k^{*}(c(i)) \in S_{j}^{(i)}, \widehat{\mathcal{O}}_{j}^{(i)} \neq k^{*}(c(i))\right),
\end{equation}
indicating whether agent $i$ does not recommend the best arm of the bandit it plays at the end of the phase $j$, \emph{if} it is present in its active set $S_{j}^{(i)}$.
We also extend the definitions of random times defined in \cite{aistats2020gossip}, which will capture the key aspects in the system dynamics, as well as highlight the differences from the single bandit case, as follows: 
\begin{align*}
    \tau_{\mathrm{stab}}^{(i)} &= \inf\{ j \geq \tau^{*}: \forall l \geq j, \chi_{l}^{(i)} = 0\}, \\
    \tau_{\mathrm{stab}} &= \max_{i \in [N]} \tau_{\mathrm{stab}}^{(i)}, \\ 
    \tau_{\mathrm{spr}}^{(i)} &= \inf \{j \geq \tau_{\mathrm{stab}}: k^{*}(c(i)) \in S_{j}^{(i)}\} - \tau_{\mathrm{stab}},\\
    \tau_{\mathrm{spr}, m} &= \max_{i \in \mathcal{I}_{m}} \tau_{\mathrm{spr}}^{(i)},\\
    \tau_{\mathrm{spr}} &= \max_{m \in [M]} \tau_{\mathrm{spr}, m}, \\
    \tau &= \tau_{\mathrm{stab}} + \tau_{\mathrm{spr}}.
\end{align*}
$\tau_{\mathrm{stab}}^{(i)}$ is the earliest phase, following which if agent $i$ has their best arm in its active set, will play it most number of times and recommend it during information pulls requested by other agents.
Furthermore, starting from phase $\tau$, all the agents contain their best arms in their respective active sets, such that it will also be their most played arm and hence, will recommend it at the end of any phase $j \geq \tau$.
Mathematically, it implies the following: for all $i \in [N]$,
\begin{equation}
    \label{prop:freeze}
    k^{*}(c(i)) \in S_{j}^{(i)}, \widehat{\mathcal{O}}_{j}^{(i)} = k^{*}(c(i)) \; \forall j \geq \tau. 
\end{equation}
Claim (\ref{prop:freeze}) can be shown by induction as follows: it is evident from the definition of $\tau_{\mathrm{spr}}^{(i)}$ that $k^{*}(c(i)) \in S_{\tau_{\mathrm{stab}}+ \tau_{\mathrm{spr}}^{(i)}}^{(i)}$. Furthermore, $\widehat{\mathcal{O}}_{j}^{(i)} = k^{*}(c(i))$ for any phase $j \geq \tau_{\mathrm{stab}}$ if $k^{*}(c(i)) \in S_{j}^{(i)}$. Therefore, $\widehat{\mathcal{O}}_{\tau_{\mathrm{stab}}+ \tau_{\mathrm{spr}}^{(i)}}^{(i)} = k^{*}(c(i))$ the update step of the Algorithm \ref{alg:gosine} ensures that $k^{*}(c(i)) \in S_{\tau_{\mathrm{stab}}+ \tau_{\mathrm{spr}}^{(i)}+1}^{(i)}$, and hence claim (\ref{prop:freeze}) follows.

We now highlight the similarities and differences of the random times defined in this work with respect to the random times defined in \cite{aistats2020gossip}.
$\tau_{\mathrm{stab}}$ is defined exactly the same as in \cite{aistats2020gossip}, because we will demonstrate in Lemma \ref{prop:regfreeze} that the bound on the tail probability of the random variable $\tau_{\mathrm{stab}}^{(i)}$ is independent of the bandit played by an agent. 
However, in contrast to \cite{aistats2020gossip}, given that the set of agents are learning $M$ different bandits, the spread of $M$ best arms is non-trivial (compared to spreading a single best arm), because agents learning different bandits are communicating with each other, hence we have $M$ intertwined spreading processes occuring simultaneously.
Consequently, we simply cannot bound the spreading time $\tau_{\mathrm{spr}, m}$ for each of the $M$ best arms by the spreading time of a standard rumor spreading process, unlike \cite{aistats2020gossip}. 
This necessitates coupling the $M$ intertwined rumor spreading processes happening in our model to a set of $M$ independent fictitious noisy rumor spreading process happening on the subgraph of the agents for each bandit. 

\subsection{Intermediate Results}
We begin by providing a roadmap for the proof of Theorem \ref{thm:gosinereg} and then proving the intermediate results needed to complete it.
Claim (\ref{prop:freeze}) states that all the agents starting from phase $\tau$ contain the best arm of the bandit they are learning in their respective active sets and recommend it during information pulls.
This allows us to decompose the expected cumulative regret incurred by an agent into two parts: the regret up to phase $\tau$ and the regret after phase $\tau$.

We first show that the expected cumulative regret up to phase $\tau$ is bounded by a constant that depends only on the system parameters (number of agents, number of bandits and their respective arm gaps) and independent of the time horizon $T$ (Proposition \ref{prop:regdecomp}).
It follows from the observation that the probability of an agent not recommending their best arm and thus dropping it from their active set at the end of a phase is small and decreases as the phases progress, such that it doesn't happen infinitely often (Lemma \ref{lem:errorprob}).
This implies the existence of a random phase (defined by $\tau_{\mathrm{stab}}$), after which agents always recommend and never drop their respective best arms.
Post phase $\tau_{\mathrm{stab}}$, we characterize the time taken by the best arms of each of the $M$ bandits to spread across their respective agents (denoted by $\tau_{\mathrm{spr}, m}$).
Unlike \cite{aistats2020gossip}, we cannot bound the spreading time $\{\tau_{\mathrm{spr}, m}\}_{m \in [M]}$ of each of the $M$ best arms through a standard rumor spreading process.
This is because each agent communicates with either other agents learning the same bandit or the agents learning different bandits.
For each $m \in [M]$, $\tau_{\mathrm{spr}, m}$ is bounded by multiple layers of stochastic domination, described in the order in which they are applied below:
\begin{itemize}[leftmargin=*]
    \item first, reducing the system to the case when only one agent learning the $m^{\mathrm{th}}$ bandit is aware of their best arm $k^{*}(m)$
%    \item second, operating under a worst-case scenario, where agents learning the other instances will never recommend $k^{*}(m)$
    \item second, lower bounding the spreading time of $k^{*}(m)$ through a fictitious noisy rumor spreading process unfolding on the subgraph of the agents learning the $m^{\mathrm{th}}$ bandit (described in the proof sketch of Theorem \ref{thm:gosinereg})
    \item third, coupling the fictitious noisy rumor spreading process in the subgraph of the agents learning the $m^{\mathrm{th}}$ bandit in the previous layer with a fictitious noiseless process (described in Appendix \ref{sec:rumorspread})
\end{itemize}
The last two layers of stochastic domination are absent in \cite{aistats2020gossip}, as all the agents are playing the same bandit.
The preceding discussion characterizes the regret up to phase $\tau$.

We subsequently show that the expected cumulative regret incurred by an agent after phase $\tau$ is bounded by the regret due to the arms in their sticky set and the regret due to the other $M-1$ best arms (Proposition \ref{prop:regdecomp}). This is a consequence of all the agents recommending their respective best arms after phase $\tau$ (Claim (\ref{prop:freeze})) and every agent communicating with agents learning other bandits. 

\begin{prop}
    \label{prop:regdecomp}
    The expected cumulative regret of any agent $i \in \mathcal{I}_{m}$ for all $m \in [M]$ after playing for $T$ time steps is bounded by:
    \begin{align*}
        \mathbb{E}[R_{T}^{(i)}] \leq \mathbb{E}[A_{\tau}] + \frac{\pi^2}{3}K + \sum_{k \in \{\widehat{\mathcal{S}}^{(i)} \cup \mathcal{B}^{(-m)}\} \backslash \{k^{*}(m)\}} \frac{4\alpha }{\Delta_{m, k}} \log T.
    \end{align*}
\end{prop}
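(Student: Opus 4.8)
The plan is to split the time horizon at the random freezing phase $\tau$ and bound the two pieces separately, using Claim (\ref{prop:freeze}) as the bridge between them. Since every per-step regret is at most $\max_{k \neq k^*(m)} \Delta_{m,k} \leq 1$ (arm means lie in $[0,1]$), the regret accumulated through the end of phase $\tau$, i.e. through time $A_{\tau}$, is at most $A_{\tau}$; taking expectations yields the first term $\mathbb{E}[A_{\tau}]$ and leaves only the regret over times $t > A_{\tau}$ to control.

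First I would pin down which arms can possibly be played after phase $\tau$. By Claim (\ref{prop:freeze}), for every $j \geq \tau$ the agent's most-played arm is $k^{*}(m)$, so the recommendation $\widehat{\mathcal{O}}_{j}^{(i)}$ it keeps equals $k^{*}(m)$, while the recommendation $\mathcal{O}_{j}^{(i)}$ it receives is forwarded by some agent $n$ whose own most-played arm, again by Claim (\ref{prop:freeze}), is $k^{*}(c(n)) \in \mathcal{B}$. Combined with the update $S_{j+1}^{(i)} = \widehat{\mathcal{S}}^{(i)} \cup \{\widehat{\mathcal{O}}_{j}^{(i)}\} \cup \{\mathcal{O}_{j}^{(i)}\}$, this gives $S_{j}^{(i)} \subseteq \widehat{\mathcal{S}}^{(i)} \cup \mathcal{B}$ for all $j > \tau$. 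Hence the only suboptimal arms ever active, and thus playable, after phase $\tau$ belong to $\{\widehat{\mathcal{S}}^{(i)} \cup \mathcal{B}^{(-m)}\} \setminus \{k^{*}(m)\}$, which is exactly the index set appearing in the claimed sum.

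For each such arm $k$ I would bound the expected number of post-$\tau$ plays by a standard UCB concentration argument, first removing the dependence on the random $\tau$. Since $k^{*}(m) \in S_{j}^{(i)}$ for every $j > \tau$, whenever $k$ is played at a time $t > A_{\tau}$ both $k$ and $k^{*}(m)$ are active and $k$'s UCB index dominates that of $k^{*}(m)$. Therefore $\sum_{t > A_{\tau}} \mathbf{1}(I_{t}^{(i)} = k)$ is at most the number of rounds over the \emph{entire} horizon at which $k$ is pulled while $k^{*}(m)$ is active and $k$'s index exceeds that of $k^{*}(m)$, a quantity that no longer references $\tau$. The usual decomposition, excluding rounds with fewer than $4\alpha \Delta_{m,k}^{-2}\log T$ prior pulls of $k$ and controlling the residual via sub-Gaussian tail bounds on the empirical means of $k$ and $k^{*}(m)$, then gives $\mathbb{E}[\sum_{t>A_{\tau}}\mathbf{1}(I_{t}^{(i)}=k)] \leq 4\alpha \Delta_{m,k}^{-2}\log T + \frac{\pi^2}{3}$. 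Multiplying by $\Delta_{m,k} \leq 1$ and summing over the at most $K$ distinct arms produces the main terms $\sum_{k} \frac{4\alpha}{\Delta_{m,k}}\log T$ together with the aggregate concentration constant $\frac{\pi^2}{3}K$.

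The main obstacle is precisely this interface between the random freezing phase $\tau$ and the deterministic UCB concentration bookkeeping: conditioning naively on $\tau$ would entangle the (highly nontrivial) law of $\tau$ with the per-arm tail estimates. The resolution above, replacing the $\tau$-dependent event $\{t > A_{\tau}\}$ by the larger, $\tau$-free event that $k^{*}(m)$ lies in the active set in force at time $t$ and $k$'s index beats it, decouples the two analyses. This lets the spreading-time bound on $\mathbb{E}[A_{\tau}]$, established separately through the noisy-rumor-process coupling, be plugged into the first term without perturbing the concentration terms.
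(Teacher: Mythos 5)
Your proposal is correct and follows essentially the same route as the paper's proof: decompose the regret at time $A_{\tau}$, bound the pre-$\tau$ part by $\mathbb{E}[A_{\tau}]$ using $\Delta_{m,k}\leq 1$, invoke Claim (\ref{prop:freeze}) to conclude that only arms in $\widehat{\mathcal{S}}^{(i)} \cup \mathcal{B}$ can be played after phase $\tau$, and apply the standard UCB counting-plus-concentration decomposition with threshold $\frac{4\alpha}{\Delta_{m,k}^{2}}\log T$ to each such suboptimal arm, yielding the logarithmic terms plus the aggregate constant $\frac{\pi^2}{3}K$. If anything, your step replacing the $\tau$-dependent event $\{t > A_{\tau}, I_{t}^{(i)}=k\}$ by the $\tau$-free event that $k$ is pulled while $k^{*}(m)$ is active is slightly more careful than the paper's write-up, which drops the $t > A_{\tau}$ condition before invoking the classical UCB tail estimate (an estimate that in fact relies on $k^{*}(m)$ being in the active set).
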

\begin{proof}
    Fix a bandit $m \in [M]$ and an agent $i \in \mathcal{I}_{m}$. By regret decomposition principle, we know that
    \begin{align*}
        R_{T}^{(i)} &= \sum_{k \in [K]} \Delta_{m, k} \sum_{t=1}^{T} \mathbf{1}(I_{t}^{(i)}=k),\\
        &=\sum_{t=1}^{T} \sum_{k \in [K]} \Delta_{m, k} \mathbf{1}(I_{t}^{(i)}=k),\\
        &\leq A_{\tau} + \sum_{k \in [K]} \Delta_{m, k} \sum_{t=A_{\tau} +1}^{T} \mathbf{1}(I_{t}^{(i)}=k), 
    \end{align*}
    where $I_{t}^{(i)}$ is the arm played by agent $i$ at time $t$ and the last step follows from the fact that $\Delta_{m, k} \in (0, 1)$. Taking expectation on both sides, we get
    \begin{align}
    \label{eq:regdecomp}
        \mathbb{E}[R_{T}^{(i)}] &\leq \mathbb{E}[A_{\tau}] + \sum_{k \in [K]} \Delta_{m, k} \mathbb{E}\left[\sum_{t=A_{\tau} +1}^{T} \mathbf{1}(I_{t}^{(i)}=k)\right].
    \end{align}
    The first term $\mathbb{E}[A_{\tau}]$ is bounded in Proposition \ref{prop:regfreeze}. We now bound the second term. Let $X_{k,t}^{(i)} = \mathbf{1}\left(t>A_{\tau}, I_{t}^{(i)} = k, T_{k}^{(i)}(t-1) \leq \frac{4\alpha}{\Delta_{m,k}^2} \log T\right)$ and $Y_{k,t}^{(i)} = \mathbf{1}\left(t>A_{\tau}, I_{t}^{(i)} = k, T_{k}^{(i)}(t-1) > \frac{4\alpha}{\Delta_{m,k}^2} \log T\right)$. Then, the inner sum in the second term can be re-written as follows:
    \begin{align}
    \label{eq:indicatordecomp}
        \sum_{t=A_{\tau}+1}^{T} \mathbf{1}(I_{t}^{(i)} = k) &= \sum_{t=1}^{T} (X_{k,t}^{(i)} + Y_{k,t}^{(i)}). 
    \end{align}
    We first bound the sum $\sum_{t=1}^{T} \mathbb{E}[Y_{k,t}^{(i)}]$. Notice that
    \begin{align}
        \sum_{t=1}^{T} \mathbb{E}[Y_{k,t}^{(i)}] &= \sum_{t=1}^{T} \mathbb{P}\left(t>A_{\tau}, I_{t}^{(i)} = k, T_{k}^{(i)}(t-1) > \frac{4\alpha}{\Delta_{m,k}^2} \log T\right),\nonumber\\
        &\leq \sum_{t=1}^{T} \mathbb{P}\left(I_{t}^{(i)} = k, T_{k}^{(i)}(t-1) > \frac{4\alpha}{\Delta_{m,k}^2} \log T\right),\nonumber\\
        %&\leq \sum_{t=1}^{T} 2t^{2(1-\alpha)} \leq \frac{\pi^2}{3}\label{eq:indicatordecomp1},
        &\leq \sum_{t=1}^{T} 2t^{2-\frac{\alpha}{2}} \leq \frac{\pi^2}{3}\label{eq:indicatordecomp1},
    \end{align}
    where we substitute the classical estimate of the probability that UCB plays a sub-optimal arm using Chernoff-Hoeffding bound for $1$-subgaussian rewards and $\alpha>10$ in the last line.

    For an arm $k \in [K]$, we bound the sum $\sum_{t=1}^{T} \mathbb{E}[X_{k,t}^{(i)}]$ and complete the proof. By taking the expectation over $X_{k,t}^{(i)}$, we get
    \begin{align}
    \label{eq:indicatordecompsum2}
        \sum_{t=1}^{T} \mathbb{E}[X_{k,t}^{(i)}] &= \sum_{t=1}^{T} \mathbb{P}\left(t>A_{\tau}, I_{t}^{(i)} = k, T_{k}^{(i)}(t-1) \leq \frac{4\alpha}{\Delta_{m,k}^2} \log T\right).
    \end{align}
    For any arm $k \in [K]$, three cases are possible:
    \begin{itemize}[leftmargin=*]
    \item if $k \in \widehat{\mathcal{S}}^{(i)}$, i.e., if arm $k$ is one of the sticky sub-optimal arms, the sum in equation (\ref{eq:indicatordecompsum2}) is bounded above by $\frac{4\alpha}{\Delta_{m,k}^2} \log T$, This is because the event $I_{t}^{(i)} = k$ cannot occur more than $\frac{4\alpha}{\Delta_{m,k}^2} \log T$ times, otherwise it will contradict the event $T_{k}^{(i)}(t-1) \leq \frac{4\alpha}{\Delta_{m,k}^2} \log T$.
    \item if arm $k$ is a non-sticky arm in the active set, it has to be either the best arm $k^{*}(m)$ or one of the other best arms from the set $\mathcal{B}^{(-m)}$. This follows from Claim (\ref{prop:freeze}), as starting from phase $\tau$, agents will recommend their respective best arms during information pulls. Given that every agent communicates with agents learning other bandits, each of the arms in $\mathcal{B}^{(-m)}$ could be present in agent $i$'s active set after phase $\tau$. Therefore, for all $k \in \mathcal{B}^{(-m)}$, the sum in equation (\ref{eq:indicatordecompsum2}) is bounded by $\frac{4\alpha}{\Delta_{m,k}^2} \log T$, which follows from the same argument used for a sticky sub-optimal arm.
    \item if arm $k$ is neither a sticky sub-optimal arm nor one of the other best arms from the set $\mathcal{B}^{(-m)}$, the event $I_{t}^{(i)} = k$ cannot happen, because arm $k$ cannot be present in the active set after phase $\tau$ from Claim (\ref{prop:freeze}). Thus, the sum in equation (\ref{eq:indicatordecompsum2}) is equal to zero.
    \end{itemize}
    From the above observations, we can conclude that for all $k \in \{\widehat{\mathcal{S}}^{(i)} \cup \mathcal{B}^{(-m)}\} \backslash \{k^{*}(m)\}$, 
    \begin{align}
    \label{eq:indicatordecomp2}
        \sum_{t=1}^{T} \mathbb{E}[X_{k,t}^{(i)}] &\leq \frac{4\alpha}{\Delta_{m,k}^2} \log T.
    \end{align}
    The proof of Proposition \ref{prop:regdecomp} is completed by substituting equations (\ref{eq:indicatordecomp1}) and (\ref{eq:indicatordecomp2}) into the expectation of the equation (\ref{eq:indicatordecomp}), followed by substituting the bound on the expectation of the equation (\ref{eq:indicatordecomp}) into equation (\ref{eq:regdecomp}).
\end{proof}

In order to obtain an upper bound on $\mathbb{E}[A_{\tau}]$, we first show that the probability of the error event that the best arm is not recommended during information pull at the end of the phase $j$ (indicated by $\chi_{j}^{(i)}$ in equation (\ref{eq:errorevent})) decreases as the phases progress. This result is stated as a lemma below:

\begin{lemma}
\label{lem:errorprob}
    For any agent $i \in \mathcal{I}_{m}$ for all $m \in [M]$ and phase $j \in \mathbb{N}$ such that $\frac{A_{j}-A_{j-1}}{S+2} \geq 1 + \frac{4\alpha}{\Delta_{m}^{2}} \log A_{j}$, we have
    \begin{align*}
        \mathbb{E}[\chi_{j}^{(i)}] \leq \frac{2}{\frac{\alpha}{2}-3} {K \choose 2} (S+1) \frac{1}{A_{j-1}^{\frac{\alpha}{2}-3}}.
    \end{align*}
\end{lemma}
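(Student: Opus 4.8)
The plan is to bound $\mathbb{E}[\chi_j^{(i)}] = \mathbb{P}(k^*(m) \in S_j^{(i)},\, \widehat{\mathcal{O}}_j^{(i)} \neq k^*(m))$ by converting the event ``the best arm is present but is not the most-played arm'' into the statement that \emph{some suboptimal arm is pulled too often within phase $j$}, and then controlling the latter by the standard UCB concentration argument applied to agent $i$ in isolation. Since agent $i$ simply plays UCB on its active set $S_j^{(i)}$ during phase $j$ using only its own reward history, this lemma does not involve the intertwined rumor dynamics at all; it is essentially the single-bandit argument of \cite{aistats2020gossip} made uniform over the bandit index $m$ through the minimum gap $\Delta_m$, which is why the resulting bound is independent of which bandit $i$ is learning.

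First I would note that $\chi_j^{(i)} = 1$ forces $\widehat{\mathcal{O}}_j^{(i)}$ to equal some suboptimal $k \neq k^*(m)$. Since $|S_j^{(i)}| \leq S+2$ and the phase consists of $A_j - A_{j-1}$ pulls, a pigeonhole argument gives $T_k^{(i)}(A_j) - T_k^{(i)}(A_{j-1}) \geq \frac{A_j - A_{j-1}}{S+2}$ for the most-played arm. Invoking the hypothesis $\frac{A_j - A_{j-1}}{S+2} \geq 1 + \frac{4\alpha}{\Delta_m^2}\log A_j$ together with $\Delta_{m,k} \geq \Delta_m$, this yields $T_k^{(i)}(A_j) - T_k^{(i)}(A_{j-1}) > \frac{4\alpha}{\Delta_{m,k}^2}\log A_j$. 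A union bound over the at most $S+1$ suboptimal arms in the active set then reduces the claim to bounding, for a fixed suboptimal $k$, the probability that $k$ is pulled more than $\frac{4\alpha}{\Delta_{m,k}^2}\log A_j$ times during phase $j$ while $k^*(m)$ is simultaneously available.

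Next I would run the usual bad-event decomposition over the phase. Inspecting the last time $s \leq A_j$ at which $k$ is pulled, at that step $T_k^{(i)}(s-1) > \frac{4\alpha}{\Delta_{m,k}^2}\log A_j$ yet UCB preferred $k$ to $k^*(m)$, i.e. $\widehat{\mu}_k^{(i)}(s-1) + \sqrt{\alpha \log T / T_k^{(i)}(s-1)} \geq \widehat{\mu}_{k^*(m)}^{(i)}(s-1) + \sqrt{\alpha \log T / T_{k^*(m)}^{(i)}(s-1)}$; this is impossible unless one of the two empirical means deviates from its true mean by more than $\Delta_{m,k}/2$. Taking a union over the phase step $t \in \{A_{j-1}+1,\dots,A_j\}$, over pairs of arms, and over the play counts of the two arms involved, and applying the Chernoff--Hoeffding bound for $1$-subgaussian rewards calibrated to a $\log A_j$-level confidence, each phase step contributes $2t^{2-\alpha/2}$ (the two count-unions supplying the $t^2$ and Hoeffding the $t^{-\alpha/2}$), with the $\binom{K}{2}(S+1)$ prefactor coming from the arm-pair and active-set unions. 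Bounding the tail $\sum_{t > A_{j-1}} 2t^{2-\alpha/2} \leq \frac{2}{\alpha/2 - 3}\, A_{j-1}^{-(\alpha/2-3)}$, which converges since $\alpha > 10 > 6$, produces exactly the claimed bound.

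I expect the main obstacle to be the bookkeeping in the concentration step rather than any conceptual difficulty. In particular, the UCB indices accumulate counts and empirical means \emph{across} phases (the active set changes but the per-arm statistics persist), so the good/bad events must be phrased in terms of the all-time empirical averages $\widehat{\mu}_k^{(i)}$ while the count threshold is measured only within phase $j$; one must verify that the within-phase lower bound on $T_k^{(i)}(s-1)$ still forces a deviation of size $\Delta_{m,k}/2$, and that the confidence level used in Hoeffding is set so that the per-step failure scales as $t^{2-\alpha/2}$ — and hence the sum telescopes to the $A_{j-1}^{-(\alpha/2-3)}$ tail — rather than carrying a spurious $T$-dependence. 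Checking that every union-bound constant ($\binom{K}{2}$, $S+1$, and the factor $2$) is accounted for, and that the final bound is genuinely uniform in $m$, is the remaining care required.
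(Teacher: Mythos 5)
Your proposal is correct and is essentially the argument the paper itself invokes: the paper's proof of this lemma is a one-line citation to Lemma 6 of \cite{aistats2020gossip} (generalized from $S=\lceil K/N\rceil$ to arbitrary $S$ and from Bernoulli to $1$-subgaussian rewards), and your reconstruction — pigeonhole on the most-played arm, the phase-length hypothesis forcing a within-phase count above the UCB threshold $\frac{4\alpha}{\Delta_{m,k}^2}\log A_j$, then a Chernoff--Hoeffding deviation argument with union bounds over phase steps, arm pairs, and play counts, summed to the tail $\frac{2}{\alpha/2-3}A_{j-1}^{-(\alpha/2-3)}$ — is exactly that argument, with the $\binom{K}{2}(S+1)$ prefactor accounted for in the same way. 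Your closing caveat is also the genuinely delicate point: the exploration bonus in Algorithm \ref{alg:gosine} is written with $\log T$ while the lemma's count threshold is at the $\log A_j$ level, so the deviation step only closes if the confidence radius is calibrated to the current time (giving per-step failure $t^{2-\alpha/2}$), which is what the cited single-bandit proof does and what your reconstruction correctly insists on.
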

\begin{proof}
    The proof of this lemma is identical to the proof of Lemma 6 in \cite{aistats2020gossip}, except that it is re-written in a general form here (in \cite{aistats2020gossip}, $S := |\widehat{\mathcal{S}}^{(i)}| =  \lceil\frac{K}{N}\rceil$) and uses $1$-subgaussian rewards instead of Bernoulli rewards. 
\end{proof}

\begin{prop}
\label{prop:regfreeze}
    The regret up to phase $\tau$ is bounded by
    \begin{align*}
        \mathbb{E}[A_{\tau}] \leq \lceil (\tau^{*})^{\beta} \rceil + \frac{N (2^{\beta}+1) 2^{\beta(\frac{\alpha}{2}-3)}}{\frac{\alpha}{2}-3} {K \choose 2} (S+1) \frac{\pi^2}{3} + \sum_{m \in [M]}\mathbb{E}[A_{2\tau_{\mathrm{spr}, m}}],
    \end{align*}
    where $\tau^{*}$ is defined in Theorem \ref{thm:gosinereg}.
\end{prop}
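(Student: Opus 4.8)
The plan is to exploit the deterministic lower bound $\tau = \tau_{\mathrm{stab}} + \tau_{\mathrm{spr}} \ge \tau_{\mathrm{stab}} \ge \tau^*$ together with the monotonicity of $A_j = \lceil j^\beta \rceil$, writing $A_\tau$ as a telescoping sum anchored at $\tau^*$:
$$ A_\tau = A_{\tau^*} + \sum_{j > \tau^*} (A_j - A_{j-1})\, \mathbf{1}(\tau \ge j) . $$
Taking expectations produces the first term $\lceil (\tau^*)^\beta \rceil = A_{\tau^*}$ for free, so it remains to control $\sum_{j > \tau^*}(A_j - A_{j-1})\, \mathbb{P}(\tau \ge j)$. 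Since $\tau = \tau_{\mathrm{stab}} + \tau_{\mathrm{spr}}$, I would split each tail event as $\{\tau \ge j\} \subseteq \{\tau_{\mathrm{stab}} \ge \lceil j/2 \rceil\} \cup \{\tau_{\mathrm{spr}} \ge \lceil j/2 \rceil\}$, reducing the remaining sum to a stabilization contribution and a spreading contribution, handled separately.

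For the spreading contribution, fixing a realization with $\tau_{\mathrm{spr},m} = t$ and summing the increments $A_j - A_{j-1}$ over the $j > \tau^*$ for which $\lceil j/2 \rceil \le t$ telescopes back to at most $A_{2t} = A_{2\tau_{\mathrm{spr},m}}$; after a union bound over the $M$ bandits (using $\tau_{\mathrm{spr}} = \max_m \tau_{\mathrm{spr},m}$) this yields the term $\sum_{m \in [M]} \mathbb{E}[A_{2\tau_{\mathrm{spr},m}}]$, which is left for the subsequent rumor-spreading analysis to bound. For the stabilization contribution, I would union bound over the $N$ agents and use the inclusion $\{\tau_{\mathrm{stab}}^{(i)} \ge n\} \subseteq \{\exists\, l \ge n-1 : \chi_l^{(i)} = 1\}$, whence $\mathbb{P}(\tau_{\mathrm{stab}}^{(i)} \ge n) \le \sum_{l \ge n-1} \mathbb{E}[\chi_l^{(i)}]$, and then invoke Lemma \ref{lem:errorprob} to bound each $\mathbb{E}[\chi_l^{(i)}]$ by $\tfrac{2}{\alpha/2 - 3}\binom{K}{2}(S+1)\, A_{l-1}^{-(\alpha/2 - 3)}$.

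Substituting these estimates leaves a double sum of the form $N \sum_{j > \tau^*} (A_j - A_{j-1}) \sum_{l \ge \lceil j/2\rceil - 1} A_{l-1}^{-(\alpha/2-3)}$. Using $A_{l-1} \ge (l-1)^\beta$ with $\alpha > 10$ and $\beta > 2$ (so that $\beta(\alpha/2-3) > 2$), both the inner $p$-series and the outer series converge: the inner sum evaluated near $l \approx j/2$ produces the factor $2^{\beta(\alpha/2-3)}$ once re-indexed against $A_j$, the increment bound $A_{2k} \le (2^\beta + 1) A_k$ supplies the factor $(2^\beta+1)$, and the residual $\zeta(2)$-type sums collapse to $\tfrac{\pi^2}{3}$, giving the claimed constant. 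The main obstacle is that $A_j = \lceil j^\beta\rceil$ is superlinear, so $A_\tau$ does not split additively across $\tau_{\mathrm{stab}}$ and $\tau_{\mathrm{spr}}$ (the cross term $\beta\,\tau_{\mathrm{stab}}^{\beta-1}\tau_{\mathrm{spr}}$ cannot be dropped); the halving in the tail split is what makes both pieces tractable, but it forces Lemma \ref{lem:errorprob} to be applied at the halved index $\lceil j/2\rceil - 1$, and the factor $2$ built into the definition $\tau^* = 2\max\{2,\max_m \tau_m^*\}$ is precisely what keeps $\lceil j/2\rceil - 1 \ge \tau_m^*$ for every $j > \tau^*$, so the lemma's phase condition stays valid across the entire tail. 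The remaining work is the bookkeeping needed to extract the stated constants from this double sum.
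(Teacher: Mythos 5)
Your proposal is correct and follows essentially the same route as the paper's proof: the halving split of $\{\tau \geq j\}$ into stabilization and spreading events, the union bounds over agents and over the $M$ bandits, the application of Lemma \ref{lem:errorprob} (valid on the tail precisely because of the factor $2$ in $\tau^*$), the telescoping of the spreading contribution to $\sum_{m\in[M]}\mathbb{E}[A_{2\tau_{\mathrm{spr},m}}]$, and the convergent double sum under $\alpha>10$, $\beta>2$ yielding the factors $(2^\beta+1)$, $2^{\beta(\alpha/2-3)}$, and $\pi^2/3$. The only difference is cosmetic: you organize $\mathbb{E}[A_\tau]$ as an Abel/telescoping sum over phases anchored at $\tau^*$, whereas the paper uses the tail sum $\sum_{t\geq 1}\mathbb{P}(A_\tau\geq t)$ over time indices and converts to phases via $A^{-1}$ — these are equivalent reformulations.
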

\begin{proof}
    The random variable $\tau$ has support in $\mathbb{N}$. For $\mathbb{N}$-valued random variables, we know that
    \begin{align}
        \mathbb{E}[A_{\tau}] &= \sum_{t \geq 1} \mathbb{P}(A_{\tau} \geq t),\nonumber\\
        &\stackrel{(a)}\leq \sum_{t \geq 1} \mathbb{P}(\tau \geq A^{-1}(t)),\nonumber\\
        &\leq \sum_{t \geq 1} \mathbb{P}(\tau_{\mathrm{stab}} + \tau_{\mathrm{spr}} \geq A^{-1}(t)), \nonumber \\
        &\leq \sum_{t \geq 1} \mathbb{P}\left(\tau_{\mathrm{stab}} \geq \frac{A^{-1}(t)}{2}\right) + \sum_{t \geq 1} \mathbb{P}\left( \tau_{\mathrm{spr}} \geq \frac{A^{-1}(t)}{2}\right), \nonumber\\
        &= \sum_{t \geq 1} \mathbb{P}\left(\tau_{\mathrm{stab}} \geq \frac{A^{-1}(t)}{2}\right) + \sum_{t \geq 1} \mathbb{P}\left(\bigcup_{m=1}^{M}\left(\tau_{\mathrm{spr}, m} \geq \frac{A^{-1}(t)}{2}\right)\right),\nonumber \\
        &\leq \sum_{t \geq 1} \mathbb{P}\left(\tau_{\mathrm{stab}} \geq \frac{A^{-1}(t)}{2}\right) + \sum_{m \in [M]}\sum_{t \geq 1} \mathbb{P}\left(\tau_{\mathrm{spr}, m} \geq \frac{A^{-1}(t)}{2}\right), \nonumber\\
        &\leq A_{\tau^{*}} + \sum_{t \geq A_{\tau^{*}} + 1} \mathbb{P}\left(\tau_{\mathrm{stab}} \geq \frac{A^{-1}(t)}{2}\right) + \sum_{m \in [M]}\mathbb{E}[A_{2\tau_{\mathrm{spr}, m}}], \label{eq:regfreeze}
    \end{align}
    where we use the definition of $A^{-1}(.)$ defined in equation (\ref{eq:commepochinv}) in step $(a)$. 
    Unlike \cite{aistats2020gossip}, we cannot bound the spreading time $\{\tau_{\mathrm{spr}, m}\}_{m \in [M]}$ of each of the $M$ best arms through a standard rumor spreading process. 
    Instead, we bound $\mathbb{E}[A_{2\tau_{\mathrm{spr}, m}}]$ for all $m \in [M]$ in Appendix \ref{subsec:gosineregpfconclude} by stochastically dominating the random variable $\tau_{\mathrm{spr}, m}$ through a fictitious noisy rumor spreading process, which is described in Section \ref{pfsk:gosine} and proved in Proposition \ref{prop:noisyrumorcouple}.
    
    Here, we focus on bounding the sum $\sum_{t \geq A_{\tau^{*}} + 1} \mathbb{P}\left(\tau_{\mathrm{stab}} \geq \frac{A^{-1}(t)}{2}\right)$. We will calculate $\mathbb{P}\left(\tau_{\mathrm{stab}} \geq x \right)$ for some fixed $x \geq \frac{\tau^{*}}{2}$ using Lemma \ref{lem:errorprob} and then bound the sum in the previous sentence.
    \begin{align*}
        \mathbb{P}\left(\tau_{\mathrm{stab}} \geq x \right) &\stackrel{(a)}= \mathbb{P}\left(\bigcup_{i \in [N]} \left(\tau_{\mathrm{stab}}^{(i)} \geq x\right) \right),\\
        &\leq \sum_{i=1}^{N} \mathbb{P}\left(\tau_{\mathrm{stab}}^{(i)} \geq x \right),\\
        &\stackrel{(b)}= \sum_{i=1}^{N} \mathbb{P}\left(\bigcup_{l \geq x} \left(\chi_{l}^{(i)}=1\right) \right),\\
        &\leq \sum_{i=1}^{N} \sum_{l \geq x} \mathbb{P}\left(\chi_{l}^{(i)}=1 \right),\\
        &\stackrel{(c)} \leq \sum_{i=1}^{N} \sum_{l \geq x} \frac{2}{\frac{\alpha}{2}-3} {K \choose 2} (S+1) \frac{1}{A_{l-1}^{\frac{\alpha}{2}-3}},\\
        &\leq \sum_{l \geq x} \frac{2N}{\frac{\alpha}{2}-3} {K \choose 2} (S+1) \frac{1}{A_{l-1}^{\frac{\alpha}{2}-3}},
    \end{align*}
    where we used the definitions of $\tau_{\mathrm{stab}}$ and $\tau_{\mathrm{stab}}^{(i)}$ in the steps $(a)$ and $(b)$ respectively, and Lemma \ref{lem:errorprob} in step $(c)$ because it holds for any phase $j \geq \frac{\tau^{*}}{2}$ by definition of $\tau^{*}$. Therefore,
    \begin{align*}
        \sum_{t \geq A_{\tau^{*}} + 1} \mathbb{P}\left(\tau_{\mathrm{stab}} \geq \frac{A^{-1}(t)}{2}\right) &\leq \sum_{t \geq A_{\tau^{*}} + 1} \sum_{l \geq \frac{A^{-1}(t)}{2}} \frac{2N}{\frac{\alpha}{2}-3} {K \choose 2} (S+1) \frac{1}{A_{l-1}^{\frac{\alpha}{2}-3}},\\
        &\stackrel{(a)}\leq \sum_{l \geq \frac{\tau^{*}}{2}} \sum_{t=A_{\tau^{*}} + 1}^{A_{2l}} \frac{2N}{\frac{\alpha}{2}-3} {K \choose 2} (S+1) \frac{1}{A_{l-1}^{\frac{\alpha}{2}-3}},\\
        &\leq \frac{2N}{\frac{\alpha}{2}-3} {K \choose 2} (S+1) \sum_{l \geq \frac{\tau^{*}}{2}} \frac{A_{2l}}{A_{l-1}^{\frac{\alpha}{2}-3}},\\
        &\stackrel{(b)}= \frac{2N}{\frac{\alpha}{2}-3} {K \choose 2} (S+1) \sum_{l \geq \frac{\tau^{*}}{2}} \frac{\lceil (2l)^{\beta} \rceil}{\lceil (l-1)^{\beta} \rceil^{\frac{\alpha}{2}-3}},\\
        &\stackrel{(c)}\leq \frac{2N}{\frac{\alpha}{2}-3} {K \choose 2} (S+1) \sum_{l \geq \frac{\tau^{*}}{2}} \frac{(2l)^{\beta}+1}{(l-1)^{\beta(\frac{\alpha}{2}-3)}},\\
        &\stackrel{(d)}\leq \frac{2N}{\frac{\alpha}{2}-3} {K \choose 2} (S+1) (2^{\beta}+1) 2^{\beta(\frac{\alpha}{2}-3)}\sum_{l \geq \frac{\tau^{*}}{2}} \frac{l^{\beta}}{l^{\beta(\frac{\alpha}{2}-3)}},\\
        &\stackrel{(e)}\leq \frac{2N (2^{\beta}+1) 2^{\beta(\frac{\alpha}{2}-3)}}{\frac{\alpha}{2}-3} {K \choose 2} (S+1) \sum_{l \geq 2} \frac{1}{l^{\beta(\frac{\alpha}{2}-4)}},\\
        &\leq \frac{N (2^{\beta}+1) 2^{\beta(\frac{\alpha}{2}-3)}}{\frac{\alpha}{2}-3} {K \choose 2} (S+1) \frac{\pi^2}{3}.
    \end{align*}
    Step $(a)$ follows by re-writing the range of summations. In step $(b)$, we use $A_{j} = \lceil j^{\beta} \rceil$. Step $(c)$ uses the property of ceiling function: $x \leq \lceil x \rceil < x+1$. Steps $(d)$ and $(e)$ use the fact that $l-1 \geq \frac{l}{2}$ for all $l \geq 2$ and $\tau^{*} \geq 4$. The last step follows under the assumption that $\alpha>10$ and $\beta>2$.

    Substituting the above bound in equation (\ref{eq:regfreeze}) completes the proof of Proposition \ref{prop:regfreeze}.
\end{proof}

\begin{prop}
    \label{prop:taubound}
    $\tau_{m}^{*}$ defined in Theorem \ref{thm:gosinereg} is bounded by
    \begin{equation*}
        \tau_{m}^{*} \leq 2+\left(\frac{1}{\beta}+\left(\frac{1}{\beta}+\frac{8\alpha}{\Delta_{m}^{2}}\right)\left(S+2\right)\right)^{\frac{1}{\beta-2}}.
    \end{equation*}
\end{prop}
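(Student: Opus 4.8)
The plan is to bound the infimum $\tau_m^*$ by exhibiting an explicit phase index at which the defining inequality $\frac{A_j - A_{j-1}}{S+2} \ge 1 + \frac{4\alpha}{\Delta_m^2}\log A_j$ holds: since $\tau_m^*$ is the smallest such $j$, any index where the inequality holds is an upper bound. The two ingredients are a polynomial lower bound on the phase-length increment $A_j - A_{j-1}$ and a matching upper bound on the exploration term $\log A_j$. For the increment I would write $A_j - A_{j-1} = \lceil j^\beta\rceil - \lceil (j-1)^\beta\rceil \ge j^\beta - (j-1)^\beta - 1$, and then apply the mean value theorem (equivalently, convexity of $x \mapsto x^\beta$) to get $j^\beta - (j-1)^\beta \ge \beta (j-1)^{\beta-1}$, so that $A_j - A_{j-1} \ge \beta (j-1)^{\beta-1} - 1$.

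The crux, and the step I expect to be the main obstacle, is controlling $\log A_j$ so that it combines cleanly with the $(\beta-2)$ exponent in the target. The right tool is the elementary inequality $\log x \le \frac{x^\gamma}{\gamma}$ (valid for $x \ge 1$, $\gamma > 0$, obtained from $\log y \le y-1$ with $y = x^\gamma$) applied with $\gamma = 1/\beta$, which gives $\log A_j \le \beta A_j^{1/\beta}$. Since $\lceil j^\beta\rceil \le j^\beta + 1 \le (j+1)^\beta$ for $\beta > 1$ and $j \ge 1$, this collapses to the clean bound $\log A_j \le \beta (j+1)$. This is precisely what converts the logarithm into a term of order $j$; a naive $\log A_j \approx \beta \log j$ bound would be tighter but would not produce a closed form with exponent $\frac{1}{\beta-2}$.

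Substituting both ingredients, the defining inequality is implied by $\beta(j-1)^{\beta-1} \ge 1 + (S+2) + (S+2)\frac{4\alpha\beta}{\Delta_m^2}(j+1)$, i.e., after dividing by $\beta$, by $(j-1)^{\beta-1} \ge \frac{1}{\beta} + \frac{S+2}{\beta} + (S+2)\frac{4\alpha}{\Delta_m^2}(j+1)$. I would then lower-bound the left-hand side by $(j-1)^{\beta-1} \ge (j-2)^{\beta-1} = (j-2)\,(j-2)^{\beta-2}$ and upper-bound the right-hand side using $j+1 \le 2(j-2)$, valid once $j \ge 5$; this comparison is exactly where the factor $2$ appears, turning the $\frac{4\alpha}{\Delta_m^2}$ into the $\frac{8\alpha}{\Delta_m^2}$ of the statement. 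The whole condition thereby reduces to the single polynomial inequality $(j-2)^{\beta-2} \ge E$, where $E := \frac{1}{\beta} + \left(\frac{1}{\beta} + \frac{8\alpha}{\Delta_m^2}\right)(S+2)$, whose solution is $j \ge 2 + E^{1/(\beta-2)}$.

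To finish, I would address the boundary bookkeeping: the reductions require $j \ge 5$ and an integer index, so strictly one checks the inequality at the smallest integer above $2 + E^{1/(\beta-2)}$ and notes that the only regime in which this threshold falls below $5$ is when $\beta$ is large, in which case the phase lengths $A_j - A_{j-1}$ already dominate the order-$j$ exploration term at a small index, so the inequality holds there as well. Once the dominant balance between the order-$(\beta-1)$ growth of $A_j - A_{j-1}$ and the order-$j$ bound on $\log A_j$ is fixed, these remaining steps are routine and yield the claimed bound on $\tau_m^*$.
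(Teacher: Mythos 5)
Your overall skeleton matches the paper's proof: exhibit an explicit phase index at which the defining inequality holds, lower-bound the increment by $A_j-A_{j-1}\geq \beta (j-1)^{\beta-1}-1$ via the mean value theorem, replace $\log A_j$ by a bound linear in $j$, and reduce everything to the single condition $(j-2)^{\beta-2}\geq E$ with $E=\frac{1}{\beta}+\left(\frac{1}{\beta}+\frac{8\alpha}{\Delta_m^2}\right)(S+2)$. The difference lies in how you linearize the logarithm, and that is where your write-up has a genuine, though fixable, gap. The paper uses $\log A_j \leq \log 2 + \beta\log j \leq 2\beta\log j \leq 2\beta(j-1)$; there the factor $2$ that turns $4\alpha$ into $8\alpha$ comes from absorbing $\log 2$ into $\beta\log j$, which only needs $j>2$, so the paper's entire chain of inequalities is valid for every integer $j$ above the claimed threshold (and that threshold is automatically $\geq 3$ since $E\geq 1$). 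Incidentally, this shows your side remark is off: the ``naive'' bound $\log A_j \leq \beta\log j + \log 2$ \emph{does} yield the closed form, once followed by $\log j \leq j-1$.

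Your route instead uses $\log A_j \leq \beta A_j^{1/\beta}\leq \beta(j+1)$ and extracts the factor $2$ from $j+1\leq 2(j-2)$, which requires $j\geq 5$. Consequently your argument only covers integers $j\geq \max\{5,\,2+E^{1/(\beta-2)}\}$, and whenever $2+E^{1/(\beta-2)}<5$ you must separately verify $j\in\{3,4\}$. You assert this case holds because ``the phase lengths already dominate,'' but you do not prove it, and it is not automatic: it rests on the theorem's hypotheses. Concretely, $\alpha>10$ and $\Delta_{m}\leq 1$ give $E\geq \frac{8\alpha}{\Delta_m^2}(S+2)\geq 160$, so the regime $E<3^{\beta-2}$ forces $\beta \geq 2+\log_3 160 \approx 6.6$; only then can one check, say at $j=3$, that $A_3-A_2\geq 3^\beta-2^\beta-1$ dominates $(S+2)\left(1+\frac{4\alpha}{\Delta_m^2}\log A_3\right)$, using $(S+2)\frac{8\alpha}{\Delta_m^2}<3^{\beta-2}$ and $S+2\leq \frac{\Delta_m^2}{8\alpha}3^{\beta-2}$. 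This verification (both that the bad regime forces $\beta$ large, and the check itself) is missing from your proposal. Either supply it, or simply switch to the paper's linearization $\log A_j\leq 2\beta(j-1)$, which keeps every term expressed in $j-1$ and $j-2$ and eliminates the boundary case altogether.
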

\begin{proof}
        From Theorem \ref{thm:gosinereg},
        \begin{equation*}
             \tau_{m}^{*} = \inf \left\{j \in \mathbb{N}: \frac{A_{j} - A_{j-1}}{S+2} \geq 1 + \frac{4\alpha}{\Delta_{m}^{2}}\log A_{j}\right\}.
        \end{equation*}
        For $A_{j} =  \lceil j^{\beta} \rceil$ and $j \geq 2+\left(\frac{1}{\beta}+ \left(\frac{1}{\beta}+\frac{8\alpha}{\Delta_{m}^{2}}\right)\left(S+2\right)\right)^{\frac{1}{\beta-2}}$,
        \begin{align*}
            1 + \frac{4\alpha}{\Delta_m^2}\log A_{j} &= 1 + \frac{4\alpha}{\Delta_m^2}\log \lceil j^{\beta} \rceil,\\
            &\stackrel{(a)}\leq 1 + \frac{4\alpha}{\Delta_m^2}\log (j^{\beta}+1),\\
            &\leq 1 + \frac{4\alpha}{\Delta_m^2}\log (2j^{\beta}),\\
            &= 1 + \frac{4\alpha}{\Delta_m^2}\log 2 + \frac{4\alpha\beta}{\Delta_m^2}\log j
            \leq 1+\frac{8\alpha\beta}{\Delta_m^2}\log j,
        \end{align*}
        where we use $\lceil x \rceil < x+1$ for any $x \in \mathbb{R}$ in step $(a)$ and $j \geq 2+\left(\frac{1}{\beta}+\left(\frac{1}{\beta}+\frac{8\alpha}{\Delta_{m}^{2}}\right)\left(S+2\right)\right)^{\frac{1}{\beta-2}} > 2$ in the last step. Therefore,
        \begin{align*}
            1 + (S+2) \left(1 + \frac{4\alpha}{\Delta_m^2}\log A_{j}\right) &\leq 1 + \left(1+\frac{8\alpha\beta}{\Delta_m^2}\log j\right)(S+2)\log j,\\
            &\leq \beta \left(\frac{j-1}{\beta} + \left(\frac{j-1}{\beta}+\frac{8\alpha}{\Delta_m^2}(j-1)\right)(S+2)\right),\\
            &\leq \beta (j-1)(j-2)^{\beta-2} 
            \leq \beta (j-1)^{\beta-1},
        \end{align*}
        by assumption on $j$ and $\log j \leq j-1$. Furthermore, 
        \begin{align*}
            A_{j} - A_{j-1} &= \lceil j^{\beta} \rceil - \lceil (j-1)^{\beta} \rceil,\\
            &\geq j^{\beta} - (j-1)^{\beta} -1,\\
            &=\beta \widehat{j}^{\beta-1} -1 \;\textrm{for some }\widehat{j} \in (j-1, j),\\
            &\geq \beta (j-1) ^{\beta-1} -1, 
        \end{align*}
        by Lagrange's Mean Value Theorem. Thus, we have shown
        $1 + \frac{4\alpha}{\Delta_m^2}\log A_{j} \leq \frac{\beta (j-1) ^{\beta-1} -1}{S+2} \leq  \frac{A_{j} - A_{j-1}}{S+2}$. This completes the proof of Proposition \ref{prop:taubound}.
    \end{proof}

\subsection{Coupling the Noisy Rumor Spreading Process with the Noiseless Process}
\label{sec:rumorspread}
\begin{prop}
    \label{prop:noisyrumorcouple}
    The random variable $\tau_{\mathrm{spr},m}$ is stochastically dominated by $\bar{\tau}_{\mathrm{spr},m}$ for all $m \in [M]$.
\end{prop}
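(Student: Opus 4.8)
The plan is to establish the stronger almost-sure statement that, on a suitable common probability space, $\tau_{\mathrm{spr},m} \leq \bar{\tau}_{\mathrm{spr},m}$; stochastic domination then follows immediately. Write $\mathcal{A}_{j} \subseteq \mathcal{I}_m$ for the set of agents in $\mathcal{I}_m$ whose active set contains $k^{*}(m)$ at phase $\tau_{\mathrm{stab}}+j$ in the real execution of Algorithm \ref{alg:gosine}, and recall that $\{\bar{\mathcal{R}}_{m,j}\}_{j\geq 0}$ is the fictitious noisy process of Section \ref{pfsk:gosine}. By Claim (\ref{prop:freeze}) and the active-set update rule, once $k^{*}(m)$ enters an active set after $\tau_{\mathrm{stab}}$ it is never dropped, so both $\{\mathcal{A}_j\}$ and $\{\bar{\mathcal{R}}_{m,j}\}$ are monotone (nondecreasing) set-valued processes. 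After first reducing to the worst case in which only $i_m^{*}$ is initially aware (the first layer of domination in the roadmap), both chains start from the singleton $\{i_m^{*}\}$ by Assumption \ref{assume:stickyset}. It therefore suffices to couple them so that $\bar{\mathcal{R}}_{m,j} \subseteq \mathcal{A}_j$ for every $j$, since then $\bar{\mathcal{R}}_{m,j}=\mathcal{I}_m$ forces $\mathcal{A}_j=\mathcal{I}_m$, whence $\tau_{\mathrm{spr},m} \leq \bar{\tau}_{\mathrm{spr},m}$.

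The comparison driving the coupling is a per-agent probability bound. Fix a phase and an agent $i\in\mathcal{I}_m$ not yet aware. In the real process $i$ contacts a uniformly random agent in $[N]\setminus\{i\}$, and by the definition of $\tau_{\mathrm{stab}}$ every already-aware in-cluster agent recommends $k^{*}(m)$; hence, taking the worst-case view that agents outside $\mathcal{I}_m$ never recommend $k^{*}(m)$, agent $i$ becomes aware with conditional probability at least $|\mathcal{A}_{j-1}|/(N-1)$. In the noisy process $i$ becomes aware with conditional probability $|\bar{\mathcal{R}}_{m,j-1}|\eta/N_m \leq |\bar{\mathcal{R}}_{m,j-1}|/(N-1)$, which is exactly the inequality recorded in Section \ref{pfsk:gosine}. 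Consequently, whenever $|\bar{\mathcal{R}}_{m,j-1}| \leq |\mathcal{A}_{j-1}|$, the noisy awareness probability of each agent is dominated by its real awareness probability.

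I would then construct the coupling by induction on $j$, maintaining $\bar{\mathcal{R}}_{m,j-1}\subseteq \mathcal{A}_{j-1}$. Agents already aware in the real chain require no attention, so fix $i\notin\mathcal{A}_{j-1}$ (hence $i\notin\bar{\mathcal{R}}_{m,j-1}$). Because the real awareness events are conditionally independent across agents given the history and, by the previous paragraph, dominate the corresponding noisy probabilities, I realize the noisy awareness indicator as a \emph{thinning} of the real one: agent $i$ is declared newly aware in the noisy chain only when it is newly aware in the real chain, retained with the appropriate conditional probability drawn from independent auxiliary randomness so that its overall conditional probability equals $|\bar{\mathcal{R}}_{m,j-1}|\eta/N_m$. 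This enforces the implication ``noisy-aware $\Rightarrow$ real-aware'' for each such $i$, preserving $\bar{\mathcal{R}}_{m,j}\subseteq\mathcal{A}_j$ and closing the induction. Combining this with the subsequent coupling of the noisy chain to a noiseless one (the third layer, via \cite{vial2022robust} and the bound of \cite{lattanzi}) yields the claimed domination.

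The main obstacle is verifying that this thinning does not corrupt the marginal law of $\{\bar{\mathcal{R}}_{m,j}\}$: the thinned indicators are built from the real dynamics (contacts over all $N$ agents), yet the resulting one-step transition must depend only on $\bar{\mathcal{R}}_{m,j-1}$ so that the coupled chain is exactly the intended noisy process. I would resolve this by checking that, conditioned on the full joint history, the probability of any prescribed set of newly-aware agents factorizes as $\prod q_i \prod (1-q_i)$ with $q_i = |\bar{\mathcal{R}}_{m,j-1}|\eta/N_m$, a quantity measurable with respect to the noisy history alone; the tower property then shows the conditional law given only the noisy history agrees with the intended transition kernel. The remaining care lies in justifying the worst-case lower bound on the real awareness probability (contacting any aware in-cluster agent suffices, by Claim (\ref{prop:freeze})) and in the $N_m$-versus-$N$ bookkeeping that produces the slack $\eta/N_m \leq 1/(N-1)$ needed for the domination to go through.
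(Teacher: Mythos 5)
Your proposal is correct and takes essentially the same route as the paper: the paper's proof of Proposition \ref{prop:noisyrumorcouple} is a one-line pointer to the construction in Section \ref{pfsk:gosine}, which is exactly the coupling you spell out — the worst-case lower bound $|\mathcal{A}_{j-1}|/(N-1)$ on the real awareness probability, per-agent Bernoulli thinning, and induction maintaining $\bar{\mathcal{R}}_{m,j}\subseteq\mathcal{A}_{j}$ so that $\tau_{\mathrm{spr},m}\leq\bar{\tau}_{\mathrm{spr},m}$ pathwise. If anything you supply more detail than the paper (the verification that thinning preserves the noisy chain's transition kernel, and monotonicity via Claim (\ref{prop:freeze})); the one subtlety both you and the paper gloss over identically is that $\tau_{\mathrm{stab}}$ is not a stopping time of the contact filtration, so aligning the coupling with the phases $\tau_{\mathrm{stab}}+j$ and speaking of "the conditional law given the history" there requires additional care.
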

\begin{proof}
    Follows from the construction of the fictitious noisy rumor spreading process in Section \ref{pfsk:gosine}.
\end{proof}

Let $G_{m}$ denote the gossip matrix of the subgraph of the agents learning the $m^{\mathrm{th}}$ bandit.
In our work, $G_{m}$ is a complete graph of size $N_m$, i.e., $G_{m}(i,n) = (N_m - 1)^{-1}$ for all $i \in \mathcal{I}_{m}$ and $n \in \mathcal{I}_{m} \backslash \{i\}$.
Similar to \cite{vial2022robust}, we begin by defining the variables pertinent to the fictitious noisy rumor spreading process, followed by coupling the fictitious noisy rumor spreading process with a fictitious noiseless process unfolding on $G_{m}$. The fictitious noiseless process is defined in the same way as the fictitious noisy process in Section \ref{pfsk:gosine}, but with $\eta = 1$.

\begin{definition}
    For each $i \in \mathcal{I}_{m}$, let $\{Y_{j}^{(i)}\}_{j=1}^{\infty}$ be i.i.d. Bernoulli $(\eta)$ random variables (with $\eta = \frac{N_m - 1}{N-1}$) and $\{\bar{H}_{j}^{(i)}\}_{j=1}^{\infty}$ be i.i.d. random variables chosen uniformly at random from $\mathcal{I}_{m} \backslash \{i\}$. We define $\bar{\mathcal{R}}_{m, j}$ as follows: $\bar{\mathcal{R}}_{m, 0} = \{i_{m}^{*}\}$ (from Assumption \ref{assume:stickyset}), and
    \begin{align*}
        \bar{\mathcal{R}}_{m, j} = \bar{\mathcal{R}}_{m, j-1} \cup \{i \in \mathcal{I}_{m} \backslash \bar{\mathcal{R}}_{m, j-1}: \bar{Y}_{j}^{(i)} = 1, \bar{H}_{j}^{(i)} \in \bar{\mathcal{R}}_{m, j-1}\}\;\forall\; j\in \mathbb{N}.
    \end{align*}
    Then, we can define $\tau_{\mathrm{spr},m} = \inf\{j \in \mathbb{N}: \bar{\mathcal{R}}_{m, j} = \mathcal{I}_{m}\}$.
\end{definition}

We now couple the fictitious noisy rumor spreading process introduced in the proof sketch with the fictitious noiseless process to obtain a bound on $\mathbb{E}[A_{2\bar{\tau}_{\mathrm{spr},m}}]$, which will also bound $\mathbb{E}[A_{2\tau_{\mathrm{spr},m}}]$ following Proposition \ref{prop:noisyrumorcouple}. We describe the fictitious noiseless process, which is restated from \cite{vial2022robust} for the sake of completeness.
Fix a bandit $m \in [M]$. 
Let $\{\underline{H}_{j}^{(i)}\}_{j=1}^{\infty}$ be i.i.d. Uniform $(N_{\mathrm{hon}}(i))$ random variables for each $i \in \mathcal{I}_{m}$. Note that in our setting, $N_{\mathrm{hon}}(i) = \mathcal{I}_{m} \backslash \{i\}$. Let
\begin{align*}
    \underline{\mathcal{R}}_{m,0} = \{i_{m}^{*}\}, \underline{\mathcal{R}}_{m,j} = \underline{\mathcal{R}}_{m,j-1} \cup \{i \in \mathcal{I}_{m} \backslash \underline{\mathcal{R}}_{m,j-1}: \underline{H}_{j}^{(i)} \in \underline{\mathcal{R}}_{m,j-1}\}\; \forall\; j \in \mathbb{N},
\end{align*}
and let $\underline{\tau}_{\mathrm{spr},m} = \inf\{j \in \mathbb{N}: \underline{\mathcal{R}}_{m, j} = \mathcal{I}_{m}\}$.\\

Next, we define the variables pertinent tor the coupling between the fictitious noisy and the fictitious noiseless processes.
Let
\begin{align*}
    \sigma_{0} = 0, \sigma_{l} = \inf \left\{ j > \sigma_{l-1}: \min_{i \in \mathcal{I}_{m}} \sum_{s=\sigma_{l-1}+1}^{j} \bar{Y}_{s}^{(i)} \geq 1\right\}\; \forall \; l \in \mathbb{N}.
\end{align*}
Furthermore, for each $i \in \mathcal{I}_{m}$ and $l \in \mathbb{N}$, let $Z_{l}^{(i)} = \min\{j \in \{1+\sigma_{l-1}, \cdots, \sigma_{l}\}: Y_{j}^{(i)}=1\}$.
Note that $\{Z_{l}^{(i)}\}_{i \in \mathcal{I}_{m}, l \in \mathbb{N}}$ is non-empty, and since $Z_{l}^{(i)}$ is a deterministic function of $\{\bar{Y}_{j}^{(i)}\}_{j=1}^{\infty}$ (which is independent of $\{\bar{H}_{j}^{(i)}\}_{j=1}^{\infty}$), $\{\bar{H}_{Z_{l}^{(i)}}^{(i)}\}_{j=1}^{\infty}$ is also Uniform$(\mathcal{I}_{m} \backslash \{i\})$ for each $l \in \mathbb{N}$.
Thus, we can set
\begin{gather*}
    \underline{H}_{j}^{(i)} = 
    \begin{cases}
    \bar{H}_{Z_{l}^{(i)}}^{(i)} &\text{if } j=Z_{l}^{(i)} \text{ for some } l \in \mathbb{N}\\
    \text{Uniform}(\mathcal{I}_{m} \backslash \{i\}) &\text{otherwise}
    \end{cases}
\end{gather*}
without changing the distribution of $\{\underline{\mathcal{R}}_{m,j}\}_{j=0}^{\infty}$.
This results in a coupling where the fictitious noiseless process dominates the fictitious noisy process, as follows:

\begin{prop}
\label{prop:couplingnoisynoiseless}
    (Claim 5 in \cite{vial2022robust}) For the coupling described above, $\underline{\mathcal{R}}_{m,j} \subset \bar{\mathcal{R}}_{m,\sigma_{j}}$ for all $j \in \mathbb{N}$.
\end{prop}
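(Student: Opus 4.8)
The plan is to prove the containment $\underline{\mathcal{R}}_{m,j} \subset \bar{\mathcal{R}}_{m,\sigma_j}$ by induction on the round index $j$, leaning on two structural facts. The first is monotonicity: both processes update by taking unions, so $\bar{\mathcal{R}}_{m,s} \subset \bar{\mathcal{R}}_{m,s'}$ whenever $s \leq s'$, and likewise for $\underline{\mathcal{R}}_{m,\cdot}$. The second is the definition of the stopping times: by construction of $\sigma_l$, every agent $i \in \mathcal{I}_m$ registers at least one success $\bar{Y}_s^{(i)} = 1$ during $\{\sigma_{l-1}+1,\ldots,\sigma_l\}$, so the first-success time $Z_j^{(i)}$ is well-defined and satisfies $\sigma_{j-1} < Z_j^{(i)} \leq \sigma_j$. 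The coupling then identifies the round-$j$ noiseless contact of agent $i$ with the noisy contact it makes at that first success, i.e.\ $\underline{H}_j^{(i)} = \bar{H}_{Z_j^{(i)}}^{(i)}$; this identity is the hinge of the whole argument.

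For the base case, $\sigma_0 = 0$ gives $\underline{\mathcal{R}}_{m,0} = \{i_m^*\} = \bar{\mathcal{R}}_{m,0} = \bar{\mathcal{R}}_{m,\sigma_0}$. For the inductive step I would assume $\underline{\mathcal{R}}_{m,j-1} \subset \bar{\mathcal{R}}_{m,\sigma_{j-1}}$ and fix an arbitrary $i \in \underline{\mathcal{R}}_{m,j}$, splitting into two cases. If $i$ was already informed, $i \in \underline{\mathcal{R}}_{m,j-1}$, then the induction hypothesis together with $\sigma_{j-1} \leq \sigma_j$ and monotonicity yields $i \in \bar{\mathcal{R}}_{m,\sigma_{j-1}} \subset \bar{\mathcal{R}}_{m,\sigma_j}$ immediately.

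The substantive case is a freshly informed agent, $i \in \underline{\mathcal{R}}_{m,j} \setminus \underline{\mathcal{R}}_{m,j-1}$, for which the noiseless update rule forces $\underline{H}_j^{(i)} \in \underline{\mathcal{R}}_{m,j-1}$. Chaining the coupling identity with the induction hypothesis gives $\bar{H}_{Z_j^{(i)}}^{(i)} = \underline{H}_j^{(i)} \in \underline{\mathcal{R}}_{m,j-1} \subset \bar{\mathcal{R}}_{m,\sigma_{j-1}}$, and since $Z_j^{(i)} - 1 \geq \sigma_{j-1}$, monotonicity upgrades this to $\bar{H}_{Z_j^{(i)}}^{(i)} \in \bar{\mathcal{R}}_{m,\,Z_j^{(i)}-1}$. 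Because $\bar{Y}_{Z_j^{(i)}}^{(i)} = 1$ by the very definition of the first-success time, the noisy update rule now fires at time $Z_j^{(i)}$, placing $i \in \bar{\mathcal{R}}_{m,\,Z_j^{(i)}}$; a final application of monotonicity with $Z_j^{(i)} \leq \sigma_j$ delivers $i \in \bar{\mathcal{R}}_{m,\sigma_j}$, closing the induction.

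The main obstacle is bookkeeping rather than conceptual: one must keep three distinct time indices straight --- the noiseless round $j$, the noisy first-success time $Z_j^{(i)}$, and the interval endpoint $\sigma_j$ --- and invoke monotonicity at exactly the right moments (to pass from $\sigma_{j-1}$ up to $Z_j^{(i)}-1$, and from $Z_j^{(i)}$ up to $\sigma_j$). The coupling is engineered precisely so that the success event $\bar{Y}_{Z_j^{(i)}}^{(i)} = 1$ and the informed-contact event $\bar{H}_{Z_j^{(i)}}^{(i)} \in \bar{\mathcal{R}}_{m,\,Z_j^{(i)}-1}$ co-occur at the single time $Z_j^{(i)}$, which is what lets the noisy update mimic one noiseless step; verifying that both events hold simultaneously under the induction hypothesis is the crux, and everything else is routine.
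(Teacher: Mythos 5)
Your proof is correct and is essentially the argument behind Claim 5 of \cite{vial2022robust}, which the paper cites in lieu of giving its own proof: induction on $j$, using monotonicity of both set-valued processes, the well-definedness and bounds $\sigma_{j-1} < Z_j^{(i)} \leq \sigma_j$ guaranteed by the definition of $\sigma_j$, and the coupling identity $\underline{H}_j^{(i)} = \bar{H}_{Z_j^{(i)}}^{(i)}$ to convert each fresh noiseless infection into a successful noisy contact at time $Z_j^{(i)}$. You also implicitly read the paper's statement of the coupling (which has an index slip, conditioning on ``$j = Z_l^{(i)}$'') in its intended form, which is the right interpretation and the one under which the induction closes.
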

Proposition \ref{prop:couplingnoisynoiseless} allows us to relate the rumor spreading time of the fictitious noisy and the fictitious noiseless processes, denoted by $\bar{\tau}_{\mathrm{spr},m}$ and $\underline{\tau}_{\mathrm{spr},m}$.
In order to do so, we restate the following result from \cite{vial2022robust} in the context of our setting.

\begin{prop}
\label{prop:sprtimerelate}
    (Claim 6 in \cite{vial2022robust}) For any $j \geq 3$ and $\iota > 1$, we have $\mathbb{P}\left(\bar{\tau}_{\mathrm{spr},m} > \frac{\iota j \log j}{\eta}\right) \leq \mathbb{P}(\underline{\tau}_{\mathrm{spr},m}) + 27 c_2 \frac{N}{M}j^{1-\iota}$.
\end{prop}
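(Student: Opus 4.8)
The plan is to combine the coupling already established in Proposition \ref{prop:couplingnoisynoiseless} with a tail bound on the stopping times $\sigma_l$, mirroring Claim 6 of \cite{vial2022robust} applied to the subgraph $G_m$ with $\eta = \frac{N_m-1}{N-1}$ and $N_m \le c_2 \frac{N}{M}$ agents. (I would first note that the right-hand side of the statement should read $\mathbb{P}(\underline{\tau}_{\mathrm{spr},m} > j) + 27 c_2 \frac{N}{M} j^{1-\iota}$, the event having been dropped; throughout, set $T = \frac{\iota j \log j}{\eta}$.) The first step is a coupling reduction. Since the noisy process lives on $\mathcal{I}_m$ (so $\bar{\mathcal{R}}_{m,\cdot} \subseteq \mathcal{I}_m$) and $l \mapsto \sigma_l$ is strictly increasing, Proposition \ref{prop:couplingnoisynoiseless} gives that on the event $\{\underline{\tau}_{\mathrm{spr},m} \le j\}$ we have $\mathcal{I}_m = \underline{\mathcal{R}}_{m,\underline{\tau}_{\mathrm{spr},m}} \subseteq \bar{\mathcal{R}}_{m,\sigma_{\underline{\tau}_{\mathrm{spr},m}}} \subseteq \bar{\mathcal{R}}_{m,\sigma_j}$, forcing $\bar{\mathcal{R}}_{m,\sigma_j} = \mathcal{I}_m$ and hence $\bar{\tau}_{\mathrm{spr},m} \le \sigma_j$. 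Conditioning on whether $\underline{\tau}_{\mathrm{spr},m} \le j$ then yields
\[
\mathbb{P}(\bar{\tau}_{\mathrm{spr},m} > T) \le \mathbb{P}(\underline{\tau}_{\mathrm{spr},m} > j) + \mathbb{P}(\sigma_j > T),
\]
so the whole problem reduces to bounding $\mathbb{P}(\sigma_j > T)$ by $27 c_2 \frac{N}{M} j^{1-\iota}$.

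For this second (and main) step I would use a block-covering argument. Partition $\{1,\dots,\lfloor T\rfloor\}$ into $j$ consecutive blocks of length $b = \lfloor T/j\rfloor = \lfloor \frac{\iota \log j}{\eta}\rfloor$, and consider the favorable event that each block contains at least one success $\bar{Y}^{(i)}_s = 1$ for \emph{every} agent $i \in \mathcal{I}_m$. The key lemma is that this event implies $\sigma_j \le T$: by induction on $l$, if $\sigma_{l-1}$ is at most the end of block $l-1$, then round $l$ begins no later than the start of block $l$, and since block $l$ supplies a success for every agent, the infimum defining $\sigma_l$ is reached by the end of block $l$; hence $\sigma_l \le$ (end of block $l$) for all $l \le j$, giving $\sigma_j \le \lfloor T\rfloor \le T$. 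Taking complements and a union bound over the $j$ blocks and the $N_m \le c_2\frac{N}{M}$ agents, with $(1-\eta)^b \le e^{-\eta b} \le e^{\eta} j^{-\iota}$, bounds $\mathbb{P}(\sigma_j > T)$ by an absolute constant times $c_2 \frac{N}{M} j^{1-\iota}$.

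The hard part is precisely this block-covering lemma: one must argue cleanly that a valid covering window for the $l$-th greedy round is contained in the $l$-th block, so that completing all $j$ rounds is guaranteed by the per-block coverage event; the remaining tail estimate is routine Chernoff/union-bound bookkeeping. The explicit constant $27$ is not essential to the argument — it comes from absorbing the $\lfloor\cdot\rfloor$ rounding in $b$, the $e^{\eta}$ factor (with $\eta \le 1$), and the requirement $b \ge 1$, which is where the hypothesis $j \ge 3$ is used — so I would track it exactly as in \cite{vial2022robust} rather than reproduce it here. Combining the two displayed bounds then gives the claimed inequality, and via Proposition \ref{prop:noisyrumorcouple} this tail control on $\bar{\tau}_{\mathrm{spr},m}$ transfers to the real spreading time $\tau_{\mathrm{spr},m}$, which is what is ultimately needed to bound $\mathbb{E}[A_{2\tau_{\mathrm{spr},m}}]$ in Proposition \ref{prop:regfreeze}.
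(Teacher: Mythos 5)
Your proposal is correct, and it actually supplies more than the paper does: the paper never writes out a proof of this proposition, but simply restates Claim 6 of \cite{vial2022robust} and refers the reader to Appendix D.2 of that work, with the complete-subgraph substitutions ($d = N_m - 1$, conductance $\phi = \frac{N_m}{2(N_m-1)}$, and $\eta = \frac{N_m-1}{N-1}$). Your two steps are exactly the structure underlying that cited result. Step one, the decomposition $\mathbb{P}\left(\bar{\tau}_{\mathrm{spr},m} > T\right) \leq \mathbb{P}\left(\underline{\tau}_{\mathrm{spr},m} > j\right) + \mathbb{P}\left(\sigma_j > T\right)$ with $T = \frac{\iota j \log j}{\eta}$, follows correctly from Proposition \ref{prop:couplingnoisynoiseless} together with monotonicity of the processes and of $l \mapsto \sigma_l$, and you were right to flag the typo in the statement (the event ``$> j$'' is indeed missing from $\mathbb{P}(\underline{\tau}_{\mathrm{spr},m})$). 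Step two, the block-covering bound, is sound: your induction showing that per-block coverage forces $\sigma_l \leq lb$ is valid, and it can be phrased equivalently by noting that the increments $\sigma_l - \sigma_{l-1}$ are i.i.d.\ maxima of $N_m$ geometric($\eta$) first-success times, so $\{\sigma_j > jb\}$ forces some increment to exceed $b$; either way one gets $\mathbb{P}(\sigma_j > T) \leq j N_m (1-\eta)^b \leq e^{\eta} N_m j^{1-\iota} \leq e \, c_2 \frac{N}{M} j^{1-\iota}$, using $b = \left\lfloor \frac{\iota \log j}{\eta} \right\rfloor \geq 1$ (which is where $j \geq 3$ and $\iota > 1$ are needed) and $N_m \leq c_2 \frac{N}{M}$. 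Since $e < 27$, your self-contained argument in fact yields a sharper constant than the stated $27$, so deferring to \cite{vial2022robust} for the constant is unnecessary; the one thing to keep in mind is that the paper's downstream use (Proposition \ref{prop:sprtimebound}) also requires the companion bound on $\mathbb{P}(\underline{\tau}_{\mathrm{spr},m} > j)$ for the noiseless process, which is where the cited conductance-based machinery of \cite{vial2022robust, lattanzi} still enters.
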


We now state the result bounding $\mathbb{E}[A_{2\bar{\tau}_{\mathrm{spr},m}}]$ with $A_j = \lceil j^\beta \rceil$.

\begin{prop}
    \label{prop:sprtimebound}
    Under the conditions of Theorem \ref{thm:gosinereg}, $\mathbb{E}[A_{2\bar{\tau}_{\mathrm{spr},m}}]$ scales as 
    $O\left(\left(M\left(\log \frac{N}{M}\right)^{2} \log \left(\log \frac{N}{M}\right)\right)^{\beta}\right)$, where $O(.)$ only hides the absolute constants.
\end{prop}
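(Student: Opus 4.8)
The plan is to reduce the claim to a $\beta$-th moment bound on $\bar{\tau}_{\mathrm{spr},m}$ and then control that moment through a tail integral. Since $A_j = \lceil j^{\beta}\rceil \le j^{\beta}+1$, I would first write $\mathbb{E}[A_{2\bar{\tau}_{\mathrm{spr},m}}] \le 2^{\beta}\,\mathbb{E}[\bar{\tau}_{\mathrm{spr},m}^{\beta}] + 1$, so it suffices to show $\mathbb{E}[\bar{\tau}_{\mathrm{spr},m}^{\beta}] = O\big((M(\log\frac{N}{M})^{2}\log\log\frac{N}{M})^{\beta}\big)$. Writing $L := \log\frac{N}{M}$ and recalling $N_m = \Theta(N/M)$, note that $\log N_m = \Theta(L)$ and $1/\eta = (N-1)/(N_m-1) = \Theta(M)$; these are the two scales that will combine to produce the stated bound.

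The core ingredient is a tail bound on $\bar{\tau}_{\mathrm{spr},m}$ obtained by feeding an existing noiseless rumor-spreading bound into Proposition \ref{prop:sprtimerelate}. By \cite{lattanzi}, the noiseless process on the complete graph $K_{N_m}$ finishes in $O(\log N_m)$ rounds with high probability, with a rapidly decaying tail; in particular there is a $j = \Theta(L)$ with $\mathbb{P}(\underline{\tau}_{\mathrm{spr},m} > j)$ polynomially small in $N/M$. Plugging this $j$ together with a choice $\iota = \Theta(L) = \Theta(\log\frac{N}{M})$ into Proposition \ref{prop:sprtimerelate} makes the second error term $27 c_2 \frac{N}{M} j^{1-\iota}$ negligible (since $\iota\log j \gg \log\frac{N}{M}$), yielding
\[
 \mathbb{P}\left(\bar{\tau}_{\mathrm{spr},m} > \frac{\iota j \log j}{\eta}\right) \le \delta ,
\]
where $\delta$ is polynomially small in $N/M$. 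The threshold here is exactly $\frac{\iota j \log j}{\eta} = \Theta(M)\cdot\Theta(L)\cdot\Theta(L)\cdot\Theta(\log L) = \Theta\big(M L^{2}\log L\big)$, which is the claimed scale: the factor $1/\eta$ supplies $M$, the product $\iota\cdot j$ supplies $L^{2}=(\log\frac{N}{M})^{2}$, and $\log j$ supplies the extra $\log L = \log\log\frac{N}{M}$.

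I would then convert this into the moment via $\mathbb{E}[\bar{\tau}_{\mathrm{spr},m}^{\beta}] = \int_{0}^{\infty}\beta s^{\beta-1}\mathbb{P}(\bar{\tau}_{\mathrm{spr},m} > s)\,ds$, splitting the integral at $s^{*} := \Theta(M L^{2}\log L)$. The contribution below $s^{*}$ is at most $(s^{*})^{\beta}$. Above $s^{*}$, I would let the parameters $(j,\iota)$ grow with $s$ so that the tail of $\bar{\tau}_{\mathrm{spr},m}$ decays geometrically/sub-exponentially: the $j^{1-\iota}$ factor in Proposition \ref{prop:sprtimerelate} decays in $\iota$, and the noiseless floor $\mathbb{P}(\underline{\tau}_{\mathrm{spr},m} > j)$ decays in $j$ by \cite{lattanzi}. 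Integrating such a tail against $\beta s^{\beta-1}$ produces only an additional $\Gamma(\beta+1)$-type constant, leaving the bound at $O((s^{*})^{\beta})$, and the $+1$ from the ceiling function is absorbed.

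The main obstacle is the polynomial-in-$(N/M)$ prefactor $27 c_2 \frac{N}{M}$ in Proposition \ref{prop:sprtimerelate}. A bare high-probability statement would need only $\iota = \Omega(L/\log L)$ to suppress this factor at a single threshold, but controlling the full $\beta$-th moment requires that the prefactor stay beaten \emph{uniformly} along the entire tail, which is what forces the larger and cleaner choice $\iota = \Theta(\log\frac{N}{M})$ together with a joint scaling of $j$ and $\iota$. Getting this balance right---simultaneously suppressing the $N/M$ prefactor, keeping the noiseless floor $\mathbb{P}(\underline{\tau}_{\mathrm{spr},m} > j)$ summable against $s^{\beta-1}$, and not inflating the threshold beyond $M L^{2}\log L$---is the delicate step; everything else is bookkeeping with the ceiling function and the Gamma-integral constants.
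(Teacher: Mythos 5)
Your proposal is correct and follows essentially the same route as the paper: the paper proves Proposition \ref{prop:sprtimebound} by citing Claims 7 and 8 in Appendix D.2 of \cite{vial2022robust}, specialized to the complete graph ($d = N_m - 1$, $\phi = \frac{N_m}{2(N_m-1)}$, $\eta = \frac{N_m-1}{N-1}$, $N_m = \Theta(N/M)$), and that cited argument is precisely the combination you reconstruct --- Proposition \ref{prop:sprtimerelate} with $j, \iota = \Theta\left(\log\frac{N}{M}\right)$ plus the noiseless bound of \cite{lattanzi}, converted to a $\beta$-th moment by a tail integral split at $\Theta\left(M\left(\log\frac{N}{M}\right)^2\log\log\frac{N}{M}\right)$. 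The only difference is that you spell out the bookkeeping the paper delegates to the citation.
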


We refer the interested reader to \cite{vial2022robust} (Appendix D.2) for the details about the proofs of Propositions \ref{prop:sprtimerelate} and \ref{prop:sprtimebound}. 
The results in the Appendix D.2 of \cite{vial2022robust} (Claims 7 and 8 in particular) are for $d$-regular graphs with conductance $\phi$.
We would like to point out that in our setting, for each bandit $m \in [M]$, $G_{m}$ is the gossip matrix of a complete graph of size $N_m$.
Given that a complete graph of size $N_m$ is a $d$-regular graph with $d = N_m - 1$, hence the conductance $\phi = \frac{N_m}{2(N_m -1)}$. 
Proposition \ref{prop:sprtimebound} follows by substituting the aforementioned values of $d$ and $\phi$, along with $\eta$ and using the assumption that $N_m = \Theta(\frac{N}{M})$ in the results in the Appendix D.2 of \cite{vial2022robust}.

\subsection{Completing the proof of Theorem \ref{thm:gosinereg}}
\label{subsec:gosineregpfconclude}
Propositions \ref{prop:noisyrumorcouple} and \ref{prop:sprtimebound} imply that $\mathbb{E}[A_{2\tau_{\mathrm{spr},m}}]$ also scales as $O\left(\left(M\left(\log \frac{N}{M}\right)^{2} \log \left(\log \frac{N}{M}\right)\right)^{\beta}\right)$.
Combining the above observation with Propositions \ref{prop:regfreeze} and \ref{prop:regdecomp} completes the proof of Theorem \ref{thm:gosinereg}.

\section{Random Initialization of Sticky Sets}
\label{appdx:stickysetsize}
\begin{prop}
\label{prop:bestarmstickyset}
    If $S=\left\lceil \frac{MK}{c_1 N} \log \frac{M}{\gamma} \right\rceil$ for some $\gamma \in (0, 1)$ and we construct $\widehat{S}^{(i)}$ for each agent $i \in [N]$ by sampling $S$ arms independently and uniformly at random from $K$ arms, then Assumption \ref{assume:stickyset} holds with probability at least $1-\gamma$.
\end{prop}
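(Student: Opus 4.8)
The plan is a standard first-moment/union-bound argument over the $M$ bandits. For a fixed bandit $m \in [M]$, I would define the failure event $E_m = \{ k^*(m) \notin \widehat{S}^{(i)} \text{ for all } i \in \mathcal{I}_m \}$, i.e.\ the event that no agent learning bandit $m$ holds its best arm in the sticky set. Note that $E_m$ is precisely the negation of the existence condition in Assumption \ref{assume:stickyset} for bandit $m$, so the complement of $\bigcup_{m \in [M]} E_m$ is exactly the event that Assumption \ref{assume:stickyset} holds. Hence it suffices to show $\mathbb{P}(E_m) \leq \gamma / M$ for each $m$ and then apply a union bound, yielding $\mathbb{P}\big(\bigcup_{m} E_m\big) \leq M \cdot (\gamma/M) = \gamma$.

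First I would bound $\mathbb{P}(E_m)$. Since each sticky set $\widehat{S}^{(i)}$ is a uniformly random size-$S$ subset of $[K]$ and the fixed arm $k^*(m)$ lies in it with probability $S/K$, we have $\mathbb{P}(k^*(m) \notin \widehat{S}^{(i)}) = 1 - S/K$. Because the sticky sets are drawn independently across agents, $\mathbb{P}(E_m) = (1 - S/K)^{N_m} \leq e^{-S N_m / K}$, using $1 - x \leq e^{-x}$.

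It remains to verify that the exponent $S N_m / K$ is at least $\log(M/\gamma)$. Here I would invoke the two structural facts: the lower bound $N_m \geq c_1 N / M$ from the problem setup, and the definition $S = \lceil \frac{MK}{c_1 N} \log \frac{M}{\gamma} \rceil \geq \frac{MK}{c_1 N} \log \frac{M}{\gamma}$. Combining these gives $\frac{S N_m}{K} \geq \frac{1}{K} \cdot \frac{MK}{c_1 N}\log\frac{M}{\gamma} \cdot \frac{c_1 N}{M} = \log\frac{M}{\gamma}$, so that $\mathbb{P}(E_m) \leq e^{-\log(M/\gamma)} = \gamma/M$, as needed.

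Since every step is elementary, I do not expect a genuine obstacle; the only point requiring care is the sampling convention. The argument above uses size-$S$ subsets (giving inclusion probability exactly $S/K$), but it is robust: if instead each agent samples $S$ arms with replacement, then $\mathbb{P}(k^*(m) \notin \widehat{S}^{(i)}) = (1 - 1/K)^S$ and hence $\mathbb{P}(E_m) = (1-1/K)^{S N_m} \leq e^{-S N_m / K}$, which is the same bound, so the conclusion is unchanged.
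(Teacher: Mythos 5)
Your proof is correct and follows essentially the same route as the paper: compute $\mathbb{P}(k^*(m) \notin \widehat{S}^{(i)}) = 1 - S/K$ for a uniform size-$S$ subset, use independence across agents to get $(1-S/K)^{N_m} \leq e^{-SN_m/K}$, apply $N_m \geq c_1 N/M$ and the choice of $S$, and union bound over the $M$ bandits. If anything, your exponent bookkeeping is cleaner than the paper's (whose intermediate bound $e^{-N_m S/(MK)}$ appears to carry a typographical extra factor of $M$), but the argument is the same.
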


\begin{proof}
Fix a bandit $m \in [M]$ and let $i \in \mathcal{I}_{m}$. Then,
\begin{align*}
    \mathbb{P}(k^{*}(m) \notin \widehat{S}^{(i)}) = \frac{{K-1 \choose S}}{{K \choose S}} =\frac{K-S}{K} = 1-\frac{S}{K}.
\end{align*}
Let $\widehat{E}_{m}$ denote the event that $k^{*}(m) \notin \widehat{S}^{(i)}$ 
 for all $i \in \mathcal{I}_{m}$.Then,
\begin{align*}
     \mathbb{P}(\widehat{E}_{m}) = \mathbb{P}\left(\bigcap_{i \in \mathcal{I}_{m}} (k^{*}(m) \notin \widehat{S}^{(i)})\right) &= \prod_{i \in \mathcal{I}_{m}} \mathbb{P}(k^{*}(m) \notin \widehat{S}^{(i)}) ,\\
      &= \left(1-\frac{S}{K}\right)^{N_m} \leq e^{-\frac{N_m S}{MK}} \leq e^{-c_1\frac{NS}{MK}}.
\end{align*}
In order for Assumption \ref{assume:stickyset} to fail, there must be at least one bandit for which no agent learning that bandit has the best arm.
Therefore, Assumption \ref{assume:stickyset} fails with probability
\begin{align*}
    \mathbb{P}\left(\bigcup_{m \in [M]} \widehat{E}_{m}\right) &\leq \sum_{m \in [M]} \mathbb{P}(\widehat{E}_{m}),\\
    &\leq Me^{-c_1 \frac{NS}{MK}}.
\end{align*}
Setting $S = \left\lceil \frac{MK}{c_1 N} \log \frac{M}{\gamma} \right\rceil$ completes the proof.
\end{proof}

\section{Proof of Theorem \ref{thm:gosinesideinforeg}}
\label{pf:gosinesideinforeg}
We need additional notation for proving Theorem \ref{alg:gosinesideinfo}, particularly due to the complicated active set updates at the end of a phase.
Let $\{same(z)\}\}_{z \in [\frac{N}{r}]}$ be a partition of the set of all the agents $[N]$ consisting of $\frac{N}{r}$ sets, such that: (i) $|same(z)|  = r$ for all $z \in [\frac{N}{r}]$, and (ii) for any $i$, $i^{'} \in same(z)$ such that $i \neq i^{'}$, $c(i)=c(i^{'})$ and if $i \in f(i^{'})$, then $i^{'} \in f(i)$.
In words, for each $z$, $same(z)$ consists of agents learning the same bandit, which each of the agents in $same(z)$ are aware of, i.e., for some $z \in [\frac{N}{r}]$, $same(z) = \{i, f(i)\} = \{i \cup f(i)\}$ for some $i \in [N]$. 
Similar to the proof of Theorem \ref{thm:gosinereg}, we define some random times, which will help us prove Theorem \ref{thm:gosinesideinforeg}.
\begin{align*}
    \tau_{\mathrm{stab}}^{(i)} &= \inf\{ j \geq j^{*}: \forall l \geq j, \chi_{l}^{(i)} = 0\}, \\
    \tau_{\mathrm{stab}} &= \max_{i \in [N]} \tau_{\mathrm{stab}}^{(i)}, \\ 
    \tau_{\mathrm{spr}}^{(i)} &= \inf \{j \geq \tau_{\mathrm{stab}}: k^{*}(c(i)) \in S_{j}^{(i)}\} - \tau_{\mathrm{stab}},\\
    \tau_{\mathrm{spr}, m} &= \max_{i \in \mathcal{I}_{m}} \tau_{\mathrm{spr}}^{(i)},\\
    \tau_{\mathrm{spr}} &= \max_{m \in [M]} \tau_{\mathrm{spr}, m}, \\
    \tau_{\mathrm{rec},z} &= \inf\{j \geq \tau_{\mathrm{stab}} + \tau_{\mathrm{spr}}: \mathrm{uniquerec}(same(z),j) = \mathcal{B}\} - (\tau_{\mathrm{stab}} + \tau_{\mathrm{spr}}),\\
    \tau_{\mathrm{rec}} &= \max_{z \in [\frac{N}{r}]} \tau_{\mathrm{rec},z},\\
    \tau &= \tau_{\mathrm{stab}} + \tau_{\mathrm{spr}} + \tau_{\mathrm{rec}}.
\end{align*}

For each group of agents $z \in [\frac{N}{r}]$ learning the same bandit and are aware of that, $\tau_{\mathrm{rec},z}$ denotes the minimum number of phases it takes after the phase $\tau_{\mathrm{stab}} + \tau_{\mathrm{spr}}$ such that the $M$ most recent unique arm recommendations among all the agents in $same(z)$ is the set of $M$ best arms.
The active set updates in Algorithm \ref{alg:gosinesideinfo} ensure that the active sets of agents freeze after phase $\tau$ (like \cite{aistats2020gossip}), unlike Algorithm \ref{alg:gosine}, where the active sets are time-varying despite agents eventually identifying their respective best arms and recommending it in information pulls. We now prove this claim.

\begin{prop}
\label{prop:sideinfofreeze}
For any agent $i \in \mathcal{I}_{m}$ playing Algorithm \ref{alg:gosinesideinfo}, the following statements hold for all $j > \tau$:

(i) $k^{*}(m) \in S_{j}^{(i)}$,

(ii) $S_{j}^{(i)} = S_{\tau}^{(i)}$.

(iii) $\left\{\widetilde{S}_{\tau}^{(i)} \cup \{\mathcal{O}_{\tau}^{(i)}\}\right\} \subset \mathcal{B}$.
\end{prop}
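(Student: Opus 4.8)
The plan is to prove the three statements from a single structural fact: once every agent holds and recommends its own best arm, only the $M$ arms in $\mathcal{B}$ ever circulate through the recommendation streams, so $\mathrm{uniquerec}(same(z),\cdot)$ locks onto $\mathcal{B}$ and never changes again, which triggers the else-branch of Algorithm \ref{alg:gosinesideinfo} and freezes the active sets. First I would invoke the analog of claim (\ref{prop:freeze}) for Algorithm \ref{alg:gosinesideinfo}, which holds because (as noted in the proof sketch) the $\tau_{\mathrm{stab}},\tau_{\mathrm{spr}}$ analysis is identical to that of Theorem \ref{thm:gosinereg}: for every agent $n \in [N]$ and phase $j \geq \tau_{\mathrm{stab}} + \tau_{\mathrm{spr}}$ we have $k^{*}(c(n)) \in S_{j}^{(n)}$ and $\widehat{\mathcal{O}}_{j}^{(n)} = k^{*}(c(n))$. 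Consequently, for any $i$ and any $j \geq \tau_{\mathrm{stab}}+\tau_{\mathrm{spr}}$, the recommendation $\mathcal{O}_{j}^{(i)} = \widehat{\mathcal{O}}_{j}^{(n)}$ returned by GetRec (with $n$ the contacted agent) equals $k^{*}(c(n)) \in \mathcal{B}$, and the same holds for every $\mathcal{O}_{j}^{(ag)}$ collected from $ag \in f(i)$. Thus from phase $\tau_{\mathrm{stab}}+\tau_{\mathrm{spr}}$ onward, every pair appended to any $\mathrm{rec}(ag)$ has its arm component in $\mathcal{B}$.

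Claim (iii) is then immediate at phase $\tau$: since $\tau \geq \tau_{\mathrm{stab}}+\tau_{\mathrm{spr}}$ we have $\mathcal{O}_{\tau}^{(i)} \in \mathcal{B}$, and by construction $\widetilde{S}_{\tau}^{(i)} \subseteq \mathrm{sortrec}(i,f(i),\tau)$, whose entries are exactly $\mathrm{uniquerec}(same(z),\tau) = \mathcal{B}$ for the group $z$ with $same(z)=\{i\}\cup f(i)$ (this equality is the defining property of $\tau_{\mathrm{rec},z}$ and $\tau = \tau_{\mathrm{stab}}+\tau_{\mathrm{spr}}+\tau_{\mathrm{rec}}$). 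Hence $\widetilde{S}_{\tau}^{(i)} \cup \{\mathcal{O}_{\tau}^{(i)}\} \subseteq \mathcal{B}$. The crucial intermediate step is a stability property of $\mathrm{uniquerec}$: because only the $M = |\mathcal{B}|$ arms of $\mathcal{B}$ are appended after phase $\tau_{\mathrm{stab}}+\tau_{\mathrm{spr}}$, once all $M$ of them have appeared the $M$ most recent distinct recommendations must equal $\mathcal{B}$, and appending a further arm of $\mathcal{B}$ only reorders, never enlarges, this distinct set. Therefore $\mathrm{uniquerec}(same(z),j) = \mathcal{B}$ for every $z$ and every $j \geq \tau$.

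For claim (ii), fix $j > \tau$. By the stability just shown, $\mathrm{uniquerec}(same(z),j) = \mathrm{uniquerec}(same(z),j-1) = \mathcal{B}$, so the conditional test in Algorithm \ref{alg:gosinesideinfo} evaluates to false and the else-branch sets $S_{j+1}^{(i)} = S_{j}^{(i)}$; iterating from the first update after $\tau$ shows the active set is constant on $\{j : j > \tau\}$ (matching $S_{\tau}^{(i)}$ up to the single update that incorporates the final partition of $\mathcal{B}$). Claim (i) follows because that last non-trivial update includes the term $\{\widehat{\mathcal{O}}_{\tau}^{(i)}\}$, and $\widehat{\mathcal{O}}_{\tau}^{(i)} = k^{*}(m)$ by the analog of claim (\ref{prop:freeze}) at phase $\tau \geq \tau_{\mathrm{stab}}+\tau_{\mathrm{spr}}$; since the set then freezes, $k^{*}(m) \in S_{j}^{(i)}$ for all $j > \tau$.

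The main obstacle is the stability property of $\mathrm{uniquerec}$ together with the bookkeeping of the if/else update: one must argue both that the pooled recommendation stream contains no arm outside $\mathcal{B}$ after stabilization, and that the ``$M$ most recent unique'' operation, once equal to $\mathcal{B}$, is invariant under appending further $\mathcal{B}$-valued recommendations — it is this invariance (not merely reaching $\mathcal{B}$) that makes the conditional fail and produces the freeze. The finiteness of $\tau_{\mathrm{rec}}$ itself, where the coupon-collector intuition enters, is a separate quantitative point handled when bounding $\widehat{g}_{\mathrm{rec}}$ and is not needed for the present deterministic conclusions.
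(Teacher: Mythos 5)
Your proposal is correct and follows essentially the same route as the paper's proof: invoke the freeze property (claim (\ref{prop:freeze})) to conclude that all recommendations after phase $\tau_{\mathrm{stab}}+\tau_{\mathrm{spr}}$ lie in $\mathcal{B}$, use the definition of $\tau_{\mathrm{rec},z}$ together with the invariance of $\mathrm{uniquerec}$ under further $\mathcal{B}$-valued recommendations to get $\mathrm{uniquerec}(same(z),j)=\mathcal{B}$ for all $j \geq \tau$, and then observe that the else-branch of Algorithm \ref{alg:gosinesideinfo} freezes the active sets. If anything, you are more careful than the paper: you make explicit the stability/invariance of the ``$M$ most recent unique'' operation (which the paper leaves implicit) and you correctly flag the off-by-one subtlety that the frozen set is the one produced by the final non-trivial update at phase $\tau$, a point the paper's terse proof of claim (ii) glosses over.
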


\begin{proof}
(i) follows exactly from Claim (\ref{prop:freeze}).

For proving (ii), notice that for any group of agents $same(z)$ and any phase $j \geq \tau_{\mathrm{stab}} + \tau_{\mathrm{spr}} + \tau_{\mathrm{rec},z}$, $\mathrm{uniquerec}(same(z),j) = \mathcal{B}$.
This follows from Claim (\ref{prop:freeze}) because after the phase $\tau_{\mathrm{stab}} + \tau_{\mathrm{spr}}$, agents recommend their respective best arms during information pulls and eventually, the $M$ most recent unique arm recommendations for all the agents in $same(z)$ (denoted by $\mathrm{uniquerec}(same(z),j-1)$) will be the set of $M$ best arms.
Furthermore, the active set updates in Algorithm \ref{alg:gosinesideinfo} ensure that if $\mathrm{uniquerec}(same(z),j) = \mathrm{uniquerec}(same(z),j-1)$, the active sets for all the agents in $same(z)$ remain unchanged.
Since $\tau \geq \tau_{\mathrm{stab}} + \tau_{\mathrm{spr}} + \tau_{\mathrm{rec},z}$, claim (ii) holds.

Claim (iii) follows from the observation that for any group of agents $same(z)$ and any phase $j \geq \tau_{\mathrm{stab}} + \tau_{\mathrm{spr}} + \tau_{\mathrm{rec},z}$, $\mathrm{uniquerec}(same(z),j) = \mathcal{B}$, $\widetilde{S}_{j}^{(i)} \subset \mathrm{uniquerec}(same(z),j)$ by construction and $\mathcal{O}_{j}^{(i)} \in \mathcal{B}$ after any phase $j \geq \tau_{\mathrm{stab}} + \tau_{\mathrm{spr}}$.
\end{proof}

\subsection{Intermediate Results}
Most of the intermediate results for Algorithm \ref{alg:gosinesideinfo} are similar to the results for Algorithm \ref{alg:gosine}, and are proved in a similar way.
The technical novelty in proving Theorem \ref{thm:gosinesideinforeg} is to prove that $\mathbb{E}[\tau_{\mathrm{rec}}]$ is finite.

Before decomposing the regret up to phase $\tau$ and after phase $\tau$, we define some notation. For any bandit $m \in [M]$, Let $k_{m,1}, k_{m,2}, \cdots, k_{m,K}$ denote the IDs of the arms with their arm means sorted in decreasing order, i.e., $k_{m,1} = k^{*}(m)$ and $\mu_{m,k_{m,1}} > \mu_{m,k_{m,2}} \geq \cdots \geq \mu_{m, k_{m,K}}$.

\begin{prop}
    \label{prop:regdecompsideinfo}
    The expected cumulative regret of any agent $i \in \mathcal{I}_{m}$ for all $m \in [M]$ after playing for $T$ time steps is bounded by:
    \begin{align*}
        \mathbb{E}[R_{T}^{(i)}] \leq \mathbb{E}[A_{\tau}] + \frac{\pi^2}{3}K + \sum_{k \in \{k_{m,l}\}_{l=2}^{S+\lceil\frac{M}{r}\rceil+2}}  \frac{4\alpha }{\Delta_{m, k}} \log T.
    \end{align*}
\end{prop}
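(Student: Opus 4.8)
The plan is to mirror the proof of Proposition \ref{prop:regdecomp} almost verbatim, changing only the final accounting of which active sub-optimal arms can contribute regret after the freezing phase $\tau$. First I would apply the regret decomposition $R_T^{(i)} = \sum_{k \in [K]} \Delta_{m,k}\sum_{t=1}^{T}\mathbf{1}(I_t^{(i)}=k)$ and split the inner sum at $t = A_\tau$; the plays up to $A_\tau$ contribute at most $A_\tau$ (since $\Delta_{m,k}\in(0,1)$), whose expectation is the $\mathbb{E}[A_\tau]$ term. For the tail $t > A_\tau$ I would reuse the indicators $X_{k,t}^{(i)} = \mathbf{1}(t>A_\tau,\, I_t^{(i)}=k,\, T_k^{(i)}(t-1)\le \frac{4\alpha}{\Delta_{m,k}^2}\log T)$ and $Y_{k,t}^{(i)} = \mathbf{1}(t>A_\tau,\, I_t^{(i)}=k,\, T_k^{(i)}(t-1) > \frac{4\alpha}{\Delta_{m,k}^2}\log T)$, so that $\sum_{t=A_\tau+1}^{T}\mathbf{1}(I_t^{(i)}=k) = \sum_{t=1}^{T}(X_{k,t}^{(i)}+Y_{k,t}^{(i)})$.

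The $Y$ term is handled identically to Proposition \ref{prop:regdecomp}: the standard UCB concentration (Chernoff--Hoeffding) bound for $1$-subgaussian rewards gives $\sum_{t}\mathbb{E}[Y_{k,t}^{(i)}] \le \sum_{t}2t^{2-\alpha/2} \le \frac{\pi^2}{3}$ for each fixed $k$ (using $\alpha>10$), and summing $\Delta_{m,k}\cdot\frac{\pi^2}{3}$ over $k\in[K]$ with $\Delta_{m,k}\le 1$ yields the $\frac{\pi^2}{3}K$ term. This portion is algorithm-independent and carries over unchanged.

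The only genuinely new work is the $X$ term, where I would invoke Proposition \ref{prop:sideinfofreeze} in place of Claim (\ref{prop:freeze}). By parts (i)--(iii) of that proposition, for all $j>\tau$ the active set is frozen and equal to $S_\tau^{(i)}$, which contains $k^*(m)$ and whose remaining non-sticky arms $\widetilde{S}_\tau^{(i)}\cup\{\mathcal{O}_\tau^{(i)}\}$ form a subset of $\mathcal{B}$ of size at most $\lceil M/r\rceil+1$. Consequently, for $t>A_\tau$ the event $I_t^{(i)}=k$ can occur only for $k\in S_\tau^{(i)}$, and the number of sub-optimal arms in $S_\tau^{(i)}$ is at most $|S_\tau^{(i)}|-1 \le S+\lceil M/r\rceil+1$. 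For each such sub-optimal $k$, the counting argument of Proposition \ref{prop:regdecomp} (the event $I_t^{(i)}=k$ cannot hold more than $\frac{4\alpha}{\Delta_{m,k}^2}\log T$ times without contradicting $T_k^{(i)}(t-1)\le \frac{4\alpha}{\Delta_{m,k}^2}\log T$) gives $\sum_{t}\mathbb{E}[X_{k,t}^{(i)}] \le \frac{4\alpha}{\Delta_{m,k}^2}\log T$, while arms outside $S_\tau^{(i)}$ contribute zero.

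The main obstacle, and the reason the bound is stated over the order statistics $\{k_{m,l}\}_{l=2}^{S+\lceil M/r\rceil+2}$ rather than an explicit arm set, is that we cannot control \emph{which} best arms of $\mathcal{B}$ end up in $\widetilde{S}_\tau^{(i)}\cup\{\mathcal{O}_\tau^{(i)}\}$, as this depends on the random spreading and division dynamics. I would therefore take a worst-case view: after multiplying by $\Delta_{m,k}$ the per-arm contribution is $\frac{4\alpha}{\Delta_{m,k}}\log T$, which is increasing in $1/\Delta_{m,k}$, so the sum over any collection of at most $S+\lceil M/r\rceil+1$ sub-optimal arms is maximized by the arms with the smallest gaps, i.e. the largest means, namely $k_{m,2},\ldots,k_{m,S+\lceil M/r\rceil+2}$. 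Since this deterministic worst-case bound holds for every realization of $S_\tau^{(i)}$, it survives taking expectations; combining it with the $\mathbb{E}[A_\tau]$ and $\frac{\pi^2}{3}K$ terms completes the proof.
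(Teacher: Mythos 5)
Your proof is correct and takes essentially the same approach as the paper's: the same decomposition of $R_T^{(i)}$ at time $A_\tau$, the same indicators $X_{k,t}^{(i)}, Y_{k,t}^{(i)}$ with the Chernoff--Hoeffding/UCB estimate yielding the $\frac{\pi^2}{3}K$ term, and the same appeal to Proposition \ref{prop:sideinfofreeze} to confine all post-$\tau$ plays to the frozen active set $S_\tau^{(i)}$. The only divergence is the final accounting step: the paper keeps the sticky arms explicit and worst-cases only the at most $\lceil \frac{M}{r}\rceil+1$ non-sticky arms, obtaining $\sum_{k \in \widehat{S}^{(i)} \backslash \{k^{*}(m)\}} \frac{4\alpha}{\Delta_{m,k}} \log T + \sum_{k \in \{k_{m,l}\}_{l=2}^{\lceil M/r\rceil+2}} \frac{4\alpha}{\Delta_{m,k}} \log T$ (the form actually carried into Theorem \ref{thm:gosinesideinforeg}), whereas you worst-case all $S+\lceil \frac{M}{r}\rceil+1$ sub-optimal active arms at once via the order statistics $\{k_{m,l}\}_{l=2}^{S+\lceil \frac{M}{r}\rceil+2}$ --- an equally valid pathwise bound that in fact matches the proposition's stated inequality more literally than the paper's own proof does.
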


\begin{proof}
    Fix a bandit $m \in [M]$ and an agent $i \in \mathcal{I}_{m}$. By regret decomposition principle, we know that
    \begin{align*}
        R_{T}^{(i)} &= \sum_{k \in [K]} \Delta_{m, k} \sum_{t=1}^{T} \mathbf{1}(I_{t}^{(i)}=k),\\
        &=\sum_{t=1}^{T} \sum_{k \in [K]} \Delta_{m, k} \mathbf{1}(I_{t}^{(i)}=k),\\
        &\leq A_{\tau} + \sum_{k \in [K]} \Delta_{m, k} \sum_{t=A_{\tau} +1}^{T} \mathbf{1}(I_{t}^{(i)}=k), 
    \end{align*}
    where $I_{t}^{(i)}$ is the arm played by agent $i$ at time $t$ and the last step follows from the fact that $\Delta_{m, k} \in (0, 1)$. Taking expectation on both sides, we get
    \begin{align}
    \label{eq:regdecompsideinfo}
        \mathbb{E}[R_{T}^{(i)}] &\leq \mathbb{E}[A_{\tau}] + \sum_{k \in [K]} \Delta_{m, k} \mathbb{E}\left[\sum_{t=A_{\tau} +1}^{T} \mathbf{1}(I_{t}^{(i)}=k)\right].
    \end{align}
    The first term $\mathbb{E}[A_{\tau}]$ is bounded in Proposition \ref{prop:regsideinfofreeze}. We now bound the second term. Let $X_{k,t}^{(i)} = \mathbf{1}\left(t>A_{\tau}, I_{t}^{(i)} = k, T_{k}^{(i)}(t-1) \leq \frac{4\alpha}{\Delta_{m,k}^2} \log T\right)$ and $Y_{k,t}^{(i)} = \mathbf{1}\left(t>A_{\tau}, I_{t}^{(i)} = k, T_{k}^{(i)}(t-1) > \frac{4\alpha}{\Delta_{m,k}^2} \log T\right)$. Then, the inner sum in the second term can be re-written as follows:
    \begin{align}
    \label{eq:indicatordecompsideinfo}
        \sum_{t=A_{\tau}+1}^{T} \mathbf{1}(I_{t}^{(i)} = k) &= \sum_{t=1}^{T} (X_{k,t}^{(i)} + Y_{k,t}^{(i)}). 
    \end{align}
    We first bound the sum $\sum_{t=1}^{T} \mathbb{E}[Y_{k,t}^{(i)}]$. Notice that
    \begin{align}
        \sum_{t=1}^{T} \mathbb{E}[Y_{k,t}^{(i)}] &= \sum_{t=1}^{T} \mathbb{P}\left(t>A_{\tau}, I_{t}^{(i)} = k, T_{k}^{(i)}(t-1) > \frac{4\alpha}{\Delta_{m,k}^2} \log T\right),\nonumber\\
        &\leq \sum_{t=1}^{T} \mathbb{P}\left(I_{t}^{(i)} = k, T_{k}^{(i)}(t-1) > \frac{4\alpha}{\Delta_{m,k}^2} \log T\right),\nonumber\\
        &\leq \sum_{t=1}^{T} 2t^{2-\frac{\alpha}{2}} \leq \frac{\pi^2}{3}\label{eq:indicatordecompsideinfo1},
    \end{align}
    where we substitute the classical estimate of the probability that UCB plays a sub-optimal arm using Chernoff-Hoeffding bound for $1$-subgaussian rewards and $\alpha>10$ in the last line.

    For an arm $k \in [K]$, we bound the sum $\sum_{t=1}^{T} \mathbb{E}[X_{k,t}^{(i)}]$ and complete the proof. 
    \begin{align}
    \label{eq:indicatordecompsideinfosum2}
        \sum_{t=1}^{T} X_{k,t}^{(i)} = \sum_{t=1}^{T} \mathbf{1}\left(t>A_{\tau}, I_{t}^{(i)} = k, T_{k}^{(i)}(t-1) \leq \frac{4\alpha}{\Delta_{m,k}^2} \log T\right) 
    \end{align}
    For any arm $k \in [K]$, two cases are possible:
    \begin{itemize}[leftmargin=*]
    \item if $k \in S_{\tau}^{(i)}$ and arm $k$ is one of the sub-optimal arms, the sum in equation (\ref{eq:indicatordecompsideinfosum2}) is bounded above by $\frac{4\alpha}{\Delta_{m,k}^2} \log T$, This is because the event $I_{t}^{(i)} = k$ cannot occur more than $\frac{4\alpha}{\Delta_{m,k}^2} \log T$ times, otherwise it will contradict the event $T_{k}^{(i)}(t-1) \leq \frac{4\alpha}{\Delta_{m,k}^2} \log T$.
    \item if arm $k \notin S_{\tau}^{(i)}$, the event $I_{t}^{(i)} = k$ cannot happen. Thus, the sum in equation (\ref{eq:indicatordecompsideinfosum2}) is equal to zero.
    \end{itemize}
    From the above observations, we can conclude that for all $k \in \{S_{\tau}^{(i)}\} \backslash \{k^{*}(m)\}$, 
    \begin{align}
    \label{eq:indicatordecompsideinfo2}
        \sum_{t=1}^{T} X_{k,t}^{(i)} &\leq \frac{4\alpha}{\Delta_{m,k}^2} \log T.
    \end{align}
    Therefore,
    \begin{align*}
        \sum_{k=1}^{K} \Delta_{m,k}\sum_{t=1}^{T} X_{k,t}^{(i)} &\leq \sum_{k \in S_{\tau}^{(i)}} \frac{4\alpha}{\Delta_{m,k}} \log T,\\
        &\leq \sum_{k \in \widehat{S}^{(i)} \backslash \{k^{*}(m)\}} \frac{4\alpha}{\Delta_{m,k}} \log T + \sum_{k \in \{k_{m,l}\}_{l=2}^{\lceil\frac{M}{r}\rceil+2}} \frac{4\alpha}{\Delta_{m,k}} \log T,
    \end{align*}
    because $S_{\tau}^{(i)} = \widehat{\mathcal{S}}^{(i)} \cup \{\widehat{\mathcal{O}}_{\tau}^{(i)}\} \cup \{\mathcal{O}_{\tau}^{(i)}\} \cup \{\widetilde{S}_{\tau}^{(i)}\}$, $\widehat{\mathcal{O}}_{\tau}^{(i)} = k^{*}(m)$ and we don't have any control over which arms from the set $\mathcal{B}$ are present in the set $\widetilde{S}_{\tau}^{(i)} \cup \mathcal{O}_{\tau}^{(i)}$ (Proposition \ref{prop:sideinfofreeze}), so we assume the worst case scenario and bound the regret of the arms in that set by the regret of the first $\lceil \frac{M}{r} \rceil + 2$ arms in the increasing order of their arm gaps (or equivalently, decreasing order of the arm means). By taking the expectation on both sides in the above inequality, we get
    \begin{align}
    \label{eq:indicatordecompsideinfosum3}
        \sum_{t=1}^{T} \mathbb{E}[X_{k,t}^{(i)}] &\leq \sum_{k \in \widehat{S}^{(i)} \backslash \{k^{*}(m)\}} \frac{4\alpha}{\Delta_{m,k}} \log T + \sum_{k \in \{k_{m,l}\}_{l=2}^{\lceil\frac{M}{r}\rceil+2}} \frac{4\alpha}{\Delta_{m,k}} \log T.
    \end{align}
    The proof of Proposition \ref{prop:regdecomp} is completed by substituting equations (\ref{eq:indicatordecompsideinfo1}) and (\ref{eq:indicatordecompsideinfosum3}) into the expectation of the equation (\ref{eq:indicatordecompsideinfo}), followed by substituting the bound on the expectation of the equation (\ref{eq:indicatordecompsideinfo}) into equation (\ref{eq:regdecompsideinfo}).
\end{proof}

In order to obtain an upper bound on $\mathbb{E}[A_{\tau}]$, we first show that the probability of the error event that the best arm is not recommended during information pull at the end of the phase $j$ (indicated by $\chi_{j}^{(i)}$ in equation (\ref{eq:errorevent})) decreases as the phases progress. This result is stated as a lemma below:

\begin{lemma}
\label{lem:errorprobsideinfo}
    For any agent $i \in \mathcal{I}_{m}$ for all $m \in [M]$ and phase $j \in \mathbb{N}$ such that $\frac{A_{j}-A_{j-1}}{S+\lceil \frac{M}{r} \rceil+2} \geq 1 + \frac{4\alpha}{\Delta_{m}^{2}} \log A_{j}$, we have
    \begin{align*}
        \mathbb{E}[\chi_{j}^{(i)}] \leq \frac{2}{\frac{\alpha}{2}-3} {K \choose 2+\left \lceil \frac{M}{r} \right \rceil} \left(S+\left\lceil \frac{M}{r} \right\rceil+1\right) \frac{1}{A_{j-1}^{\frac{\alpha}{2}-3}}.
    \end{align*}
\end{lemma}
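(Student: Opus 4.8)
The plan is to mirror the proof of Lemma~\ref{lem:errorprob} (equivalently, Lemma~6 of \cite{aistats2020gossip}), tracking the single structural change introduced by Algorithm~\ref{alg:gosinesideinfo}: the active set is now larger. Recall that $\chi_j^{(i)} = 1$ is the event (defined in equation~(\ref{eq:errorevent})) that $k^*(m) \in S_j^{(i)}$ yet $\widehat{\mathcal{O}}_j^{(i)} \neq k^*(m)$, i.e.\ the best arm lies in the active set but is not the most-played arm during phase $j$. First I would bound the active set size: by the update rule $S_{j+1}^{(i)} = \widehat{\mathcal{S}}^{(i)} \cup \{\widehat{\mathcal{O}}_j^{(i)}\} \cup \{\mathcal{O}_j^{(i)}\} \cup \widetilde{S}_j^{(i)}$ together with $|\widetilde{S}_j^{(i)}| \leq \lceil M/r \rceil$ (from DivideRec in Algorithm~\ref{alg:uniquerecdist}), we have $|S_j^{(i)}| \leq S + 2 + \lceil M/r\rceil$ for every phase $j$. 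This is exactly why the phase-length hypothesis carries the denominator $S + \lceil M/r\rceil + 2$ in place of the $S+2$ appearing in Lemma~\ref{lem:errorprob}.

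Next I would decompose the error event over the possible active sets and the sub-optimal arms they contain. Since the sticky set $\widehat{\mathcal{S}}^{(i)}$ is fixed, every active set is determined by its at most $2 + \lceil M/r\rceil$ non-sticky arms, which are drawn from $[K]$; hence there are at most $\binom{K}{2+\lceil M/r\rceil}$ distinct active sets to union-bound over, replacing the $\binom{K}{2}$ of Lemma~\ref{lem:errorprob}. Conditioning on a fixed active set $\mathcal{S} \ni k^*(m)$, the event $\chi_j^{(i)}=1$ forces some sub-optimal arm $k \in \mathcal{S} \setminus \{k^*(m)\}$ to be played at least as often as $k^*(m)$ during phase $j$; there are at most $|\mathcal{S}| - 1 \leq S + \lceil M/r\rceil + 1$ such arms, which is the source of the $\bigl(S + \lceil M/r\rceil + 1\bigr)$ factor.

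It then remains to bound, for a fixed active set and a fixed sub-optimal arm $k$, the probability that $k$ is played at least as many times as $k^*(m)$ in phase $j$. By pigeonhole, if $k^*(m)$ is not the most-played arm, then some sub-optimal arm is played at least $(A_j - A_{j-1})/(S + \lceil M/r\rceil + 2) \geq 1 + (4\alpha/\Delta_m^2)\log A_j$ times, where the inequality is precisely the lemma's hypothesis. This contradicts the standard UCB concentration estimate bounding the probability that a $\Delta_{m,k}$-suboptimal arm is pulled more than $(4\alpha/\Delta_{m,k}^2)\log A_j \leq (4\alpha/\Delta_m^2)\log A_j$ times; the per-step Chernoff--Hoeffding bound for $1$-subgaussian rewards, summed over $t \leq A_j$, yields the polynomial decay $A_{j-1}^{-(\alpha/2 - 3)}$ together with the constant $2/(\alpha/2 - 3)$. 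Assembling the union bound over the $\binom{K}{2+\lceil M/r\rceil}$ active sets and the $S + \lceil M/r\rceil + 1$ sub-optimal arms gives the claimed bound.

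The only steps requiring care---and the main obstacle relative to simply citing \cite{aistats2020gossip}---are twofold. First, one must confirm that the active set of Algorithm~\ref{alg:gosinesideinfo} never exceeds $S + 2 + \lceil M/r\rceil$ arms, including the early phases where $|\mathrm{uniquerec}(\cdot)| < M$ and $\widetilde{S}_j^{(i)}$ therefore has at most $\lceil |\mathrm{uniquerec}(\cdot)|/r\rceil \leq \lceil M/r\rceil$ elements. Second, one must verify that enlarging the active set leaves the per-arm UCB tail estimate otherwise untouched, so that the counting factors $\binom{K}{2+\lceil M/r\rceil}$ and $S + \lceil M/r\rceil + 1$ and the decay rate $A_{j-1}^{-(\alpha/2-3)}$ substitute cleanly for their Algorithm~\ref{alg:gosine} counterparts. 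Everything beyond these substitutions is verbatim from the proof of Lemma~\ref{lem:errorprob}.
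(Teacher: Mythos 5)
Your proposal is correct and takes essentially the same approach as the paper: the paper's own proof is a one-line citation of Lemma 6 of \cite{aistats2020gossip} (i.e.\ Lemma~\ref{lem:errorprob}), noting only that the active-set cardinality changes from $S+2$ to $S+\lceil \frac{M}{r}\rceil+2$, which is exactly the substitution you track through the counting factors $\binom{K}{2+\lceil M/r\rceil}$ and $S+\lceil \frac{M}{r}\rceil+1$ and the pigeonhole denominator. Your write-up simply makes explicit the union-bound and UCB-concentration structure that the paper's citation leaves implicit.
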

\begin{proof}
    The proof of this lemma is identical to the proof of Lemma 6 in \cite{aistats2020gossip} and Lemma \ref{lem:errorprob}, except that $|S_{j}^{(i)}| \leq S+\lceil \frac{M}{r} \rceil+2$. 
\end{proof}

\begin{prop}
\label{prop:regsideinfofreeze}
    The regret up to phase $\tau$ is bounded by
    \begin{align*}
        \mathbb{E}[A_{\tau}] \leq \lceil (j^{*})^{\beta} \rceil + \frac{N (3^{\beta}+1) 3^{\beta(\frac{\alpha}{2}-3)}}{\frac{\alpha}{2}-3} {K \choose 2+\left \lceil \frac{M}{r} \right \rceil} \left(S+\left\lceil \frac{M}{r} \right\rceil+1\right) \frac{\pi^2}{3} + \sum_{m \in [M]}\mathbb{E}[A_{3\tau_{\mathrm{spr}, m}}] + \sum_{z \in [\frac{N}{r}]}\mathbb{E}[A_{3\tau_{\mathrm{rec}, z}}],
    \end{align*}
    where $j^{*}$ is defined in Theorem \ref{thm:gosinesideinforeg}.
\end{prop}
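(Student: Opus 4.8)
The plan is to mirror the proof of Proposition \ref{prop:regfreeze}, adapting it to the three-way decomposition $\tau = \tau_{\mathrm{stab}} + \tau_{\mathrm{spr}} + \tau_{\mathrm{rec}}$. First I would write $\mathbb{E}[A_{\tau}] = \sum_{t\geq 1}\mathbb{P}(A_{\tau}\geq t)$ and use the inverse-phase map from \eqref{eq:commepochinv} to obtain $\mathbb{P}(A_{\tau}\geq t)\leq \mathbb{P}(\tau \geq A^{-1}(t))$. Since $\tau$ is now a sum of three nonnegative random times, the key combinatorial step is the threefold pigeonhole bound $\{\tau \geq s\}\subseteq \{\tau_{\mathrm{stab}}\geq s/3\}\cup\{\tau_{\mathrm{spr}}\geq s/3\}\cup\{\tau_{\mathrm{rec}}\geq s/3\}$, which replaces the twofold split (dividing by $2$) used for Algorithm \ref{alg:gosine}. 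This is exactly what turns every factor of $2$ appearing in Proposition \ref{prop:regfreeze} into a factor of $3$ here, explaining the $3^{\beta}$ and $3^{\beta(\alpha/2-3)}$ prefactors in the statement.

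After the union bound I would expand the spreading and recommendation events over their constituent indices, using $\tau_{\mathrm{spr}} = \max_{m\in[M]}\tau_{\mathrm{spr},m}$ and $\tau_{\mathrm{rec}} = \max_{z\in[N/r]}\tau_{\mathrm{rec},z}$ so that $\{\tau_{\mathrm{spr}}\geq s/3\} = \bigcup_{m\in[M]}\{\tau_{\mathrm{spr},m}\geq s/3\}$ and $\{\tau_{\mathrm{rec}}\geq s/3\} = \bigcup_{z\in[N/r]}\{\tau_{\mathrm{rec},z}\geq s/3\}$, and then apply a union bound over $m$ and $z$. To each resulting summand I would apply the same manipulation that converts $\sum_{t\geq 1}\mathbb{P}(\,\cdot\,\geq A^{-1}(t)/2)$ into $\mathbb{E}[A_{2\,\cdot}]$ in \eqref{eq:regfreeze}, now with $2$ replaced by $3$ (using the monotonicity of $A_j = \lceil j^{\beta}\rceil$ and integrality of $3\tau_{\mathrm{spr},m}$ and $3\tau_{\mathrm{rec},z}$). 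This produces the terms $\sum_{m\in[M]}\mathbb{E}[A_{3\tau_{\mathrm{spr},m}}]$ and $\sum_{z\in[N/r]}\mathbb{E}[A_{3\tau_{\mathrm{rec},z}}]$ in the statement, with $A_{j^*} = \lceil (j^*)^{\beta}\rceil$ peeled off to cover the range $t \leq A_{j^*}$.

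The remaining work is to bound $\sum_{t\geq A_{j^*}+1}\mathbb{P}(\tau_{\mathrm{stab}}\geq A^{-1}(t)/3)$, following the computation in Proposition \ref{prop:regfreeze} verbatim but invoking Lemma \ref{lem:errorprobsideinfo} in place of Lemma \ref{lem:errorprob}. A union bound over agents together with the definitions of $\tau_{\mathrm{stab}}$ and $\tau_{\mathrm{stab}}^{(i)}$ gives $\mathbb{P}(\tau_{\mathrm{stab}}\geq x) \leq \sum_{i\in[N]}\sum_{l\geq x}\mathbb{E}[\chi_{l}^{(i)}]$, and Lemma \ref{lem:errorprobsideinfo} applies for $l\geq j^*/3 \geq \max_{m}j_m^*$ by the definition $j^* = 3\max\{2,\max_m j_m^*\}$. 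Swapping the order of summation, using $x\leq \lceil x\rceil < x+1$ on $A_{3l}$ and $A_{l-1}$, and the elementary bounds $(3l)^{\beta}+1 \leq (3^{\beta}+1)l^{\beta}$ and $l-1\geq l/3$ for $l\geq 2$ (valid since $j^*/3\geq 2$) reduces the sum to a convergent geometric-type series $\sum_{l\geq 2} l^{-\beta(\alpha/2-4)}$ under $\alpha>10,\ \beta>2$, which is absorbed into the $\tfrac{\pi^2}{3}$ factor. The prefactor $\tfrac{N(3^{\beta}+1)3^{\beta(\alpha/2-3)}}{\alpha/2-3}\binom{K}{2+\lceil M/r\rceil}(S+\lceil M/r\rceil+1)$ then comes directly from the constants in Lemma \ref{lem:errorprobsideinfo}, and substituting everything back yields the claimed bound.

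I expect the proof itself to be routine bookkeeping, since each piece parallels Proposition \ref{prop:regfreeze}; the genuine obstacle is deferred, namely establishing that each $\mathbb{E}[A_{3\tau_{\mathrm{rec},z}}]$ is finite and controlling its scaling (ultimately the $\widehat{g}_{\mathrm{rec}}$ term in Theorem \ref{thm:gosinesideinforeg}). That is a coupon-collector-type argument for the freezing time $\tau_{\mathrm{rec}}$, which is handled separately and constitutes the main technical novelty relative to the analysis of Algorithm \ref{alg:gosine}.
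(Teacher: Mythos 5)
Your proposal is correct and follows essentially the same route as the paper's proof: the tail-sum identity with $A^{-1}$, the threefold pigeonhole split of $\tau = \tau_{\mathrm{stab}} + \tau_{\mathrm{spr}} + \tau_{\mathrm{rec}}$, union bounds over $m$ and $z$ via the max structure, conversion of the spreading and recommendation tails into $\mathbb{E}[A_{3\tau_{\mathrm{spr},m}}]$ and $\mathbb{E}[A_{3\tau_{\mathrm{rec},z}}]$, and a verbatim repetition of the Proposition \ref{prop:regfreeze} computation for the $\tau_{\mathrm{stab}}$ term using Lemma \ref{lem:errorprobsideinfo}. You also correctly identify that bounding $\mathbb{E}[A_{3\tau_{\mathrm{rec},z}}]$ is deferred (to Proposition \ref{prop:taurecsideinfobound}), exactly as the paper does.
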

\begin{proof}
    The random variable $\tau$ has support in $\mathbb{N}$. For $\mathbb{N}$-valued random variables, we know that
    \begin{align}
        \mathbb{E[A_{\tau}]} &= \sum_{t \geq 1} \mathbb{P}(A_{\tau} \geq t),\nonumber\\
        &\stackrel{(a)}\leq \sum_{t \geq 1} \mathbb{P}(\tau \geq A^{-1}(t)),\nonumber\\
        &\leq \sum_{t \geq 1} \mathbb{P}(\tau_{\mathrm{stab}} + \tau_{\mathrm{spr}} + \tau_{\mathrm{rec}} \geq A^{-1}(t)), \nonumber \\
        &\leq \sum_{t \geq 1} \mathbb{P}\left(\tau_{\mathrm{stab}} \geq \frac{A^{-1}(t)}{3}\right) + \sum_{t \geq 1} \mathbb{P}\left( \tau_{\mathrm{spr}} \geq \frac{A^{-1}(t)}{3}\right) + \sum_{t \geq 1} \mathbb{P}\left( \tau_{\mathrm{rec}} \geq \frac{A^{-1}(t)}{3}\right), \nonumber\\
        &= \sum_{t \geq 1} \mathbb{P}\left(\tau_{\mathrm{stab}} \geq \frac{A^{-1}(t)}{3}\right) + \sum_{t \geq 1} \mathbb{P}\left(\bigcup_{m=1}^{M}\left(\tau_{\mathrm{spr}, m} \geq \frac{A^{-1}(t)}{3}\right)\right) + \sum_{t \geq 1} \mathbb{P}\left(\bigcup_{z \in [\frac{N}{r}]}\left(\tau_{\mathrm{rec}, z} \geq \frac{A^{-1}(t)}{3}\right)\right),\nonumber \\
        &\leq \sum_{t \geq 1} \mathbb{P}\left(\tau_{\mathrm{stab}} \geq \frac{A^{-1}(t)}{3}\right) + \sum_{m \in [M]}\sum_{t \geq 1} \mathbb{P}\left(\tau_{\mathrm{spr}, m} \geq \frac{A^{-1}(t)}{3}\right) + \sum_{z \in [\frac{N}{r}]}\sum_{t \geq 1} \mathbb{P}\left(\tau_{\mathrm{rec}, m} \geq \frac{A^{-1}(t)}{3}\right), \nonumber\\
        &\leq A_{j^{*}} + \sum_{t \geq A_{j^{*}} + 1} \mathbb{P}\left(\tau_{\mathrm{stab}} \geq \frac{A^{-1}(t)}{3}\right) + \sum_{m \in [M]}\mathbb{E}[A_{3\tau_{\mathrm{spr}, m}}] + \sum_{z \in [\frac{N}{r}]}\mathbb{E}[A_{3\tau_{\mathrm{rec}, z}}], \label{eq:regsideinfofreeze}
    \end{align}
    where we use the definition of $A^{-1}(.)$ defined in equation (\ref{eq:commepochinv}) in step $(a)$. 
   $\mathbb{E}[A_{3\tau_{\mathrm{spr}, m}}]$ is bounded for all $m \in [M]$ in the same way as for Algorithm \ref{alg:gosine} and is stated in Proposition \ref{prop:sprtimesideinfobound}. We bound $\mathbb{E}[A_{3\tau_{\mathrm{rec}, z}}]$ for each $z \in [\frac{N}{r}]$ in Proposition \ref{prop:taurecsideinfobound}.
    
    The process for bounding the sum $\sum_{t \geq A_{j^{*}} + 1} \mathbb{P}\left(\tau_{\mathrm{stab}} \geq \frac{A^{-1}(t)}{3}\right)$ is identical to bounding the sum $\sum_{t \geq A_{\tau^{*}} + 1} \mathbb{P}\left(\tau_{\mathrm{stab}} \geq \frac{A^{-1}(t)}{2}\right)$ for Algorithm \ref{alg:gosine} in Proposition \ref{prop:regfreeze}. 
\end{proof}

\begin{prop}
    \label{prop:tausideinfobound}
    $j_{m}^{*}$ defined in Theorem \ref{thm:gosinesideinforeg} is bounded by
    \begin{equation*}
        j_{m}^{*} \leq 2+\left(\frac{1}{\beta}+\left(\frac{1}{\beta}+\frac{8\alpha}{\Delta_{m}^{2}}\right)\left(S+2+\left\lceil \frac{M}{r} \right\rceil\right)\right)^{\frac{1}{\beta-2}}.
    \end{equation*}
\end{prop}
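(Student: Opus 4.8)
The plan is to observe that Proposition \ref{prop:tausideinfobound} is the exact analogue of Proposition \ref{prop:taubound}: the only change in the defining set for $j_m^*$ relative to $\tau_m^*$ is that the active-set size $S+2$ (for Algorithm \ref{alg:gosine}) is replaced by $S+2+\lceil M/r\rceil$ (the maximum active-set size for Algorithm \ref{alg:gosinesideinfo}). Consequently I would reproduce the monotone-threshold argument of Proposition \ref{prop:taubound} verbatim, carrying $S+2+\lceil M/r\rceil$ wherever $S+2$ appeared. Concretely, it suffices to show that every $j \geq 2 + (\frac1\beta + (\frac1\beta + \frac{8\alpha}{\Delta_m^2})(S+2+\lceil M/r\rceil))^{1/(\beta-2)}$ satisfies the inequality $\frac{A_j - A_{j-1}}{S+2+\lceil M/r\rceil} \geq 1 + \frac{4\alpha}{\Delta_m^2}\log A_j$ defining membership in the set whose infimum is $j_m^*$; since $j_m^*$ is that infimum, this yields the claimed upper bound.

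For the right-hand side, I would first use $A_j = \lceil j^\beta\rceil < j^\beta + 1 \leq 2 j^\beta$ together with $\log 2 \leq \log j$ (valid for $j \geq 2$) to obtain $1 + \frac{4\alpha}{\Delta_m^2}\log A_j \leq 1 + \frac{8\alpha\beta}{\Delta_m^2}\log j$. Multiplying by $S+2+\lceil M/r\rceil$, adding $1$, and applying $\log j \leq j-1$, the whole expression collapses to $\beta(j-1)\bigl(\frac1\beta + (\frac1\beta + \frac{8\alpha}{\Delta_m^2})(S+2+\lceil M/r\rceil)\bigr)$, exactly as in the proof of Proposition \ref{prop:taubound}. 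The assumed lower bound on $j$ gives $(j-2)^{\beta-2} \geq \frac1\beta + (\frac1\beta + \frac{8\alpha}{\Delta_m^2})(S+2+\lceil M/r\rceil)$, so this is at most $\beta(j-1)(j-2)^{\beta-2} \leq \beta(j-1)^{\beta-1}$.

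For the left-hand side (the numerator $A_j - A_{j-1}$), I would use the ceiling bounds and Lagrange's Mean Value Theorem exactly as before: $A_j - A_{j-1} \geq j^\beta - (j-1)^\beta - 1 = \beta \widehat j^{\beta-1} - 1 \geq \beta(j-1)^{\beta-1} - 1$ for some $\widehat j \in (j-1,j)$. Combining the two bounds gives $1 + \frac{4\alpha}{\Delta_m^2}\log A_j \leq \frac{\beta(j-1)^{\beta-1}-1}{S+2+\lceil M/r\rceil} \leq \frac{A_j - A_{j-1}}{S+2+\lceil M/r\rceil}$, which is precisely the defining inequality, completing the argument. I do not anticipate any genuine obstacle: the proof is a line-by-line transcription of Proposition \ref{prop:taubound}, and the only point requiring care is to confirm that $\lceil M/r\rceil$ enters solely through the active-set-size term $S+2+\lceil M/r\rceil$ (the differing outer constant between $\tau^*$ and $j^*$, namely $2$ versus $3$, lives in the definitions of $\tau^*$ and $j^*$ and plays no role in bounding the per-bandit quantity $j_m^*$).
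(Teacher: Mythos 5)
Your proposal is correct and is exactly the paper's proof: the paper's own argument for Proposition \ref{prop:tausideinfobound} is a one-line statement that the proof of Proposition \ref{prop:taubound} goes through verbatim with $S+2$ replaced by $S+2+\lceil M/r\rceil$, which is precisely what you carry out (and your spelled-out steps match the paper's proof of Proposition \ref{prop:taubound} line for line). Your closing observation is also the right one — the change of active-set size is the only place $\lceil M/r\rceil$ enters, and the constant $2$ versus $3$ distinction between $\tau^*$ and $j^*$ is irrelevant to bounding $j_m^*$.
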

\begin{proof}
        The proof of Proposition \ref{prop:tausideinfobound} is identical to the proof of Proposition \ref{prop:taubound}, except that $S+2$ is replaced by $S+2+\lceil\frac{M}{r}\rceil$.
    \end{proof}

\begin{prop}
    \label{prop:sprtimesideinfobound}
    Under the conditions of Theorem \ref{thm:gosinesideinforeg}, $\mathbb{E}[A_{3\tau_{\mathrm{spr},m}}]$ scales as 
    $O\left(\left(M\left(\log \frac{N}{M}\right)^{2} \log \left(\log \frac{N}{M}\right)\right)^{\beta}\right)$, where $O(.)$ only hides the absolute constants.
\end{prop}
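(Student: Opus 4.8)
The plan is to reduce the claim to the bound already established for Theorem \ref{thm:gosinereg}, namely Proposition \ref{prop:sprtimebound}, since the mechanism by which the best arm $k^{*}(m)$ spreads is essentially unchanged between the two algorithms. First I would observe that under Algorithm \ref{alg:gosinesideinfo} the recommendation an agent issues at the end of a phase is still its most-played arm $\widehat{\mathcal{O}}_{j}^{(i)}$, and that after phase $\tau_{\mathrm{stab}}$ every agent aware of its best arm recommends it; this is Claim (\ref{prop:freeze}), which holds identically here (as noted in the proof sketch, the finiteness of $\tau_{\mathrm{stab}}$ and $\tau_{\mathrm{spr}}$ is proved exactly as for Theorem \ref{thm:gosinereg}). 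Hence the acquisition of $k^{*}(m)$ via the GetRec subroutine (Algorithm \ref{alg:armrec}) follows the same gossip dynamics as in Algorithm \ref{alg:gosine}. The only additional route by which an agent can acquire $k^{*}(m)$ is the DivideRec step, and by Proposition \ref{prop:sideinfofreeze} this step never evicts $k^{*}(m)$ once present (it remains the most-played arm and is retained in every subsequent active set). Consequently the spreading of $k^{*}(m)$ under Algorithm \ref{alg:gosinesideinfo} is at least as fast as under Algorithm \ref{alg:gosine}, so $\tau_{\mathrm{spr},m}$ is again stochastically dominated by the fictitious noisy rumor spreading time $\bar{\tau}_{\mathrm{spr},m}$ of Proposition \ref{prop:noisyrumorcouple}.

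With this domination in hand, I would invoke verbatim the remaining chain of Appendix \ref{sec:rumorspread}: couple the noisy process to the noiseless process (Proposition \ref{prop:couplingnoisynoiseless}), transfer the tail bound through Proposition \ref{prop:sprtimerelate}, and substitute $d = N_m - 1$, $\phi = N_m/(2(N_m-1))$, $\eta = (N_m-1)/(N-1) = \Theta(1/M)$, and $N_m = \Theta(N/M)$. This reproduces precisely the scaling of Proposition \ref{prop:sprtimebound}, namely $\mathbb{E}[A_{2\bar{\tau}_{\mathrm{spr},m}}] = O\!\left(\left(M(\log\tfrac{N}{M})^{2}\log(\log\tfrac{N}{M})\right)^{\beta}\right)$.

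The final step handles the factor $3$ in the statement (rather than the $2$ of Proposition \ref{prop:sprtimebound}), which arises because the decomposition of $\tau$ for Algorithm \ref{alg:gosinesideinfo} now has three summands ($\tau_{\mathrm{stab}}+\tau_{\mathrm{spr}}+\tau_{\mathrm{rec}}$). Since $A_j = \lceil j^{\beta}\rceil$ with $\beta$ an absolute constant, $A_{3j} \le (3j)^{\beta}+1 \le (3/2)^{\beta}(2j)^{\beta}+1 \le (3/2)^{\beta} A_{2j} + 1$. Combining this with the domination $\tau_{\mathrm{spr},m} \preceq \bar{\tau}_{\mathrm{spr},m}$ and the monotonicity of $j \mapsto A_{3j}$ gives $\mathbb{E}[A_{3\tau_{\mathrm{spr},m}}] \le (3/2)^{\beta}\,\mathbb{E}[A_{2\bar{\tau}_{\mathrm{spr},m}}] + 1$, and the constant $(3/2)^{\beta}$ is absorbed into $O(\cdot)$, yielding the claimed bound. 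I do not anticipate a serious obstacle, as the analysis is a near-verbatim transcription of the Theorem \ref{thm:gosinereg} argument; the one point demanding care is the domination under Algorithm \ref{alg:gosinesideinfo}, i.e.\ verifying rigorously that enlarging the active set with best arms supplied by DivideRec neither removes $k^{*}(m)$ nor delays its first acquisition, so that Proposition \ref{prop:noisyrumorcouple} transfers unchanged.
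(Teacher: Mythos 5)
Your overall route is the paper's own: the paper's entire proof of this proposition is the single sentence that it is identical to the bound on $\mathbb{E}[A_{2\tau_{\mathrm{spr},m}}]$ for Algorithm \ref{alg:gosine}, i.e., precisely the chain Proposition \ref{prop:noisyrumorcouple} $\rightarrow$ Proposition \ref{prop:couplingnoisynoiseless} $\rightarrow$ Proposition \ref{prop:sprtimerelate} $\rightarrow$ Proposition \ref{prop:sprtimebound} that you invoke, and your explicit treatment of the factor $3$ versus $2$ (via $A_{3j} \leq (3/2)^{\beta} A_{2j} + 1$ and monotonicity of $j \mapsto A_{3j}$) is a harmless refinement of a detail the paper silently absorbs into the $O(\cdot)$.

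However, the one step you flagged as ``demanding care'' is a genuine gap, and your resolution of it is incorrect. In Algorithm \ref{alg:gosinesideinfo} the received recommendation $\mathcal{O}_{j}^{(i)}$ enters $S_{j+1}^{(i)}$ \emph{only} in the branch where $\mathrm{uniquerec}(i,f(i),j) \neq \mathrm{uniquerec}(i,f(i),j-1)$; in the else-branch the algorithm sets $S_{j+1}^{(i)} \leftarrow S_{j}^{(i)}$ and the recommendation is discarded. Hence the property on which Proposition \ref{prop:noisyrumorcouple} rests --- after $\tau_{\mathrm{stab}}$, contacting an agent of $\mathcal{I}_m$ whose active set contains $k^{*}(m)$ makes the contacting agent's next active set contain $k^{*}(m)$ --- is simply false for Algorithm \ref{alg:gosinesideinfo}, so the spreading is \emph{not} ``at least as fast'' as in Algorithm \ref{alg:gosine} by inspection. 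Concretely: if $k^{*}(m)$ entered the group's $\mathrm{uniquerec}$ at an earlier phase through a \emph{different} member of $\{i\} \cup f(i)$, and DivideRec assigned $k^{*}(m)$ to that member rather than to $i$, then $i$ lacks $k^{*}(m)$ even though $k^{*}(m) \in \mathrm{uniquerec}$; if now at phase $j$ every recommendation received by the group lies inside the current $\mathrm{uniquerec}$ (which is exactly the situation once the arms in circulation are best arms of the various bandits), then $\mathrm{uniquerec}$ does not change and $i$ throws away the copy of $k^{*}(m)$ it just received --- and nothing prevents this from recurring at every later phase. Your appeal to Proposition \ref{prop:sideinfofreeze} establishes only persistence ($k^{*}(m)$ is never evicted once present), not acquisition, so it does not repair the coupling. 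A complete argument must confront the conditional update, e.g., by redefining awareness at the level of the group $\{i\}\cup f(i)$ (when $k^{*}(m)$ first enters $\mathrm{uniquerec}$ the update \emph{does} fire, so the receiving member acquires and, post-$\tau_{\mathrm{stab}}$, keeps and recommends it) and then separately showing that every individual member eventually acquires $k^{*}(m)$, a step that interacts with the $\tau_{\mathrm{rec}}$ analysis. To be fair, the paper's one-sentence proof does not address any of this either; but your write-up asserts the false transfer explicitly, and it is exactly the point where ``identical to Algorithm \ref{alg:gosine}'' breaks down.
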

\begin{proof}
    The proof is identical to the proof of $\mathbb{E}[A_{2\tau_{\mathrm{spr},m}}]$ for Algorithm \ref{alg:gosine}.
\end{proof}

\begin{prop}
    \label{prop:taurecsideinfobound}
    For each $z \in [\frac{N}{r}]$, $\mathbb{E}[A_{3\tau_{\mathrm{rec},z}}]$ is bounded by
    \begin{align*}
        \mathbb{E}[A_{3\tau_{\mathrm{rec},z}}] \leq \lceil (3M)^{\beta} \rceil + 2 \left(\frac{3}{\left(\frac{c_1}{M}-\frac{1}{N}\right)r}\right)^{\beta} \frac{M}{\left(1-\frac{c_1}{M}\right)^{r}} \Gamma(\beta+1),
    \end{align*}
    where $\Gamma(\alpha) = \int_{t=0}^{\infty} t^{\alpha-1} e^{-t}\; \mathrm{d}t$ for any $\alpha > 0$ denotes the Gamma function.
\end{prop}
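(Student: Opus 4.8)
The plan is to recognize $\tau_{\mathrm{rec},z}$ as a coupon-collector time and then convert a geometric-type tail bound into the stated bound on $\mathbb{E}[A_{3\tau_{\mathrm{rec},z}}]$. \emph{Reduction to coupon collection.} First I would invoke Proposition \ref{prop:sideinfofreeze} (via Claim (\ref{prop:freeze})): for every phase $j > \tau_{\mathrm{stab}}+\tau_{\mathrm{spr}}$, each recommendation $\mathcal{O}_{j}^{(ag)}$ received by an agent $ag \in same(z)$ equals the best arm $k^{*}(c(n))$ of whichever agent $n$ it contacted, hence lies in $\mathcal{B}$. Since $|\mathcal{B}| = M$ and every post-stabilization recommendation lies in $\mathcal{B}$, the $M$ most recent unique recommendations of the group equal $\mathcal{B}$ precisely once each of the $M$ best arms has been received at least once, strictly after phase $\tau_{\mathrm{stab}}+\tau_{\mathrm{spr}}$, by some agent in $same(z)$ (any stale non-best arm is then pushed out of the ``most recent'' window). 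Thus $\tau_{\mathrm{rec},z}$ is the number of phases needed for the $r$ agents of $same(z)$ to collectively collect all $M$ coupons $\{k^{*}(m')\}_{m'\in[M]}$, with $r$ independent draws per phase.

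\emph{Tail bound via coupon probabilities.} Next I would lower bound the probability that a single draw yields a fixed coupon $k^{*}(m')$. An agent $i \in same(z)$ learning bandit $m_z$ contacts a uniform $n \in [N]\setminus\{i\}$ and receives $k^{*}(m')$ with probability $(N_{m'}-\mathbf{1}(m'=m_z))/(N-1)$; using $N_{m'} \geq c_1 N/M$, this is at least $p := c_1/M - 1/N$, the worst case being $m'=m_z$ (which matches the factor $\tfrac{c_1}{M}-\tfrac1N$ in the statement). Because the $r$ agents contact independent targets and phases are independent, a fixed coupon is missed over $j$ phases with probability at most $(1-p)^{rj}$, so a union bound over the $M$ coupons gives $\mathbb{P}(\tau_{\mathrm{rec},z} > j) \leq M(1-p)^{rj}$.

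\emph{From tail to $\mathbb{E}[A_{3\tau_{\mathrm{rec},z}}]$.} I would then write $\mathbb{E}[A_{3\tau_{\mathrm{rec},z}}] = \sum_{j \geq 1}(A_{3j}-A_{3(j-1)})\,\mathbb{P}(\tau_{\mathrm{rec},z}\geq j)$ with $A_j = \lceil j^\beta\rceil$ and split the sum at $j = M$. For $j \leq M$ I bound $\mathbb{P}(\tau_{\mathrm{rec},z}\ge j)\le 1$ and telescope to obtain $A_{3M} = \lceil (3M)^\beta\rceil$, the first term. For $j > M$ I use $\mathbb{P}(\tau_{\mathrm{rec},z}\ge j) \le M(1-p)^{r(j-1)}$, the mean-value estimate $A_{3j}-A_{3(j-1)} \le (3j)^\beta - (3(j-1))^\beta + 1 \le 2\cdot 3^\beta \beta\, j^{\beta-1}$, and the loosening $(1-p)^{-r} \leq (1-c_1/M)^{-r}$ (valid since $1-p > 1-c_1/M$), reducing matters to $\sum_{j> M} j^{\beta-1}(1-p)^{rj}$. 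Comparing this with $\int_0^\infty t^{\beta-1}e^{-prt}\,dt = \Gamma(\beta)/(pr)^\beta$ (via $(1-p)^{rj}\le e^{-prj}$) and using $\beta\,\Gamma(\beta)=\Gamma(\beta+1)$ produces the second term $2\,(3/(pr))^\beta\,M(1-c_1/M)^{-r}\,\Gamma(\beta+1)$, yielding the claimed bound.

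\emph{Main obstacle.} The delicate step is the last one: the target carries $(pr)^{-\beta}$, not $(pr)^{-(\beta+1)}$, so I cannot bound $\sum_{j}j^{\beta}\,\mathbb{P}(\tau_{\mathrm{rec},z}=j)$ directly; I must pass through the difference representation so the polynomial degree drops to $\beta-1$ before the integral comparison. The comparison itself requires care because $t^{\beta-1}e^{-prt}$ is not monotone (its peak at $(\beta-1)/(pr)$ may exceed $M$), so the split point and the constant factor of $2$ must be chosen to absorb the peak contribution, and the substitution $u=prt$ must be tracked to land exactly on $2\,(3/((c_1/M-1/N)r))^\beta\,M(1-c_1/M)^{-r}\,\Gamma(\beta+1)$.
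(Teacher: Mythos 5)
Your proposal follows the same skeleton as the paper's proof: post-stabilization, $\tau_{\mathrm{rec},z}$ is a coupon-collector time over the $M$ best arms with $r$ independent uniform contacts per phase; a union bound over the coupons gives a geometric tail; and the tail is converted into a bound on $\mathbb{E}[A_{3\tau_{\mathrm{rec},z}}]$ by splitting at $M$ and comparing with a Gamma integral. Two differences in execution are worth recording. First, you use the single worst-case contact probability $p = \frac{c_1}{M}-\frac{1}{N}$ and then deliberately loosen $(1-p)^{-r}\leq(1-\frac{c_1}{M})^{-r}$, whereas the paper keeps the own-cluster miss probability $1-\frac{c_1}{M}+\frac{1}{N}$ and the other-cluster miss probability $1-\frac{c_1}{M}$ separate and merges the worst factors only in its last display; both routes land on exactly the stated constant, and yours is marginally cleaner. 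Second, and substantively: your passage through the telescoping identity $\mathbb{E}[A_{3\tau}]=\sum_{j\geq1}(A_{3j}-A_{3(j-1)})\,\mathbb{P}(\tau\geq j)$ with the mean-value bound $A_{3j}-A_{3(j-1)}\leq 2\cdot 3^{\beta}\beta j^{\beta-1}$ is what genuinely produces the $(pr)^{-\beta}$ scaling via $\beta\Gamma(\beta)=\Gamma(\beta+1)$. The paper instead bounds the increment by the full term $A_{3(M+j)}$ (a degree-$\beta$ summand), whose integral is $\Gamma(\beta+1)/(\epsilon r)^{\beta+1}$, and its change-of-variables step then silently drops a factor of $\frac{1}{\epsilon r}$ to arrive at $\left(\frac{3}{\epsilon r}\right)^{\beta}\Gamma(\beta+1)$ — a slip that is not benign when $\epsilon r<1$. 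So your degree-reduction step is, in effect, a repair of the paper's conversion, not just a stylistic variant. The one loose end you flag but do not close is the comparison $\sum_{j>M}j^{\beta-1}e^{-prj}\leq\int_{0}^{\infty}t^{\beta-1}e^{-prt}\,\mathrm{d}t$: this is immediate only when the peak $(\beta-1)/(pr)$ of the summand lies below $M$, so that the summand is decreasing over the summation range; otherwise one incurs an additive $\max_{t}\,t^{\beta-1}e^{-prt}$ term, and hitting the proposition's exact constant requires either verifying that peak condition or tolerating a slightly larger absolute constant. The paper's proof has precisely the same unaddressed issue (its summand $l^{\beta}e^{-\epsilon r l}$ is likewise unimodal), so this caveat does not put your argument behind the paper's own level of rigor.
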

\begin{proof}
Fix a $z \in [\frac{N}{r}]$. Then, we know from the definition of $same(z)$ that there exists an $i \in \mathcal{I}_{m}$ such that $same(z) = \{i, f(i)\} = \{\{i\} \cup f(i)\}$ for some $m \in [M]$.

Let $E_{m}$ denote the event that an agent $i \in [N]$ doesn't contact an agent from $\mathcal{I}_{m}$ during an information pull at the end of a phase.
Then, from the choice of gossip matrix $G$ in Theorem \ref{thm:gosinesideinforeg},
\begin{gather*}
    \mathbb{P}(E_{m}) = 
    \begin{cases}
    1-\frac{N_m -1}{N-1} &\text{if } c(i)=m,\\
    1-\frac{N_m}{N-1} &\text{otherwise}.
    \end{cases}
\end{gather*}

Furthermore,
\begin{gather}
\label{eq:contactprob}
    \mathbb{P}(E_{m}) \leq 
    \begin{cases}
    1-\frac{c_1}{M} + \frac{1}{N} &\text{if } c(i)=m,\\
    1-\frac{c_1}{M} &\text{otherwise},
    \end{cases}
\end{gather}
by assumption on $N_m$ for all $m \in [M]$.

For some $m \in [M]$ and $i \in \mathcal{I}_{m}$, $same(z) = \{i, f(i)\} = \{\{i\} \cup f(i)\}$ for some $z \in [\frac{N}{r}]$. 
We also know that in $\tau_{\mathrm{rec},z}$ steps, agents in $same(z)$ will receive a total of $r\tau_{\mathrm{rec},z}$ arm recommendations, which for each agent $i \in same(z)$ are i.i.d. uniform$\{[N] \backslash \{i\}\}$ and independent across all $i \in same(z)$.
By the definition of $\tau_{\mathrm{rec},z}$, $(\tau_{\mathrm{rec},z} > x)$ denotes the event that agents in the set $same(z)$ don't have the $M$ most recent unique arm recommendations equal to the set of $M$ best arms in $x$ steps (denoted by $\mathcal{B}$).
Therefore,
\begin{align}
    \mathbb{P}(\tau_{\mathrm{rec},z} > x) &= \mathbb{P}\left(\cup_{m^{'} \in [M]} (E_{m^{'}} \text{ occurs } rx \text{ times})\right),\nonumber\\
    &\leq \sum_{m^{'} \in [M]} \mathbb{P}\left(E_{m^{'}} \text{ occurs } rx \text{ times}\right), \nonumber\\
    &\leq \left(\sum_{m^{'} \neq m} \left(1-\frac{c_1}{M}\right)^{rx}\right) + \left(1-\frac{c_1}{M}+\frac{1}{N}\right)^{rx}, \nonumber\\
    &\leq (M-1)\left(1-\frac{c_1}{M}\right)^{rx} + \left(1-\frac{c_1}{M}+\frac{1}{N}\right)^{rx}, \label{eq:taurectailprob}
\end{align}
where the last two steps follow from equation (\ref{eq:contactprob}) and the fact that the information pulls are independent across agents and phases.

We are now ready to bound $\mathbb{E}[A_{3\tau_{\mathrm{rec},z}}]$. Given that $A_{3\tau_{\mathrm{rec},z}}$ is a $\mathbb{N}$-valued random variable, we have
\begin{align}
    \mathbb{E}[A_{3\tau_{\mathrm{rec},z}}] &=\sum_{t \geq 1}\mathbb{P}(A_{3\tau_{\mathrm{rec},z}} \geq t), \nonumber\\
    &\leq A_{3M} + \sum _{t \geq A_{3M}+1}\mathbb{P}(A_{3\tau_{\mathrm{rec},z}} \geq t), \nonumber\\
    &\stackrel{(a)}\leq A_{3M} + \sum _{j \geq 1}\mathbb{P}(A_{3\tau_{\mathrm{rec},z}} \geq A_{3(M+j-1)}) (A_{3(M+j)}-A_{3(M+j-1)}), \nonumber\\
    &\stackrel{(b)}\leq A_{3M} + \sum _{j \geq 1}\mathbb{P}(\tau_{\mathrm{rec},z} \geq M+j-1) A_{3(M+j)}, \nonumber\\
    &\stackrel{(c)}\leq A_{3M} + (M-1)\sum _{j \geq 1}\left(1-\frac{c_1}{M}\right)^{r(M+j-1)} A_{3(M+j)} + \sum _{j \geq 1}\left(1-\frac{c_1}{M}+\frac{1}{N}\right)^{r(M+j-1)} A_{3(M+j)}, \label{eq:taurecboundint}
\end{align}
where step $(a)$ follows from the fact that for a random variable $X$, $\mathbb{P}(X \geq x)$ is non-increasing in $x$, step $(b)$ follows from the definition of $A^{-1}(.)$ in equation (\ref{eq:commepochinv}) and we subsitute equation (\ref{eq:taurectailprob}) in step $(c)$. 
We bound each of the two sums in equation (\ref{eq:taurecboundint}) to complete the proof.
For any $\epsilon \in (0,1)$ and $A_j = \lceil j^\beta \rceil$,
\begin{align*}
    \sum_{j \geq 1} (1-\epsilon)^{r(M+j-1)} A_{3(M+j)}&= \sum_{j \geq 1} (1-\epsilon)^{r(M+j-1)} \lceil (3(M+j))^{\beta} \rceil, \\
    &\stackrel{(a)}\leq 2 (3^{\beta}) \sum_{j \geq 1} (1-\epsilon)^{r(M+j-1)}  (M+j)^{\beta}, \\
    &= 2 (3^{\beta}) \sum_{l \geq M+1} (1-\epsilon)^{r(l-1)}  l^{\beta}, \\
    &\leq \frac{2 (3^{\beta})}{(1-\epsilon)^{r}} \sum_{l \geq M+1} e^{-\epsilon rl}  l^{\beta}, \\
    &\leq \frac{2 (3^{\beta})}{(1-\epsilon)^{r}} \int_{0}^{\infty} e^{-\epsilon ry}  y^{\beta}\;\mathrm{d}y, \\
    &\stackrel{(b)}= \frac{2}{(1-\epsilon)^{r}} \left(\frac{3}{\epsilon r}\right)^{\beta} \int_{0}^{\infty} u^{\beta} e^{-\epsilon u} \;\mathrm{d}u,
    =\frac{2}{(1-\epsilon)^{r}} \left(\frac{3}{\epsilon r}\right)^{\beta} \Gamma(\beta+1),
\end{align*}
where we use the property that $\lceil x \rceil \leq 2x$ for all $x \geq 1$ in step $(a)$ and we perform a change of variables $u=\epsilon r y$ in step $(b)$.
Substituting the above bound in equation (\ref{eq:taurecboundint}) with $\epsilon = \frac{c_1}{M}$ and $\epsilon = \frac{c_1}{M} - \frac{1}{N}$ in each of the two sums respectively, we get
\begin{align*}
    \mathbb{E}[A_{3\tau_{\mathrm{rec},z}}] &\leq A_{3M} + \frac{2(M-1)}{\left(1-\frac{c_1}{M}\right)^{r}} \left(\frac{3}{\frac{c_1}{M} r}\right)^{\beta} \Gamma(\beta+1) + \frac{2}{\left(1-\frac{c_1}{M}+\frac{1}{N}\right)^{r}} \left(\frac{3}{\left(\frac{c_1}{M}-\frac{1}{N}\right) r}\right)^{\beta} \Gamma(\beta+1),\\
    &\leq A_{3M} + \frac{2(M-1)}{\left(1-\frac{c_1}{M}\right)^{r}} \left(\frac{3}{\left(\frac{c_1}{M}-\frac{1}{N}\right)   r}\right)^{\beta} \Gamma(\beta+1) + \frac{2}{\left(1-\frac{c_1}{M}\right)^{r}} \left(\frac{3}{\left(\frac{c_1}{M}-\frac{1}{N}\right) r}\right)^{\beta} \Gamma(\beta+1),\\
    &= \lceil (3M)^{\beta} \rceil + 2 \left(\frac{3}{\left(\frac{c_1}{M}-\frac{1}{N}\right)r}\right)^{\beta} \frac{M}{\left(1-\frac{c_1}{M}\right)^{r}} \Gamma(\beta+1),
\end{align*}
which completes the Proof of Proposition \ref{prop:taurecsideinfobound}.
\end{proof}

\subsection{Completing the proof of Theorem \ref{thm:gosinesideinforeg}}
The proof of Theorem \ref{alg:gosinesideinfo} follows from the Propositions \ref{prop:regdecompsideinfo}, \ref{prop:regsideinfofreeze}, \ref{prop:sprtimesideinfobound} and \ref{prop:taurecsideinfobound}.

\section{Proofs of Lower Bounds}
\label{pf:lowerbound}
\subsection{Proof of Theorem \ref{thm:lowerbound}}

We derive the lower bound for our model by adapting the lower bound for the setting considered in \cite{reda2022near}.
\cite{reda2022near} considers the following problem: there are $N$ agents and $K$ arms. When agent $n \in [N]$ pulls arm $k \in [K]$, it observes a noisy reward with mean $\bar{\mu}_{k,n}$. However, regret is measured with respect to a ``mixed reward'' with mean $\mu_{k,n}' = \sum_{i=1}^N w_{i,n} \bar{\mu}_{k,i}$, where $\{ w_{i,n} \}_{i=1}^N$ are (known) nonnegative weights with $\sum_{i=1}^N w_{i,n} = 1$. So the optimal arm for agent $n$ is $k_n^\star = \arg \max_{k \in [K]} \mu_{k,n}'$ and the relevant arm gaps are $\Delta_{k,n}' = \mu_{k_n^\star,n}' - \mu_{k,n}'$. 
%The communication scheme is also different than ours, but it seems that more information is exchanged than in our model, so this difference shouldn't hurt for a lower bound.
Unlike our work, \cite{reda2022near} doesn't have any constraints on the lengths of the messages exchanged in the communication rounds.
%Furthermore, in their model, agents are aware of the weight matrix consisting of the elements $\{w_{i,n}\}_{i,n} \in [0,1]^{N \times N}$.

{\bf Lower bound:} Theorem 3 of \cite{reda2022near} bounds the group regret $\mathrm{Reg}(T)$ (i.e., regret summed across agents) as follows: for uniformly efficient algorithms, assuming Gaussian rewards with unit variance,
\begin{gather}
\liminf_{T \rightarrow \infty} \frac{\mathrm{Reg}(T)}{\log T}  \geq  \min_{x \in \mathcal{X}} f(x) , \quad \text{where} \\
\mathcal{X} = \left\{ x \in \mathbb{R}_+^{K \times N} : \sum_{i : k_i^\star \neq k} \frac{ w_{i,n}^2 }{ x_{k,i} } \leq \frac{ ( \Delta_{k,n}' )^2 }{2}\ \forall\ n \in [N] , k \in [K]  \right\} , \quad f(x) = \sum_{k=1}^K \sum_{n : k_n^\star \neq k} x_{k,n} \Delta_{k,n}'\ \forall\ x \in \mathcal{X}.
\end{gather}

\subsection*{Applying to our setting and proving Theorem \ref{thm:lowerbound}}

{\bf Notation:} In our setting, $c(n) \in [M]$ denotes the bandit that agent $n \in [N]$ is learning and let $c^{-1}(m) = \{ n \in [N] : c(n) = m \}$ denote the set of agents learning the bandit $m \in [M]$. 
Notice that for any agent $n \in [N]$, $c^{-1}(c(n))$ is the set of agents learning the same bandit as agent $n$ (also, $n \in c^{-1}(c(n))$). 
%For each $m \in [M]$, let $\{\mu_{k,m} \}_{k=1}^K$ be the arm means, $\bar{k}_m^\star = \arg \max_{k \in [K]} \bar{\mu}_{k,m}$ the optimal arm, and $\bar{\Delta}_{k,m} = \bar{\mu}_{\bar{k}_m^\star,m} - \bar{\mu}_{k,m}$ the arm gaps, for the $m$-th bandit instance.

{\bf Reduction to the setting of \cite{reda2022near}:} For each agent $n \in [N]$, we can choose $\bar{\mu}_{k,n} = \mu_{c(n),k}$ and $w_{i,n} = \mathbf{1}(i \in c^{-1}(c(n))) / |c^{-1}(c(n))|$ for the arm means and weights. 
Then defining $\mu_{k,n}'$ as above, a simple calculation shows $\mu_{k,n}' = \mu_{c(n),k} = \bar{\mu}_{k,n}$, so $\Delta_{k,n}' = \Delta_{c(n),k}$. 
Note in particular that $\mu_{k,n}' = \bar{\mu}_{k,n}$ means the observed and mixed rewards are the same, as in our model, and thus, $k_{n}^{\star} = k^{*}(c(n))$.
Given that agents are aware of the weights in the setting of \cite{reda2022near}, in our model this corresponds to the case when every agent knows all the other agents learning the same bandit.
Additionally, in \cite{reda2022near}, there are no constraints on the length of the messages exchanged in the communication rounds.
Due to the absence of constraints on the lengths of the messages exchanged during communications, our model is a special case of the model in \cite{reda2022near}, and thus, their lower bound is applicable to our setting. 

{\bf Applying their lower bound:} In our special case, for any agent $n$ and arm $k \neq k_n^\star$, we have
\begin{equation} \label{eqSumOverI}
\sum_{i : k_i^\star \neq k} \frac{ w_{i,n}^2 }{ x_{k,i} } = \frac{1}{|c^{-1}(c(n))|^2} \sum_{ i \in c^{-1}(c(n)) : k_i^\star \neq k } \frac{1}{x_{k,i}} =  \frac{1}{|c^{-1}(c(n))|^2} \sum_{i \in c^{-1}(c(n)) } \frac{1}{x_{k,i}} ,
\end{equation}
where the first equality uses our choice of $w_{i,n}$ and the second holds since $k_i^\star = k_n^\star \neq k$ for each $i \in c^{-1}(c(n))$. Therefore, the constraint for such $n, k$ pairs simplifies to
\begin{equation*}
\frac{1}{|c^{-1}(c(n))|^2} \sum_{i \in c^{-1}(c(n)) } \frac{1}{x_{k,i}} \leq \frac{ ( \Delta_{k,n}' )^2 }{2} = \frac{\Delta_{c(n),k}^2}{2}  ,
\end{equation*}
which can be rearranged to obtain
\begin{equation*}
\frac{ 2 }{ |c^{-1}(c(n))|\Delta_{c(n),k}^2 } \leq \frac{ |c^{-1}(c(n))| }{ \sum_{i \in c^{-1}(c(n)) } \frac{1}{x_{k,i}} } = \text{HM} ( \{ x_{k,i} \}_{i \in c^{-1}(c(n)) } ) ,
\end{equation*}
where $\text{HM}(\{y_j\}_j)$ is the harmonic mean of $\{y_j\}_j \subset \mathbb{R}_+$. On the other hand, when $k = k_n^\star$, we have
\begin{equation*}
w_{i,n} = 0\ \forall\ i \notin c^{-1}(c(n)) , \quad k_i^\star = k_n^\star = k\ \forall\ i \in c^{-1}(c(n)) ,
\end{equation*}
so the summation in \eqref{eqSumOverI} is zero and the corresponding constraint is satisfied for any $x \in \mathbb{R}_+^{K \times N}$. Combining these observations, the constraint set in our special case simplifies to
\begin{equation*}
\mathcal{X} = \left\{ x \in \mathbb{R}_+^{K \times N} : \frac{ 2 }{ |c^{-1}(c(n))| \bar{\Delta}_{k,c(n)}^2 } \leq \text{HM} ( \{ x_{k,i} \}_{i \in c^{-1}(c(n)) } )\ \forall\ n \in [N] , k \neq k^{*}(c(n)) \right\} .
\end{equation*}
Now notice that these inequality constraints only depend on the bandit $c(n)$, not the individual agent $n$. Therefore, we further simplify the constraint set as follows:
\begin{equation*}
\mathcal{X} = \left\{ x \in \mathbb{R}_+^{K \times N} : \frac{ 2 }{ |c^{-1}(m)| \Delta_{m,k}^2 } \leq \text{HM} ( \{ x_{k,n} \}_{n \in c^{-1}(m) } )\ \forall\ m \in [M] , k \neq k^{*}(m) \right\} .
\end{equation*}
Next, consider the objective function in our special case. We rewrite it as
\begin{align}
f(x) & = \sum_{m=1}^M \sum_{k=1}^K \sum_{n=1}^N x_{k,n} \Delta_{k,n}' \mathbf{1} ( k_n^\star \neq k , c(n) = m ) = \sum_{m=1}^M \sum_{k=1}^K \sum_{n=1}^N x_{k,n} \Delta_{m,k} \mathbf{1} (k^{*}(m) \neq k , c(n) = m ) \\
& = \sum_{m=1}^M \sum_{k \neq k^{*}(m)} \Delta_{m,k} \sum_{n \in c^{-1}(m)} x_{k,n} = \sum_{m=1}^M \sum_{k \neq k^{*}(m)} \Delta_{m,k} |c^{-1}(m)| \text{AM} ( \{ x_{k,n} \}_{n \in c^{-1}(m)} ) ,
\end{align}
where $\text{AM}(\{y_j\}_j)$ denotes the arithmetic mean. Then for any $x \in \mathcal{X}$, we have
\begin{equation*}
f(x) \geq \sum_{m=1}^M \sum_{k \neq k^{*}(m)} \Delta_{m,k} |c^{-1}(m)| \text{HM} ( \{ x_{k,n} \}_{n \in c^{-1}(m)} ) \geq \sum_{m=1}^M \sum_{k \neq k^{*}(m)} \frac{2}{ \Delta_{m,k} } ,
\end{equation*}
where the first inequality is $\text{AM}(\{ y_j \}_j) \geq \text{HM}(\{ y_j \}_j)$ for any $\{ y_j \}_j \subset \mathbb{R}_+$ and the second holds since $x \in \mathcal{X}$. On the other hand, if we set $x_{k,n}^\star = 2 / ( |c^{-1}(c(n))|\Delta_{c(n),k}^2 )$, we clearly have
\begin{equation*}
\text{AM}(\{x_{k,n}^\star\}_{n \in c^{-1}(m)})  = \text{HM}(\{x_{k,n}^\star\}_{n \in c^{-1}(m)}) = 2 / ( |c^{-1}(m)| \Delta_{m,k}^2 )\ \forall\ m \in [M] .
\end{equation*}
Combined with the above, this shows that
\begin{equation*}
x^\star \in \mathcal{X} , \quad f(x^\star) = \sum_{m=1}^M \sum_{k \neq k^{*}(m)} \frac{2}{ \Delta_{m,k} } \leq f(x)\ \forall\ x \in \mathcal{X} ,
\end{equation*}
so $x^\star$ is optimal. Hence, using the lower bound from \cite{reda2022near}, we get
\begin{equation*}
\liminf_{T \rightarrow \infty} \frac{\mathrm{Reg}(T)}{ \log T } \geq \sum_{m=1}^M \sum_{k \neq k^{*}(m)} \frac{2}{\Delta_{m,k}}.
\end{equation*}
%Hence, group regret is $\Omega ( M K \log(T) / \Delta )$, so per-agent regret is $\Omega ( (MK/N) \log(T) / \Delta )$, as desired.

\subsection{Proof of Theorem \ref{thm:lowerbound2}}

The bulk of the proof involves showing that for any uniformly efficient policy,
\begin{equation} \label{eqLowerSingleAgent}
\liminf_{T \rightarrow \infty} \frac{\mathbb{E}[R_{T}^{(i)}]}{\log T} \geq 2 \Delta_m\ \forall\ i \in \cup_{m=1}^{M-1} \mathcal{I}_m .
\end{equation}
Assuming we can prove \eqref{eqLowerSingleAgent}, the first lower bound in the theorem statement follows easily, since
\begin{equation*}
\mathbb{E} [ \mathrm{Reg}(T) ] = \sum_{i=1}^N \mathbb{E}[R_{T}^{(i)}] \geq \sum_{m=1}^{M-1} \sum_{i \in \mathcal{I}_m } \mathbb{E}[R_{T}^{(i)}] ,
\end{equation*}
and the second follows from the first and the fact that, by assumption in Section \ref{sec:setup},
\begin{equation*}
2 \sum_{m=1}^{M-1} | \mathcal{I}_m | \Delta_m \geq 2 \Delta \sum_{m=1}^{M-1} | \mathcal{I}_m | = 2 \Delta ( N - | \mathcal{I}_M | ) \geq 2 \Delta N ( 1 - c_2 / M ) \geq \Delta N .
\end{equation*}
Hence, for the remainder of the proof, we fix $m \in [M-1]$ and $i \in \mathcal{I}_m$ and prove \eqref{eqLowerSingleAgent}.

Toward this end, we first reduce the real system to a fictitious system with a larger set of uniformly efficient policies. In the fictitious system, agent $i$ initially (i.e., at time $t=0$) observes the full sequence of rewards for all other agents, i.e., $\mathcal{X} := ( X_t^{(n)}(k) : t \in [T] , k \in [K] , n \in [N] \setminus \{ i \} )$, and thereafter does not communicate with other agents. Note that any policy in the real system can also be implemented in the fictitious system, since any information communicated by other agents is a function of $\mathcal{X}$. Hence, it suffices to prove \eqref{eqLowerSingleAgent} for any uniformly efficient policy in the fictitious system. 

To do so, we modify a standard lower bound approach that comprises Section 4.6, Lemma 15.1, and Theorem 16.2 of \cite{lattimore2020bandit}. For brevity, we focus on the modifications needed in our setting and refer the reader to the associated references for details.

To begin, we modify the probability measure construction from Section 4.6. First, we define a measurable space $(\Omega_T,\mathcal{F}_T)$, where $\Omega_T = ( [K] \times \mathbb{R} )^T \times \mathbb{R}^{ T \times K \times (N-1) }$, $\mathcal{F}_T = \mathcal{B}(\Omega_T)$, and $\mathcal{B}$ is the Borel $\sigma$-algebra. Next, let $\pi = ( \pi_t )_{t=1}^T$ be a uniformly efficient policy in the fictitious system. More precisely, each $\pi_t$ is a mapping from the history of arm pulls and rewards for agent $i$ (i.e., $I_1^{(i)} , X_1^{(i)}(I_1^{(i)}) , \ldots I_{t-1}^{(i)} , X_{t-1}^{(i)}(I_{t-1}^{(i)})$), along with the rewards of other agents (i.e., $\mathcal{X}$), to the distribution of the action $I_t^{(i)}$. Further, let $p_\nu$ be the density for a Gaussian random variable with mean $\nu$ and unit variance. Finally, let $\rho$ denote the counting measure and $\lambda$ any measure on $(\mathbb{R},\mathcal{B}(\mathbb{R}))$ for which all of the reward distributions  $\{ p_{\mu_{m,k}} \}_{(m,k) \in [M] \times [K]}$ are absolutely continuous with respect to $\lambda$. Then we define the probability measure
\begin{equation} \label{eqProbMeasure}
\mathbb{P}_{\mu,\mathcal{I},\pi}(B) = \int_B p_{\mu,\mathcal{I},\pi}(\omega) \left( (\rho \times \lambda)^T \times \lambda^{T \times K \times (N-1)} \right) d \omega\ \forall\ B \in \mathcal{F}_T ,
\end{equation}
where, for any $( ( j_t , x_t )_{t \in [T]} , ( x_{t,k,n} )_{t \in [T] , k \in [K] , n \in [N] \setminus \{i\}} ) \in \Omega_T$, $p_{\mu,\mathcal{I},\pi}$ is the density given by
\begin{align} \label{eqUglyDensity}
& p_{\mu,\mathcal{I},\pi} \left(  ( j_t , x_t )_{t \in [T]} , ( x_{t,k,n} )_{t \in [T] , k \in [K] , n \in [N] \setminus \{i\}} \right) \\
& \quad = \prod_{t \in [T]} \pi_t \left( j_t \middle| ( j_s, x_s )_{s \in [t-1]} , ( x_{t,k,n} )_{t \in [T] , k \in [K] , n \in [N] \setminus \{i\}} \right) p_{ \mu_{m, j_t } } ( x_t ) \prod_{ t \in [T] , k \in [K] , n \in [N] \setminus \{i\} } p_{ \mu_{c(n) , k} } ( x_{t,k,n} ) .
\end{align}

Next, we adapt the divergence decomposition from Lemma 15.1 to our setting. More precisely, we show it holds for \textit{certain} pairs of instances $(\mu,\mathcal{I})$ and $(\mu',\mathcal{I}')$. Specifically, let $\mu' = \mu$ and $\mathcal{I}' = \{ \mathcal{I}_j' \}_{j \in [M]}$, where
\begin{equation*}
\mathcal{I}'_j = \begin{cases} \mathcal{I}_j \setminus \{ i \} , & j = m \\ \mathcal{I}_j \cup \{ i \} , & j = m+1 \\ \mathcal{I}_j , & \text{otherwise} \end{cases} .
\end{equation*}
In words, the instance $(\mu',\mathcal{I}')$ is identical to $(\mu,\mathcal{I})$, except $i$ plays the $(m+1)^{\mathrm{th}}$ bandit in the former and the $m^{\mathrm{th}}$ in the latter. For simplicity, we thus write $\mathbb{P}_\mathcal{I} = \mathbb{P}_{\mu,\mathcal{I},\pi}$ and $\mathbb{P}_{\mathcal{I}'} = \mathbb{P}_{\mu',\mathcal{I}',\pi}$ for the probability measures \eqref{eqProbMeasure}, and $\mathbb{E}_\mathcal{I}$ and $\mathbb{E}_{\mathcal{I}'}$ for the associated expectations. We claim
\begin{equation} \label{eqDivergeDecomp}
D ( \mathbb{P}_{\mathcal{I}} , \mathbb{P}_{\mathcal{I}'} ) =  \sum_{k=1}^K \mathbb{E}_{\mathcal{I}} [ T_k^{(i)}(T) ] D ( p_{ \mu_{m,k}} , p_{ \mu_{m+1,k} } ) ,
\end{equation}
where $D$ denotes the KL divergence. The proof of \eqref{eqDivergeDecomp} follows the essentially the same logic as that of Lemma 15.1. The key difference is that our density \eqref{eqUglyDensity} includes the product term $\prod_{ t \in [T] , k \in [K] , n \in [N] \setminus \{i\} } p_{ \mu_{c(n) , k} } ( x_{t,k,n} )$, which arises from the reward sequences $\mathcal{X}$. However, since agents $n \neq i$ have the same reward distributions $p_{\mu_{c(n)},k}$ under both instances $(\mu,\mathcal{I})$ and $(\mu',\mathcal{I}')$, these product terms cancel in the proof, which yields \eqref{eqDivergeDecomp}.

Finally, we prove \eqref{eqLowerSingleAgent} using an approach similar to Theorem 16.2. We begin by upper bounding the summands on the right side of \eqref{eqDivergeDecomp}. For $k = k^*(m)$ (the optimal arm for $i$ in the instance $\mathcal{I}$), we have $\mu_{m,k^*(m)} = \mu_{m+1,k^*(m)}$ by assumption, which implies $D( p_{ \mu_{m,k^*(m)} } , p_{ \mu_{m+1,k^*(m)} } ) = 0$. For $k \neq k^*(m)$, we have
\begin{equation*}
D ( p_{ \mu_{m,k}} , p_{ \mu_{m+1,k} } ) = \frac{ ( \mu_{m,k} - \mu_{m+1,k} )^2 }{ 2 } \leq \frac{1}{2}  \leq \frac{\Delta_{m,k}}{ 2 \Delta_m } ,
\end{equation*}
where we used the fact that $p_\nu$ is Gaussian with mean $\nu$ and unit variance, the assumption $\mu_{m,k} \in [0,1]$, and the definition of $\Delta_m$, respectively. Combining arguments and using \eqref{eqDivergeDecomp}, we thus obtain
\begin{equation} \label{eqDivergeUpperBound}
D ( \mathbb{P}_{\mathcal{I}} , \mathbb{P}_{\mathcal{I}'} ) \leq \frac{1}{2 \Delta_m}  \sum_{k \in [K] \setminus \{k^*(m)\}}  \mathbb{E}_{\mathcal{I}} [ T_k^{(i)}(T) ] \Delta_{m,k} = \frac{ \mathbb{E}_{\mathcal{I}}  [ R_T^{(i)} ] }{ 2 \Delta_m }  .
\end{equation}
Next, we lower bound $D ( \mathbb{P}_{\mathcal{I}} , \mathbb{P}_{\mathcal{I}'} )$. Let $A = \{ T_{ k^*(m+1) }^{(i)}(T) > T/2 \}$ be the event that $i$ plays the best arm for the $(m+1)^{\mathrm{th}}$ bandit at least $T/2$ times. Then by assumption $k^*(m) \neq k^*(m+1)$, we know that
\begin{equation*}
\mathbb{E}_{\mathcal{I}} [ R_T^{(i)} ] \geq \frac{T \Delta_{ m, k^*(m+1) } }{2} \mathbb{P}_{ \mathcal{I} } ( A ) \geq \frac{ T \Delta_m }{ 2 } \mathbb{P}_{ \mathcal{I} } ( A ) \geq \frac{ T \Delta }{ 2 } \mathbb{P}_{ \mathcal{I} } ( A ) ,
\end{equation*}
and by definition of $\mathcal{I}'$ (where $i$ plays the $(m+1)^{\mathrm{th}}$ bandit), we similarly have $\mathbb{E}_{\mathcal{I}'} [ R_T^{(i)} ] \geq T \Delta \mathbb{P}_{ \mathcal{I}' } ( A^C ) / 2$. Combining these inequalities and using Theorem 14.2 of \cite{lattimore2020bandit}, we get
\begin{equation} \label{eqDivergeLowerBound}
\mathbb{E}_{\mathcal{I}} [ R_T^{(i)} ] + \mathbb{E}_{\mathcal{I}'} [ R_T^{(i)} ] \geq \frac{T \Delta}{2} \left( \mathbb{P}_{ \mathcal{I} } ( A ) + \mathbb{P}_{ \mathcal{I}' } ( A^C ) \right) \geq \frac{T \Delta}{4} \exp \left( - D ( \mathbb{P}_{\mathcal{I}} , \mathbb{P}_{\mathcal{I}'} ) \right) .
\end{equation}
Therefore, combining \eqref{eqDivergeUpperBound} and \eqref{eqDivergeLowerBound} and letting $T \rightarrow \infty$, we obtain
\begin{equation*}
\liminf_{T \rightarrow \infty} \frac{ \mathbb{E}_{\mathcal{I}} [ R_T^{(i)} ] }{ \log T } \geq 2 \Delta_m \liminf_{T \rightarrow \infty}  \frac{ D ( \mathbb{P}_{\mathcal{I}} , \mathbb{P}_{\mathcal{I}'} ) }{ \log T } \geq 2 \Delta_m \liminf_{T \rightarrow \infty}  \left( 1 - \frac{ \log ( 4 (\mathbb{E}_{\mathcal{I}} [ R_T^{(i)} ] + \mathbb{E}_{\mathcal{I}'} [ R_T^{(i)} ]) / \Delta )}{ \log T } \right)  = 2 \Delta_m ,
\end{equation*}
where the equality holds by the uniformly efficient assumption (which states that the group regret on any problem instance, and hence the individual regrets $\mathbb{E}_{\mathcal{I}} [ R_T^{(i)} ]$ and $\mathbb{E}_{\mathcal{I}'} [ R_T^{(i)} ]$, are $o(T^\gamma)$ for arbitrarily small $\gamma > 0$).
\end{appendices}

\end{document}